\documentclass{article}

\PassOptionsToPackage{numbers, compress}{natbib}



\usepackage[final]{neurips_2024}


\usepackage[utf8]{inputenc} 
\usepackage[T1]{fontenc}    
\usepackage{hyperref}       
\usepackage{url}            
\usepackage{booktabs}       
\usepackage{amsfonts}       
\usepackage{nicefrac}       
\usepackage{microtype}      
\usepackage{xcolor}         
\usepackage{footmisc}
\usepackage{graphicx}
\usepackage{subfigure}

\usepackage{amsmath}
\usepackage{amssymb}
\usepackage{mathtools}
\usepackage{amsthm}
\usepackage[capitalize,noabbrev]{cleveref}
\theoremstyle{plain}
\usepackage{amsthm}
\usepackage{thmtools}
\usepackage{hyperref}
\usepackage{sidecap}

\declaretheorem[name=Lemma,numberwithin=section]{lemma}
\usepackage{tikz}

\usepackage{wrapfig}
\usepackage{caption}
\usepackage{amsmath}
\usepackage{xspace}

\newcommand{\ourmethod}{CS-GNN}
\newcommand{\mcalX}{\mathcal{X}}
\newcommand{\mcalA}{\mathcal{A}}

\newcommand{\ldblbrace}{\{\mskip-6mu\{}
\newcommand{\rdblbrace}{\}\mskip-6mu\}}

\definecolor{Gray}{gray}{0.92}
\definecolor{LightBlue}{RGB}{173, 216, 230}
\usepackage{colortbl} 
\usepackage{multirow}
\usepackage[inline]{enumitem}
\usepackage{array}
\usepackage{amsthm}
\usepackage{thmtools, thm-restate}
\usepackage{chngcntr}
\colorlet{LightBlue}{LightBlue!40!white}

\title{A Flexible, Equivariant Framework for Subgraph GNNs via Graph Products and Graph Coarsening}

%

\author{%
  Guy Bar-Shalom\thanks{Equal contribution.}\\
  Computer Science\\
  Technion - Israel Institute of Technology\\
  \texttt{guy.b@campus.technion.ac.il} \\
  \And
  Yam Eitan$^{*}$ \\
  Electrical \& Computer Engineering  \\
  Technion - Israel Institute of Technology \\
  \texttt{yameitan1997@gmail.com} \\
  \AND
  Fabrizio Frasca \\
  Electrical \& Computer Engineering \\
  Technion - Israel Institute of Technology \\
  \texttt{fabrizio.frasca.effe@gmail.com} \\
  \And
  Haggai Maron \\
  Electrical \& Computer Engineering \\
  Technion - Israel Institute of Technology \\
  NVIDIA Research \\
  \texttt{haggaimaron@gmail.com} \\
}

\begin{document}

\maketitle

\vspace{-0.7cm}

\begin{abstract}
Subgraph GNNs enhance message-passing GNNs expressivity by representing graphs as sets of subgraphs, demonstrating impressive performance across various tasks. However, their scalability is hindered by the need to process large numbers of subgraphs. While previous approaches attempted to generate smaller subsets of subgraphs through random or learnable sampling, these methods often yielded suboptimal selections or were limited to small subset sizes, ultimately compromising their effectiveness.
This paper introduces a new Subgraph GNN framework to address these issues. 
Our approach diverges from most previous methods by associating subgraphs with node clusters rather than with individual nodes. We show that the resulting collection of subgraphs can be viewed as the product of coarsened and original graphs, unveiling a new connectivity structure on which we perform generalized message passing.

Crucially, controlling the coarsening function enables meaningful selection of any number of subgraphs. In addition, we reveal novel permutation symmetries in the resulting node feature tensor, characterize associated linear equivariant layers, and integrate them into our Subgraph GNN. We also introduce novel node marking strategies and provide a theoretical analysis of their expressive power and other key aspects of our approach.
Extensive experiments on multiple graph learning benchmarks demonstrate that our method is significantly more flexible than previous approaches, as it can seamlessly handle any number of subgraphs, while consistently outperforming baseline approaches. Our code is available at \url{https://github.com/BarSGuy/Efficient-Subgraph-GNNs}.
\end{abstract}

\section{Introduction}\label{sec:intro}
Subgraph GNNs~\cite{bevilacqua2021equivariant, frasca2022understanding, zhang2021nested, cotta2021reconstruction, papp2021dropgnn, qian2022ordered,zhang2023complete, bar-shalom2024subgraphormer} have recently emerged as a promising direction in graph neural network research, addressing the expressiveness limitations of Message Passing Neural Networks (MPNNs)~\cite{morris2019weisfeiler, xu2018powerful, morris2021weisfeiler}. In essence, a Subgraph GNN operates on a graph by transforming it into a collection of subgraphs, generated based on a specific selection policy. Examples of such policies include removing a single node from the original graph or simply marking a node without changing the graph's original connectivity~\cite{papp2022theoretical}. The model then processes these subgraphs using an equivariant architecture, aggregates the derived representations, and makes graph- or node-level predictions. The growing popularity of Subgraph GNNs stems not only from their enhanced expressive capabilities over MPNNs but also from their impressive empirical results, as notably demonstrated on well-known molecular benchmarks~\cite{zhang2023complete,frasca2022understanding,bar-shalom2024subgraphormer}.

Unfortunately, Subgraph GNNs are hindered by substantial computational costs as they necessitate message-passing operations across all subgraphs within the bag. Typically, the number of subgraphs is the number of nodes in the graph, $n$--- for bounded degree graphs, this results in a time complexity scaling quadratically ($\mathcal{O}(n^2)$), in contrast to the linear complexity of a standard MPNN. This significant computational burden makes Subgraph GNNs impractical for large graphs, hindering their applicability to important tasks and widely used datasets. To overcome this challenge, various studies have explored methodologies that process only a subset of subgraphs from the bag. These methods range from simple random sampling techniques~\cite{cotta2021reconstruction,bevilacqua2021equivariant, zhao2022from, bar-shalom2024subgraphormer} to more advanced strategies that learn to select the most relevant subset of the bag to process~\cite{bevilacqua2023efficient, kong2024mag, qian2022ordered}. However, while random sampling of subgraphs yields subpar performance, more sophisticated learnable selection strategies also have significant limitations. Primarily, they rely on \emph{training-time} discrete sampling which complicates the optimization process, as evidenced by the high number of epochs required to train them ~\cite{kong2024mag, bevilacqua2023efficient, qian2022ordered}. As a result, these methods often allow only a very small bag size, yielding only modest performance improvements compared to random sampling and standard MPNNs. 

\paragraph{Our approach.} The goal of this paper is to devise a Subgraph GNN architecture that can flexibly generate and process variable-sized bags, and deliver strong experimental results while sidestepping intricate and lengthy training protocols. Specifically, our approach aims to overcome the common limitation of restricting usage to a very small set of subgraphs.

\begin{wrapfigure}[14]{r}{0.65\textwidth}
\vspace{-0.5cm}
\centering
\includegraphics[width=0.65\textwidth]{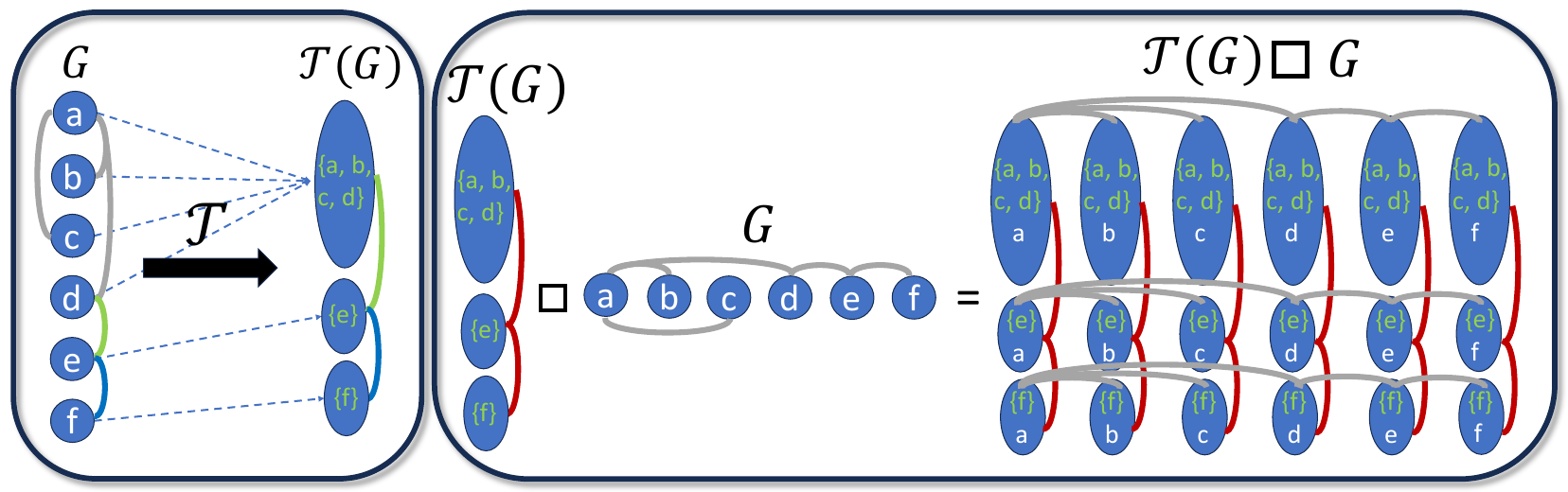}
\caption{Product graph construction. \textbf{Left:} Transforming of the graph into a coarse graph; \textbf{Right:} Cartesian product of the coarsened graph with the original graph. The vertical axis corresponds to the subgraph dimension (super-nodes), while the horizontal axis corresponds to the node dimension (nodes).}
\label{fig: transf and prod}
\end{wrapfigure}
Our proposed method builds upon and extends an observation made by~\citet{bar-shalom2024subgraphormer}, who draw an analogy between using Subgraph GNNs and performing message-passing operations over a larger ``product graph''. Specifically, it was shown that when considering the maximally expressive (node-based) Subgraph GNN suggested by~\cite{zhang2023complete}\footnote{The architecture suggested in \cite{zhang2023complete} was shown to be at least as expressive as all previously studied node-based Subgraph GNNs}, the bag of subgraphs and its update rules can be obtained by transforming a graph through the \emph{graph cartesian product} of the original graph with itself, i.e., $G \Box G$, and then processing the resulting graph using a standard MPNN. In our approach, we propose to modify the first term of the product and replace it with a \emph{coarsened} version of the original graph, denoted $\mathcal{T}(G)$, obtained by mapping nodes to \emph{super-nodes} (e.g., by applying graph clustering, see \Cref{fig: transf and prod}(left)), making the resulting product graph $\mathcal{T}(G) \Box G$ significantly smaller. This construction is illustrated in \Cref{fig: transf and prod}(right). This process effectively associates each subgraph -- a row in \Cref{fig: transf and prod}(right) -- with a set of nodes produced by the coarsening function $\mathcal{T}$. Different choices of  $\mathcal{T}$ allow for both flexible bag sizes and a simple, meaningful selection of the subgraphs.

While performing message passing on $\mathcal{T}(G) \Box G$ serves as the core update rule in our architecture, we augment our message passing operations with another set of operations derived from the symmetry structure of the resulting node feature tensor, which we call \emph{symmetry-based updates}. Specifically, our node feature tensor is indexed by pairs $(S, v)$ where $S$ is a super-node and $v$ is an original node.  Accordingly, $\mathcal{X} $ is a ${T \times n \times d}$ tensor, where $d$ is the feature dimension, and $T$ is the number of super-nodes (a constant hyper-parameter). As super-nodes are sets of nodes, $\mathcal{X}$ can also be viewed as a (very) sparse ${2^n \times n \times d}$ tensor where $2^n$ is the number of all subsets of the vertex set. Since the symmetric group $S_n$ acts naturally on this representation, we use it to develop symmetry based updates.

Interestingly, we find that this node feature tensor, $\mathcal{X}$, adheres to a specific set of symmetries, which, to the best of our knowledge, is yet unstudied in the context of machine learning: applying a permutation $\sigma \in S_n$ to the nodes in $S$ and to $v$ results in an equivalent representation of our node feature tensor. We formally define the symmetries of this object and characterize all the affine equivariant operations in this space. We incorporate these operations into our message-passing by encoding the parameter-sharing schemes \cite{ravanbakhsh2017equivariance} as additional edge features. These additional update rules significantly improve experimental results. We note that our symmetry analysis may be useful for processing bags derived from other high-order generation policies~\cite{qian2022ordered,kong2024mag} by treating tuples of nodes as sets. 

Inspired by these symmetries and traditional binary-based~\cite{bevilacqua2021equivariant} and shortest path-based~\cite{zhang2023complete} \emph{node-marking} strategies, we propose four natural marking strategies for our framework. Interestingly, unlike the full-bag scenario, they vary in expressiveness, with the shortest path-based technique being the most expressive. 

\begin{wrapfigure}[22]{r}{0.65\textwidth}
    \vspace{-0.6cm}
    \centering
\includegraphics[width=0.65\textwidth]{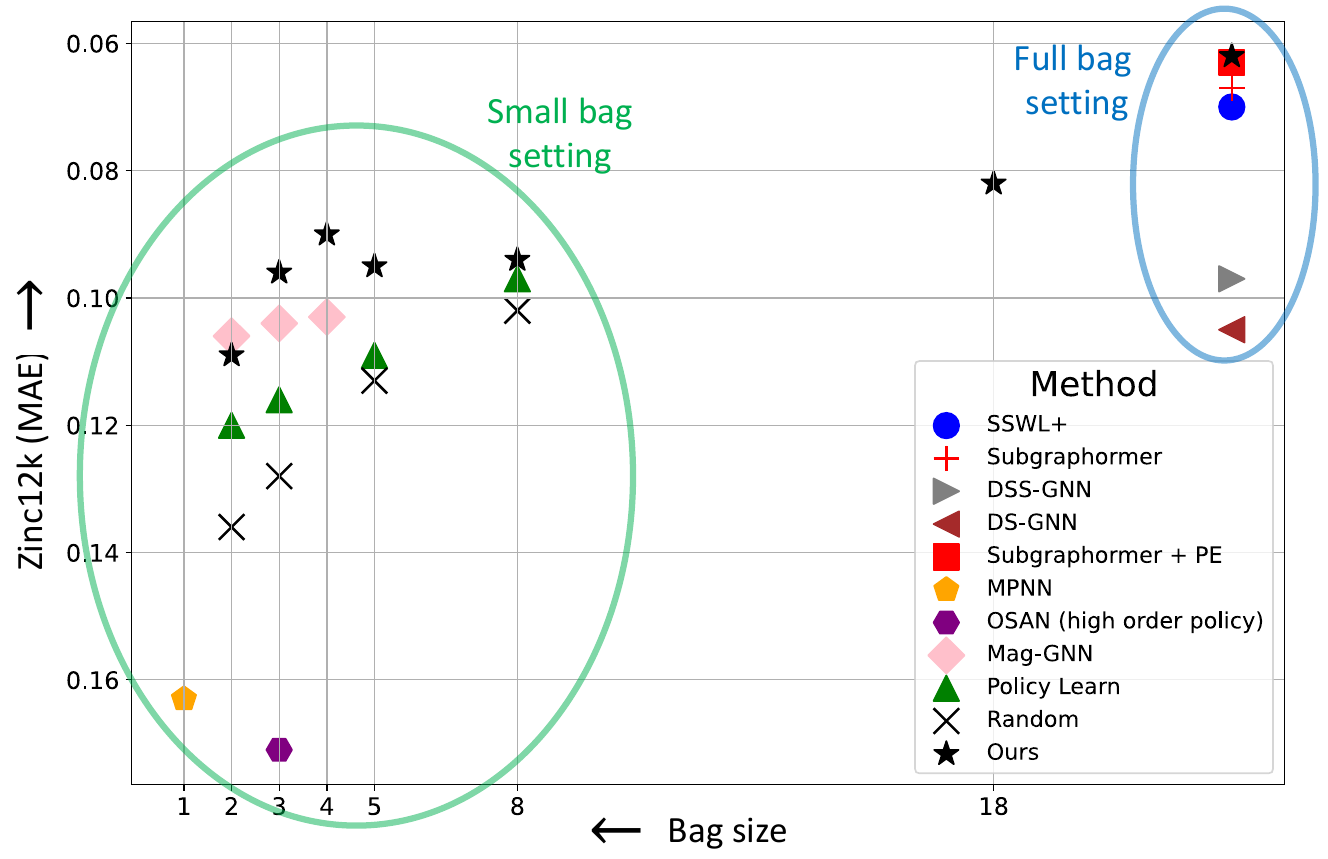}
    \caption{The performance landscape of Subgraph GNNs with varying number of subgraphs: Our method leads in the lower bag-size set, outperforming other approaches in nearly all cases. Additionally, our method matches the performance of state-of-the-art Subgraph GNNs in the full-bag setting. The full mean absolute error (MAE) scores along with standard deviations are available in \Cref{tab: zinc all results} in the appendix.}
    \label{fig: main figure}
\end{wrapfigure}

The flexibility and effectiveness of our full framework are illustrated in \Cref{fig: main figure}, depicting  detailed experimental results on the popular \textsc{Zinc-12k} dataset~\cite{sterling2015zinc}. Our method demonstrates a significant performance boost over baseline models in the \emph{small bag} setting (for which they are designed), while achieving results that compare favourably to state-of-the-art Subgraph GNNs in the \emph{full bag} setting. Additionally, we can obtain results in-between these two regimes.

\paragraph{Contributions.} The main contributions of this paper are: (1) the development of a novel, flexible Subgraph GNN framework that enables meaningful construction and processing of bags of subgraphs of any size; (2) a characterization of all affine invariant/equivariant layers defined on our node feature tensors; (3) a theoretical analysis of our framework, including the expressivity benefits of our node-marking strategy; and (4) a comprehensive experimental evaluation demonstrating the advantages of the new approach across both small and large bag sizes, achieving state-of-the-art results, often by a significant margin.

\section{Related work}\label{sec:related}


\begin{wrapfigure}[12]{r}{0.18\linewidth}
    \vspace{-0.5cm}
    \centering
    \centering
    \includegraphics[scale=0.22]{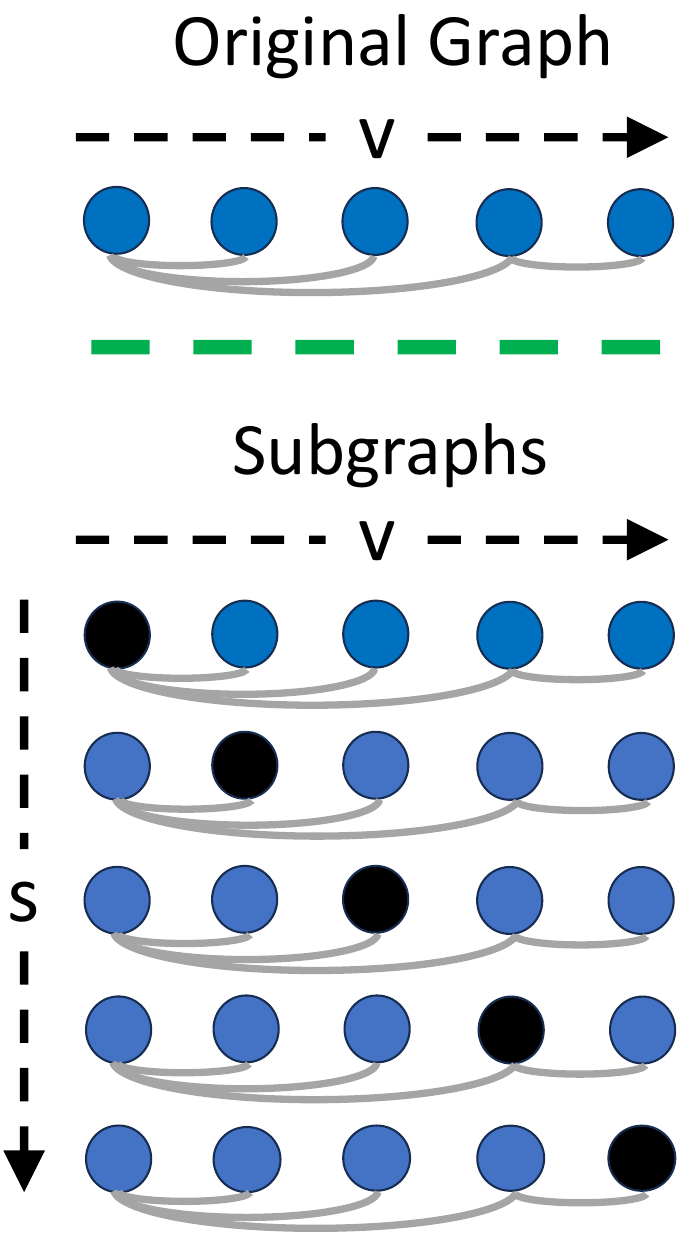} 
    \captionsetup{type=figure}
\end{wrapfigure}
\textbf{Subgraph GNNs.} 
Subgraph GNNs~\citep{zhang2021nested, cotta2021reconstruction, papp2021dropgnn, bevilacqua2021equivariant, zhao2022from, papp2022theoretical, frasca2022understanding, qian2022ordered,huang2022boosting, zhang2023complete, bar-shalom2024subgraphormer} represent a graph as a collection of subgraphs, obtained by a predefined generation policy. For example, each subgraph can be generated by marking exactly one node in the original graph (see inset~\footnote{The Figure was taken with permission from \cite{bar-shalom2024subgraphormer}}) -- an approach commonly referred to as \emph{node marking}~\cite{papp2022theoretical}; this marked node is considered the root node in its subgraph. Several recent papers focused on scaling these methods to larger graphs, starting with basic random selection of subgraphs from the bag, and extending beyond with more sophisticated techniques that aim to learn how to select subgraphs. To elaborate, \cite{bevilacqua2023efficient} introduced \emph{Policy-Learn} (PL), an approach based on two models, where the first model predicts a distribution over the nodes of the original graph, and the second model processes bags of subgraphs sampled from this distribution.
\emph{MAG-GNN}~\cite{kong2024mag} employs a similar approach utilizing Reinforcement Learning. Similarly to our approach, this method permits high-order policies by associating subgraphs with tuples rather than individual nodes, allowing for the marking of several nodes within a subgraph. However, as mentioned before, these approaches involve discrete sampling while training, making them very hard to train (1000-4000 epochs vs. $\sim$400 epochs of state-of-the-art methods~\cite{bar-shalom2024subgraphormer,zhang2023complete} on the \textsc{Zinc-12k} dataset), and limiting their usage to very small bags. Finally, we mention another high-order method, \emph{OSAN}, introduced by \cite{qian2022ordered}, which learns a distribution over tuples that represent subgraphs with multiple node markings. In contrast to these previous approaches, we suggest a simpler and more effective way to select subgraphs and also show how to leverage the resulting symmetry structure to augment our message-passing operations.

\textbf{Symmetries in graph learning.} 
Many previous works have analyzed and utilized the symmetry structure that arises from graph learning setups~\cite{maron2018invariant,maron2019provably,keriven2019universal,azizian2020expressive}. 
Specifically relevant to our paper is the work of \cite{maron2018invariant} that characterized basic equivariant linear layers for graphs, the work of \cite{albooyeh2019incidence} that characterizes equivariant maps for many other types of incidence tensors that arise in graph learning, and the  works \cite{bevilacqua2021equivariant,frasca2022understanding} that leveraged group symmetries for designing Subgraph GNNs in a principled way.

\section{Preliminaries}\label{sec: Preliminaries}
\textbf{Notation.} 
Let $\mathcal{G}$ be a family of undirected graphs, and consider a graph $ G = (V, E) $ within this family. The adjacency matrix $ A \in \mathbb{R}^{n \times n} $ defines the connectivity of the graph\footnote{Edge features are also allowed but are omitted here for simplicity\label{footnote:omitting edge features}}, while the feature matrix $ X \in \mathbb{R}^{n \times d} $ represents the node features. Here, $ V $ and $ E $ represent the sets of nodes and edges, respectively, with $ |V| = n $ indicating the number of nodes. We use the notation $ v_1 \sim_A v_2 $ to denote that $ v_1 $ and $ v_2 $ are neighboring nodes according to the adjacency $A$. Additionally, we define $[n] \coloneq \{ 1, 2, \ldots n \}$, and  $\mathcal{P}([n])$ as the power set of $[n]$. 

\textbf{Subgraph GNNs as graph products.} In a recent work, \cite{bar-shalom2024subgraphormer} demonstrated that various types of update rules used by current Subgraph GNNs can be simulated by employing the \emph{Cartesian graph product} between the original graph and another graph, and running standard message passing over that newly constructed product graph.  Formally, the cartesian product of two graphs \(G_1\) ($n_1$ nodes) and \(G_2\) ($n_2$ nodes), denoted by \(G_1 \square G_2\), forms a graph with vertex set \(V(G_1) \times V(G_2)\). Two vertices \((u_1, u_2)\) and \((v_1, v_2)\) are adjacent if either \(u_1 = v_1\) and \(u_2\) is adjacent to \(v_2\) in \(G_2\), or \(u_2 = v_2\) and \(u_1\) is adjacent to \(v_1\) in \(G_1\). We denote by $\mathcal{A} \in \mathbb{R}^{n_1 \cdot n_2 \times n_1 \cdot n_2}$ and $\mathcal{X} \in \mathbb{R}^{n_1 \cdot n_2 \times d}$ the adjacency and node feature matrices of the product graph; in general, we use calligraphic letters to denote the adjacency and feature matrices of product graphs, while capital English letters are used for those of the original graphs. In particular, for the graph cartesian product, $G_1 \square G_2$, the following holds:
\begin{equation}
\label{eq: matrix of cartesian product}
    \mathcal{A}_{G_1 \square G_2} = A_1 \otimes I + I \otimes A_2.
\end{equation}
For a detailed definition of the cartesian product of graphs, please refer to \Cref{def: Graph Cartesian Product}. As a concrete example for the analogy between Subgraph GNNs and the Cartesian product of graphs, we refer to a result by \cite{bar-shalom2024subgraphormer}, which states that the maximally expressive node-based Subgraph GNN architecture GNN-SSWL$+$ \cite{zhang2023complete}, can be simulated by an MPNN on the Cartesian product of the original graph with itself, denoted as $G \square G$. As we shall see, our framework utilizes a cartesian product of the original graph and a coarsened version of it, as illustrated in \Cref{fig: transf and prod} (right).

\textbf{Equivariance.} A function $L: U \rightarrow W$ is called equivariant if it commutes with the group action. More formally, given a group element, $g \in \mathbb{G}$, the function $L$ should satisfy $L(g \cdot v) = g \cdot L(v)$ for all $v \in U$ and $g \in \mathbb{G}$. $L$ is said to be invariant if $L(g \cdot v) = L(v)$.

\section{Coarsening-based Subgraph GNN}
\textbf{Overview.} This section introduces the \emph{Coarsening-based Subgraph GNN} (\ourmethod) framework. The main idea is to select and process subgraphs in a principled and flexible manner through the following approach: (1) coarsen the original graph via a coarsening function, $\mathcal{T}$ -- see \Cref{fig: transf and prod}(left); (2) Obtain the product graph -- \Cref{fig: transf and prod}(right) defined by the combination of two adjacencies, $\mathcal{A}_{\mathcal{T}(G)}$ (red edges), $\mathcal{A}_{G}$ (grey edges), which arise from the graph Cartesian product operation (details follow); (3) leveraging the symmetry of this product graph to develop \emph{symmetry-based} updates, described by $\mathcal{A}_{\text{Equiv}}$ (this part is not visualized in \Cref{fig: transf and prod}). The general update of our suggested layer takes the following form~\footref{footnote:omitting edge features},

\begin{align}
    \mathcal{X}^{t+1}(S, v) = f^t \Big( &\mcalX(S,v)^t, \label{eq: whole architecture} \\
    &\hspace{-1.8cm}\underbrace{\ldblbrace \mcalX(S',v')^t \rdblbrace_{(S',v') \sim_{\mathcal{A}_{G}} (S,v)}}_{\text{Original connectivity (horizontal)}}, \underbrace{\ldblbrace \mcalX(S',v')^t \rdblbrace_{(S',v') \sim_{\mathcal{A}_{\mathcal{T}(G)}} (S,v)}}_{\text{Induced connectivity (vertical)}},
    \underbrace{\ldblbrace \mcalX(S',v')^t \rdblbrace_{(S',v') \sim_{\mathcal{A}_{\text{Equiv}}} (S,v)}}_{\text{Symmetry-based updates}} \Big), \nonumber
\end{align}
where the superscript $^t$ indicates the layer index. In what follows, we further elaborate on these three steps (in \Cref{subsec: Graph Coarsening,subsec: The Product Graph,subsec: symmetry-based Updates}). 

We note that each connectivity in \Cref{eq: whole architecture} is processed using a distinct MPNN, and after stacking of those layers, we apply a pooling layer\footnote{For some of the theoretical analysis, this pooling operation is expressed as: $\rho(\mathcal{X}^\mathtt{T}) = \texttt{MLP}^\mathtt{T} \left( \sum_{S} \left( \texttt{MLP}^\mathtt{T} \big(\sum_{v=1}^n \mathcal{X}^\mathtt{T}(S,v)\big) \right) \right)$} to obtain a graph representation; that is, $\rho(\mcalX^\mathtt{T}) = \texttt{MLP}^\mathtt{T} \Big( \sum_{S} \Big( \sum_{v=1}^n  \mcalX^\mathtt{T}(S,v) \Big) \Big)$; $\mathtt{T}$ denotes the final layer. 

For more specific implementation details, we refer to \Cref{app: Extended Experimental Section}.

\subsection{Construction of the coarse product graph}
As mentioned before, a maximally expressive node-based Subgraph GNN can be realized via the Cartesian product of the original graph with itself $G \square G$. In this work, we extend this concept by allowing the left operand in the product to be the coarsened version of $G$, denoted as $\mathcal{T}(G)$, as defined next. This idea is illustrated in \Cref{fig: transf and prod}.

\label{subsec: Graph Coarsening}
\textbf{Graph coarsening.} Consider a graph \( G = (V, E) \) with \( n \) nodes and an adjacency matrix \( A \). Graph coarsening is defined by the function \( \mathcal{T}: \mathcal{G} \rightarrow \mathcal{G} \), which maps \( G \) to a new graph \( \mathcal{T}(G) = (V^{\mathcal{T}}, E^{\mathcal{T}}) \) with an adjacency matrix \( A^{\mathcal{T}} \in \mathbb{R}^{2^{n} \times 2^{n}} \) and a feature matrix \( X^{\mathcal{T}} \in \mathbb{R}^{2^n \times d} \). Here, \( V^{\mathcal{T}} \), the vertex set of the new graph represents super-nodes -- defined as subsets of $[n]$ . Additionally, we require that nodes in $V^{\mathcal{T}}$ induce a partition over the nodes of the original graph\footnote{Our method also supports the case of which it is not a partition.}. 
The connectivity $E^{\mathcal{T}}$ is extremely sparse and induced from the original graph's connectivity via the following rule:
\begin{equation}
    \label{eq: induced edges}
A^{\mathcal{T}}(S_1, S_2) = 
\begin{cases} 
1 & \text{if } \exists v \in S_1, \exists u \in S_2 \text{ s.t. } A(v,u) = 1, \\
0 & \text{otherwise},
\end{cases}
\end{equation}

To clarify, in our running example (\Cref{fig: transf and prod}), it holds that $A^{\mathcal{T}}( \{ a,b,c,d \},\{ e
\}) = 1$, while $A^{\mathcal{T}}(\{ a,b,c,d \},\{ f
\}) = 0$. For a more formal definition, refer to \Cref{def: coarsened graph}.

More specifically, our implementation of the graph coarsening function \( \mathcal{T} \) employs spectral clustering\footnote{Other graph coarsening or clustering algorithms can be readily used as well.}~\cite{von2007tutorial} to partition the graph into \( T \) clusters, which in our framework controls the size of the bag. This results in a coarsened graph with fewer nodes and edges than $G$. We highlight and stress that the space complexity of this sparse graph, $\mathcal{T}(G)$, is upper bounded by that of the original graph $G$ (we do not store $2^n$ nodes).

\paragraph{Defining the (coarse) product graph $\mathcal{T}(G) \square G$.}
\label{subsec: The Product Graph}

We define the connectivity of the product graph, see \Cref{fig: transf and prod}(right), by applying the cartesian product between the coarsened graph, $\mathcal{T}(G)$, and the original graph, $G$.
The product graph is denoted by $\mathcal{T}(G) \square G$, and is represented by the matrices 
$\mathcal{A}_{\mathcal{T}(G) \square G} \in \mathbb{R}^{(2^n \times n) \times (2^n \times n)}$ and $\mathcal{X} \in \mathbb{R}^{2^n \times n \times d}$\footnote{We note that while the node matrix of the product graph, $\mathcal{X}$, can be initialized in various ways, e.g., deep sets-based architecture~\cite{zaheer2017deep}, in our implementation we simply use the original node features, i.e., $\mathcal{X}(S,v) = X(v)$, given that $X$ is the node feature matrix of the original graph.}, where by recalling \Cref{eq: matrix of cartesian product}, we obtain,
\vspace{-0.3cm}
\begin{equation}
    \mcalA_{\mathcal{T}(G) \square G} = \overbrace{A^\mathcal{T} \otimes I}^{\triangleq \mathcal{A}_{\mathcal{T}(G)}} + \overbrace{I \otimes A}^{\triangleq \mathcal{A}_G}.
\end{equation}
The connectivity in this product graph induces the horizontal ($\mathcal{A}_G$) and vertical updates ($\mathcal{A}_{\mathcal{T}(G)}$) in \Cref{eq: whole architecture}, visualized in \Cref{fig: transf and prod}(right) via grey and red edges, respectively.

\subsection{Symmetry-based updates}
\label{subsec: symmetry-based Updates}
In the previous subsection, we used a combination of a coarsening function and the graph Cartesian product to derive the two induced connectivities $\mathcal{A}_G,\mathcal{A}_{\mathcal{T}(G)}$ of our product graph. We use these connectivities to to perform message-passing on our product graph (see \Cref{eq: whole architecture}).

Inspired by recent literature on Subgraph GNNs \cite{frasca2022understanding,bar-shalom2024subgraphormer,zhang2023complete}, which incorporates and analyzes additional non-local updates arising from various symmetries (e.g., updating a node's representation via all nodes in its subgraphs), this section aims to identify potential new updates that can be utilized over our product graph.  To that end, we study the symmetry structure of the node feature tensor in our product graph, $\mathcal{X}(S, v)$.The new updates described below will result in the third term in \Cref{eq: whole architecture}, dubbed \emph{Symmetry-based updates} ($\mathcal{A}_\text{Equiv}$). For better clarity in this derivation, we change the notation from nodes ($v$) to indices ($i$).

\subsubsection{Symmetries of our product graph}\label{subsec:symms} Since the order of nodes in the original graph \( G \) is arbitrary, each layer in our architecture must exhibit equivariance to any induced changes in the product graph. This requires maintaining equivariance to permutations of nodes in both the original graph and its transformation \(\mathcal{T}(G)\). As a result, recalling that $ \mathcal{A} \in \mathbb{R}^{(2^n \times n) \times (2^n \times n)}$ and $\mathcal{X} \in \mathbb{R}^{2^n \times n \times d}$
represent the adjacency and feature matrices of the product graph, the symmetries of the product graph are defined by an action of the symmetric group \( S_n \). Formally, a permutation $\sigma \in S_n$ acts on the adjacency and feature matrices by:
\begin{align}
\label{eq: permutation action}
(\sigma \cdot \mathcal{A}) \big(S_1,i_1, S_2, i_2\big) &= \mathcal{A} \big( \sigma^{-1}(S_1), \sigma^{-1}(i_1), \sigma^{-1}(S_2), \sigma^{-1}(i_2)\big), \\
(\sigma \cdot \mathcal{X})(S, i) &= \mathcal{X} \big(\sigma^{-1}(S), \sigma^{-1}(i)\big), \label{eq: permutation action X}
\end{align}
where we define the action of \( \sigma \in S_n \) on a set \( S = \{ i_1, i_2, \ldots, i_k \} \) of size \( k \) as: $\sigma \cdot S \coloneq \{ \sigma^{-1}(i_1), \sigma^{-1}(i_2), \ldots, \sigma^{-1}(i_k) \} \coloneq \sigma^{-1}(S)$.

\subsubsection{Derivation of linear equivariant layers for the node feature tensor} 
We now characterize the linear equivariant layers with respect to the symmetry defined above, focusing on \Cref{eq: permutation action X}. We adopt a similar notation to \cite{maron2018invariant}, and  assume for simplicity that the number of feature channels is $d=1$ (extension to multiple features is straightforward \cite{maron2018invariant}). In addition, our analysis considers the case where $V^\mathcal{T}$ encompasses all potential super-nodes formed by subsets of $[n]$ (i.e we use the sparse coarsened adjacency\footnote{This is because the action of $S_n$ is well defined over the index set $P(V[[n]])\times [n]$ but not over $V \times V^\mathcal{T}$}).

Our main tool is the characterization of linear equivariant layers for permutation symmetries as parameter-sharing schemes \cite{wood1996representation,ravanbakhsh2017equivariance,maron2018invariant}. In a nutshell, this characterization states that the parameter vectors of the biases, invariant layers, and equivariant layers can be expressed as a learned weighted sum of basis tensors, where the basis tensors are indicators of the orbits induced by the group action on the respective index spaces. We focus here on presenting the final results and summarize them in \Cref{prop: informal BasisInvunformal} at the end of this subsection. Detailed discussion and derivations are available in \Cref{app:Linear Invariant (Equivariant) Layer -- Extended Section}.

\textbf{Equivariant bias and invariant layers.} The bias vectors of the linear layers in our space are in $\mathbb{R}^{2^n \times n}$. As shown in \Cref{fig: symmetry}(right), the set of orbits induced by the action of $S_n$ satisfies:
\begin{equation}
\label{eq:the_partition_inv}
(\mathcal{P}([n]) \times [n]) / S_n \coloneq \{\gamma^{k^*} : k = 1, \ldots, n; * \in \{+, -\} \}.
\end{equation}

Here, $\gamma^{k^+}$ corresponds to all pairs $(S, i) \in \mathcal{P}([n]) \times [n]$ with $|S| = k$ and $i \notin S$, and $\gamma^{k^-}$ to all pairs with $|S| = k$ and $i \in S$.

As stated in \cite{wood1996representation,ravanbakhsh2017equivariance,maron2018invariant}, the tensor set $\{\mathbf{B}^{\gamma}_{S, i}\}_{\gamma \in (\mathcal{P}([n]) \times [n]) / S_n}$ where:

\begin{equation}
\label{eq: inv basis main body}
    \mathbf{B}^{\gamma}_{S, i} = \begin{cases}
        1, & \text{if } (S, i) \in \gamma; \\
        0, & \text{otherwise}.
    \end{cases}
\end{equation}

are a basis of the space of bias vectors of the invariant linear layers induced by the action of $S_n$.

\begin{figure}[ht]
\centering
\vspace{-0.8cm}
\includegraphics[width=0.8\textwidth]{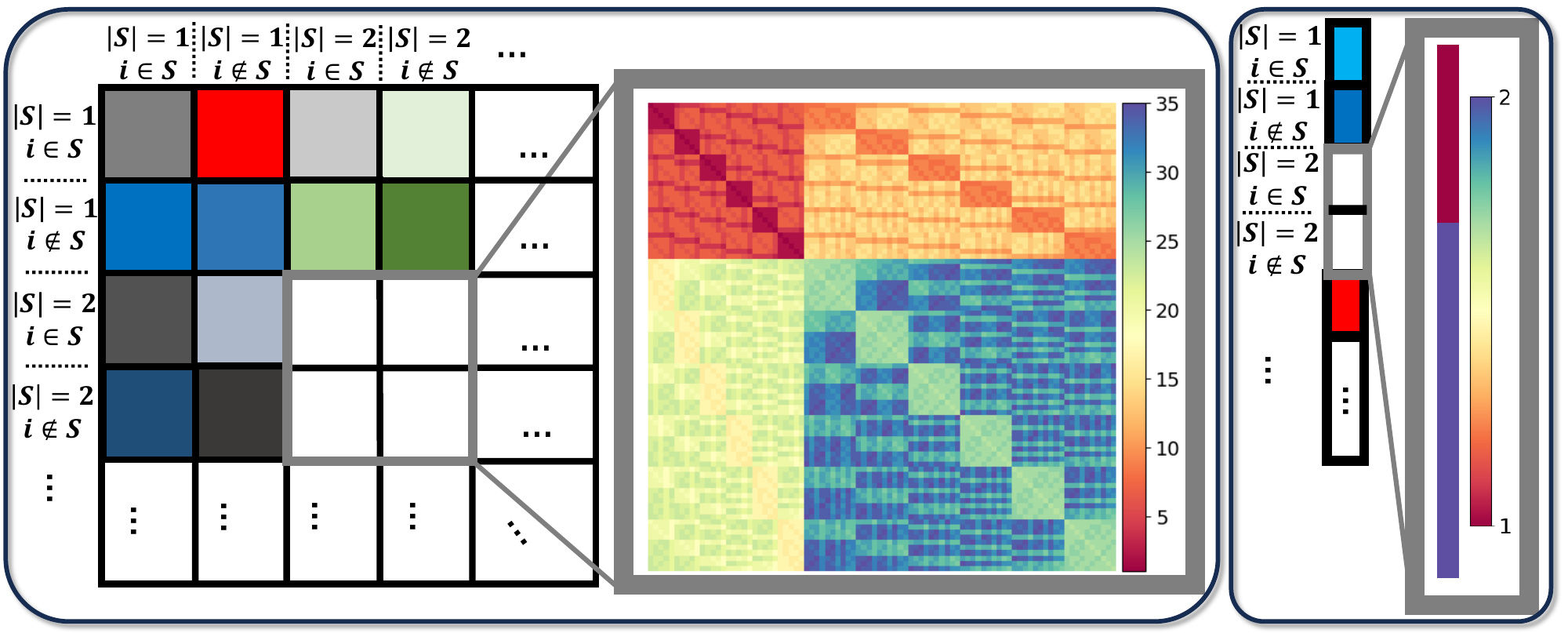}
\caption{Visualization via heatmaps (different colors correspond to different parameters) of the parameter-sharing scheme determined by symmetries for a graph with \(n=6\) nodes, zooming-in on the block which corresponds to sets of size two. \textbf{Left:} Visualization of the weight matrix for the equivariant basis \(\mathbf{B}^{\Gamma}_{S_1, i_1; S_2, i_2}\) (a total of 35 parameters in the block). \textbf{Right:} Visualization of the bias vector for the invariant basis \(\mathbf{B}^{\gamma}_{S, i}\) (a total of 2 parameters in the block). Symmetry-based updates reduce parameters more effectively than previously proposed linear equivariant layers by treating indices as unordered tuples (see \Cref{subapp: Comparative Parameter Reduction in Linear Equivariant Layers} for a discussion).}
\label{fig: symmetry}
\end{figure}

\textbf{Weight matrices.} Following similar reasoning, consider elements \((S_1, i_1, S_2, i_2) \in (\mathcal{P}([n]) \times [n] \times \mathcal{P}([n]) \times [n]) \). In \Cref{app:Linear Invariant (Equivariant) Layer -- Extended Section} we characterize the orbits of $S_n$ in this space as a partition in which each partition set is defined according to six conditions. Some of these conditions include the sizes of \(S_1\), \(S_2\) and $S_1 \cap S_2$, which remain invariant under permutations. Given an 
orbit,  $\Gamma \in (\mathcal{P}([n]) \times [n] \times \mathcal{P}([n]) \times [n]) /S_n$, we define a basis tensor, $\mathbf{B}^\Gamma \in \mathbb{R}^{2^n \times n \times 2^n \times n}$ by setting:
\begin{equation}
\label{eq: equiv basis main body}
    \mathbf{B}^{\Gamma}_{S_1, i_1; S_2, i_2} = \begin{cases}
        1, & \text{if } (S_1, i_1, S_2, i_2) \in \Gamma; \\
        0, & \text{otherwise}.
    \end{cases}
\end{equation}
A visualization of the two basis vectors in \Cref{eq: inv basis main body,eq: equiv basis main body}, is available in \Cref{fig: symmetry}.
The following (informal) proposition summarizes the results in this section (the proof is given in \Cref{app: proofs}),
\begin{restatable}[Basis of Invariant (Equivariant) Layers]{proposition}{BasisInvunformal}
\label{prop: informal BasisInvunformal}
The tensors $\mathbf{B}^{\gamma}$ ($\mathbf{B}^{\Gamma}$) in \Cref{eq: inv basis main body} (\Cref{eq: equiv basis main body}) form an orthogonal basis (in the standard inner product) of the invariant layers and biases (Equivariant layers -- weight matrix) .
\end{restatable}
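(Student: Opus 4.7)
The plan is to invoke the classical parameter-sharing characterization of linear equivariant maps between permutation representations \cite{ravanbakhsh2017equivariance,maron2018invariant}: a linear map between permutation representations is equivariant if and only if its coefficient tensor is constant on the orbits of the induced group action on the (product) index space, and the indicator tensors of these orbits then automatically form an orthogonal basis (orthogonality being immediate from orbit disjointness). The whole task therefore reduces to identifying the $S_n$-orbits in $\mathcal{P}([n])\times[n]$ for the invariant layers and biases, and in $(\mathcal{P}([n])\times[n])^2$ under the diagonal action for equivariant weight matrices, and matching them with the partitions claimed in \Cref{eq:the_partition_inv}, \Cref{eq: inv basis main body}, and \Cref{eq: equiv basis main body}.

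For the bias/invariant case, the constraint $b(S,i) = b(\sigma^{-1}(S), \sigma^{-1}(i))$ for all $\sigma\in S_n$ obtained from \Cref{eq: permutation action X} forces $b$ to be constant on orbits. I would verify that the orbit of $(S,i)$ is determined by the pair $(|S|, [i\in S])$: both are manifestly $S_n$-invariant, and conversely, given two pairs with matching type, one builds an explicit bijection of $[n]$ mapping one to the other by sending $i_1\mapsto i_2$, mapping $S_1\setminus\{i_1\}$ bijectively onto $S_2\setminus\{i_2\}$, and matching the complements arbitrarily. This reproduces \Cref{eq:the_partition_inv}, so $\{\mathbf{B}^{\gamma}\}$ from \Cref{eq: inv basis main body} is the desired orbit-indicator basis.

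For the equivariant weight-matrix case, the same logic applies to $(\mathcal{P}([n])\times[n])^2$ under the diagonal action of \Cref{eq: permutation action}. The forward direction of the orbit characterization is immediate since each of the six invariants listed in the body ($|S_1|$, $|S_2|$, $|S_1 \cap S_2|$, the positions of $i_1$ and $i_2$ relative to the Venn regions of $(S_1, S_2)$, and whether $i_1 = i_2$) is $S_n$-preserved. The substantive step is the converse: I would partition $[n]$ into the atoms cut out by the predicates $x\in S_1$, $x\in S_2$, $x=i_1$, $x=i_2$, observe that the six invariants fully determine the cardinality of every such atom, and construct $\sigma$ by composing arbitrary bijections between corresponding atoms across the two quadruples. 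This combinatorial bookkeeping is the main obstacle; careful case analysis is required for the degenerate subcases ($i_1=i_2$, or $i_j\in S_1\cap S_2$, etc.) to ensure that no hidden invariant has been overlooked.

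Once orbits are identified in both cases, the remaining conclusions follow cleanly: each $\mathbf{B}^{\Gamma}$ (resp.\ $\mathbf{B}^{\gamma}$) is orbit-constant by construction and hence equivariant (resp.\ invariant); any equivariant tensor $W$ expands uniquely as $W = \sum_{\Gamma} c_\Gamma \mathbf{B}^{\Gamma}$ with $c_\Gamma$ equal to the common value of $W$ on $\Gamma$, yielding both spanning and linear independence; orthogonality in the standard inner product holds because distinct orbits have disjoint supports; and each basis tensor is nonzero because orbits are nonempty. The same argument, restricted to the simpler singleton index space, delivers the claim for biases and invariant layers.
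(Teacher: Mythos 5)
Your proposal is correct and follows essentially the same route as the paper: reduce the claim to identifying the $S_n$-orbits on the index space(s), verify the orbit characterization by exhibiting an explicit permutation between any two type-matched tuples, and then invoke the parameter-sharing argument (orbit-constancy plus disjointness of orbit supports) to obtain spanning, linear independence, and orthogonality. Your atom-counting framing for the equivariant converse is a clean way to organize the permutation construction; the paper spells this lemma out in detail only for the bias/invariant case and defers the weight-matrix case to \cite{maron2018invariant}.
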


\subsubsection{ Incorporating symmetry-based updates in our framework}
In the previous subsection, we derived all possible linear invariant and equivariant operations that respect the symmetries of our product graph. We now use this derivation to define the symmetry-based updates in \Cref{eq: whole architecture}, which correspond to the construction of $\mathcal{A}_{\text{Equiv}}$ and the application of an MPNN.

To begin, we note that any linear equivariant layer can be realized through an MPNN ~\cite{gilmer2017neural} applied to a fully connected graph with appropriate edge features. 
\begin{wrapfigure}[4]{r}{0.6\linewidth}
    \vspace{-0.5cm}
    \centering
    \centering
    \includegraphics[scale=0.3]{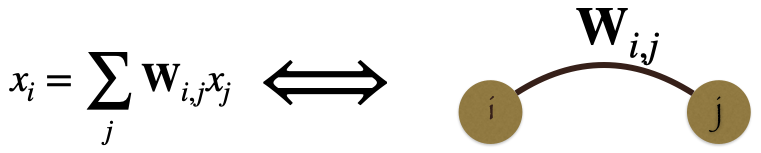} 
    \captionsetup{type=figure}
\end{wrapfigure}
This is formally stated in \Cref{lemma: Parameter Sharing as MPNN}, the main idea is to encode the parameters on the edges of this graph (see visualization inset). Thus, the natural construction of $\mathcal{A}_{\text{Equiv}}$ corresponds to a fully connected graph, with appropriate edge features derived from the parameter-sharing scheme we have developed.

\begin{wrapfigure}[9]{r}{0.4\linewidth}
    \vspace{-0.5cm}
    \centering
    \centering
    \includegraphics[scale=0.25]{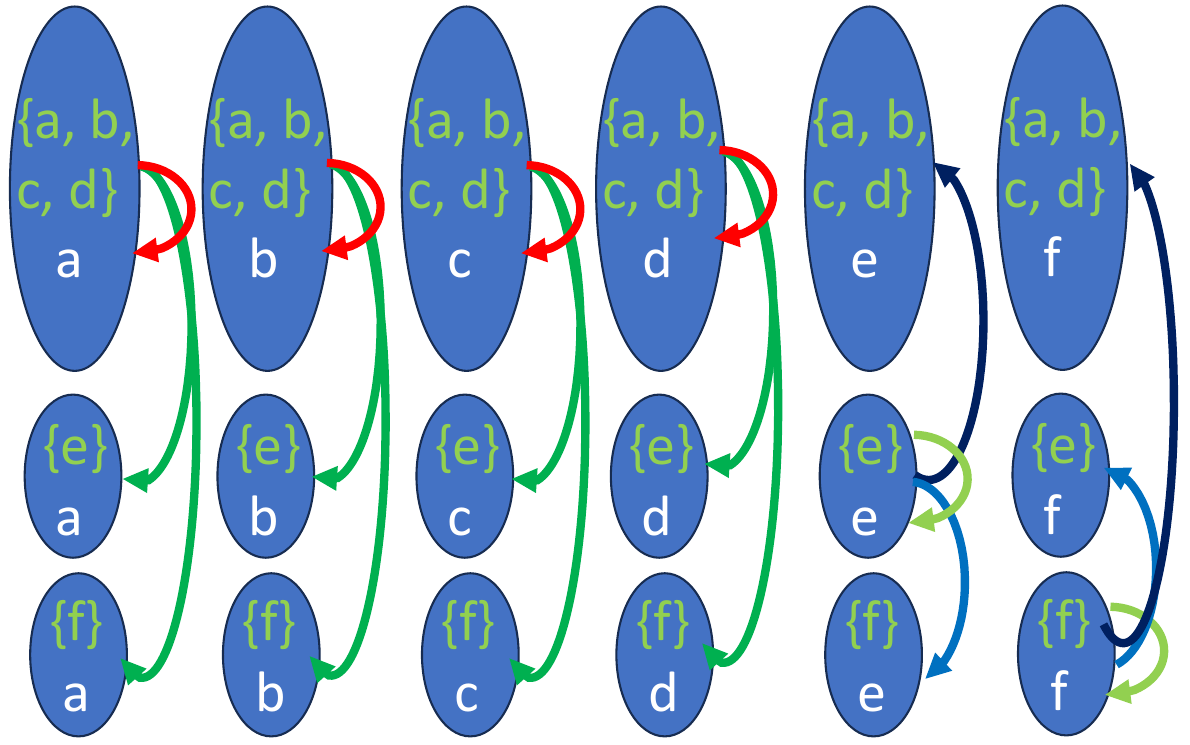} 
    \captionsetup{type=figure}
\end{wrapfigure}
However, one of our main goals and guidelines in developing our flexible framework is to maintain efficiency, and to align with the (node-based) maximally expressive GNN, namely GNN-SSWL$+$~\cite{zhang2023complete,bar-shalom2024subgraphormer}, for the case of a trivial coarsening function, $\mathcal{T}(G) = G$ (which correspond to the full-bag setting). To achieve this, we opt for a sparser choice by using only a subset of the basis vectors (defined in \Cref{eq: equiv basis main body}) to construct $\mathcal{A}_{\text{Equiv}}$. Specifically, the matrix $\mathcal{A}_{\text{Equiv}}$ corresponding to the chosen subset of basis vectors is visualized inset -- the parameter-sharing scheme is represented by edges with matching colors. To clarify, the nodes $(S, v)$ that satisfy $v \in S$ ``send messages'' (i.e., broadcast their representation) to all the nodes $(S', v')$ such that $v = v'$.
A more formal discussion regarding our implementation of those symmetry based updates is given in \Cref{app: Implementation of Linear Equivariant and Invariant layers -- Extended Section}. 

\textbf{Maintaining sparsity.} While the updates above are defined over the sparse representation of the coarse product graph, in practice we use its dense representation, treating it as a graph over the set of nodes $V \times V^\mathcal{T}$, which requires space complexity $\mathcal{O}(T\cdot|V|)$. The update rules above are adapted to this representation simply  by masking all nodes $(S,v)$ in the sparse representation such that $S \notin V^\mathcal{T}$. We note the models using the resulting update rule remain invariant to the action of $S_n$.  See discussion in \cite{albooyeh2019incidence}.

\subsection{Marking Strategies and Theoretical Analysis}
One of the key components of subgraph architectures is their marking strategy. Two widely used approaches in node-based subgraph architectures are binary-based node marking~\cite{bevilacqua2021equivariant} and distance-based marking~\cite{zhang2023complete}, which were proven to be equally expressive in the full-bag setup~\cite{zhang2023complete}. Empirically, distance-based marking has been demonstrated to outperform other strategies across several standard benchmarks. In this section, our aim is to develop and theoretically justify an appropriate marking strategy, specifically tailored to the structure of our product graph. We present and discuss here our main results, and refer to \Cref{app:Node Initialization -- Theoretical Analysis} for a more formal discussion.

Building on existing marking strategies and considering the unique structure of our product graph, we propose two natural extensions to both the binary node marking~\cite{bevilacqua2021equivariant} and distance-based marking strategies~\cite{zhang2023complete}.
Extending binary node marking, we first suggest  \emph{\textbf{Simple Marking}} ($\pi_S$), where an element $(S, v)$ is assigned a binary feature that indicates whether node $v$ belongs to subgraph $S$ ($v \in S$). The second extension, \emph{\textbf{Node + Size Marking}} ($\pi_{SS}$), builds on the \emph{simple marking} by assigning an additional feature that encodes the size of the super-node $S$.

For distance-based strategies, we propose \emph{\textbf{Minimum Distance}} ($\pi_{MD}$), where each element $(S, v)$ is assigned the smallest (minimal) shortest path distance (SPD) from node $v$ to any node $u \in S$. Finally, \emph{\textbf{Learned Distance Function}} ($\pi_{LD}$) extends this further by assigning to each element $(S, v)$ the output of a permutation-invariant learned function, which takes the set of SPDs between node $v$ and the nodes in $S$ as input.

Surprisingly, unlike the node-based full-bag case, we find that these marking strategies are not all equally expressive. We conveniently gather the first three strategies as $\Pi = \{ \pi_S, \pi_{SS}, \pi_{MD} \}$ and summarize the relation between all variants as follows:
\begin{restatable}[Informal -- Expressivity of marking strategies.]{proposition}{}
(i) Strategies in $\Pi$ are all equally expressive, independently of the transformation function $\mathcal{T}$. 
(ii) The strategy $\pi_{LD}$
is at least as expressive as strategies in $\Pi$. Additionally, there exists transformation functions s.t.\ it is \emph{strictly more expressive} than all of them.
\end{restatable}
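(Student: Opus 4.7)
The plan is to analyze each marking strategy in terms of the initial node features it induces on the product graph, and to compare strategies by their ability to distinguish pairs of non-isomorphic graphs under the stable WL-style coloring produced by our architecture's update mechanisms (horizontal, vertical, symmetry-based, and pooling).

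For part (i), I would first note that $\pi_{SS}$ injectively encodes the orbit of each $(S, v)$ under the $S_n$-action on $\mathcal{P}([n]) \times [n]$, since orbits are characterized by the pair $(|S|, \mathbf{1}[v \in S])$ per \Cref{lemma: orbits}; hence $\pi_{SS}$ is as expressive as unique orbit encoding. Next, I would show the three strategies in $\Pi$ are pairwise equivalent. For $\pi_S$ versus $\pi_{SS}$, the only extra information in $\pi_{SS}$ is the size $|S|$, which is recoverable by a single aggregation of the form $|S| = \sum_{v \in V} \pi_S(S, v)$; such aggregation is available through the two-level pooling or by composing symmetry-based broadcast updates with an MLP, so $\pi_S$ loses no distinguishing power relative to $\pi_{SS}$. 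For $\pi_{MD}$ versus $\pi_S$, the membership bit equals $\mathbf{1}[\pi_{MD}(S, v) = 0]$, so $\pi_{MD}$ trivially dominates $\pi_S$; conversely, running sufficiently many horizontal MPNN steps along $\mathcal{A}_G$ from $\pi_S$ lets each $(S, v)$ propagate and combine the set of distances to marked nodes in $S$, computing their minimum and thus recovering $\pi_{MD}$.

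For part (ii), the containment claim is immediate: the multiset $M(S, v) = \{\text{SPD}(v, u) : u \in S\}$ underlying $\pi_{LD}$ has cardinality $|S|$, contains $0$ iff $v \in S$, and has minimum $\pi_{MD}(S, v)$, so applying standard invariant aggregators to $M(S, v)$ recovers each marking in $\Pi$. For the strict separation, I would construct a coarsening $\mathcal{T}$ and a pair of non-isomorphic graphs $G_1, G_2$ such that, for every $(S, v)$, the triple $(|S|, \mathbf{1}[v \in S], \min_{u \in S} \text{SPD}(v, u))$ coincides across the two graphs, while the multiset $M(S, v)$ differs for at least one $(S, v)$. A natural route is to start from a pair of non-isomorphic graphs known to be indistinguishable by standard MPNNs but whose SPD profiles differ at chosen nodes, and design $\mathcal{T}$ so that the distinguishing multiset is isolated within a single supernode.

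The main obstacle will be the strict-separation construction in the second half of part (ii): one must produce a concrete triple $(G_1, G_2, \mathcal{T})$ and formally verify that \emph{none} of the four update mechanisms of our architecture, when initialized from any $\pi \in \Pi$, can recover the distinguishing multiset of SPDs. This reduces to a careful color-refinement argument over the product graph, showing that the refinements under $\pi \in \Pi$ collapse identically on $G_1$ and $G_2$ while the refinement under $\pi_{LD}$ does not.
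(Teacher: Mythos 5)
Your outline for part (i) matches the paper's strategy: recognizing that $\pi_{SS}$ is an injective orbit encoding (exactly the content of \Cref{prop: marking + size as simple equivariant initialization}), recovering $|S|$ from $\pi_S$ via an aggregation over nodes in $S$ (the paper uses the $\text{agg}_4$ update in \Cref{eq: transformed subgraph message passing}, which sums over $v' \in S$; your sum over all $v \in V$ is numerically identical since $\pi_S(S,\cdot)$ vanishes outside $S$), thresholding $\pi_{MD}$ at zero to recover $\pi_S$, and a BFS-style induction along $\mathcal{A}_G$ to recover $\pi_{MD}$ from $\pi_S$. The containment half of part (ii) is also the paper's argument verbatim: instantiate $\phi$ as $\min$ (or any aggregator recovering the required statistics of the SPD multiset).

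The genuine gap is the strict separation in part (ii), which you explicitly flag but do not resolve. This is the nontrivial half of the proposition, and without a concrete witness the claim is unproved. The paper's construction is quite specific: the coarsening $\mathcal{T}(G) = \{\{u : \deg_G(u) = 3\}\}$ (a single super-node collecting the degree-3 vertices), $G$ two 4-cycles joined by an edge, and $H$ two 5-cycles glued along an edge. With $\phi = \max$ the pooled $\pi_{LD}$ invariants differ ($16$ vs.\ $14$); separately, one must check that all four update adjacencies $\mathcal{A}_G, \mathcal{A}_{\mathcal{T}(G)}, \mathcal{A}_{P_1}, \mathcal{A}_{P_2}$ on the product graph yield 1-WL-indistinguishable colored graphs under $\pi_S$-initialization, for which the key observation is that $|V^\mathcal{T}| = 1$ collapses the vertical structure and the degree-3 condition matches under a vertex bijection. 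Your sketch ("isolate the distinguishing multiset within a single supernode") correctly anticipates the shape of the construction, but as you note, the color-refinement verification is the real work. Two cautionary points for that verification: (a) you must rule out separability not just under $\pi_S$ but under the orbit encoding $\pi_{SS}$, since part (i) makes them equivalent — the paper handles this by noting that with a single orbit-degenerate super-node the orbit features reduce to a degree-based coloring; (b) the pooling layer has a two-level structure (inner sum over $v$, MLP, outer sum over $S$), so the argument must show indistinguishability at the level of node-wise colors, not merely of global multisets, which is why the paper reduces to a WL-indistinguishability claim on each of the four induced adjacencies.
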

The above is formally stated in~\Cref{prop: Equal expressivity for three NM,prop: expressivity of learned distance initialization},
and more thoroughly discussed in~\Cref{app:Node Initialization -- Theoretical Analysis}.  In light of the above proposition, we instatiate the learned distance function $\pi_{LD}$ strategy when implementing our model, as follows,
\begin{equation}
\label{eq: node marking}
    \mathcal{X}_{S, v} \leftarrow \sum_{u \in S} z_{d_G(v,u)}
\end{equation}
where $d_G(v, u)$ denotes the \emph{shortest path distance} between nodes $v$ and $u$ in the original $G$\footnote{To facilitate this, we maintain a lookup table where each index corresponds to a shortest path distance, assigning a learnable embedding, $z_{d_G(v,u)} \in \mathbb{R}^d$, to each node $(S,v)$.}. 

\textbf{Coarsening Function and Expressivity.} We investigate whether our \ourmethod\ framework offers more expressiveness compared to directly integrating information between the coarsened graph and the original graph. 

The two propositions below illustrate that a simple, straight forward integration of the coarsen graph with the original graph (this integration is referred to as the \emph{sum graph} -- formally defined in \Cref{def: Transformed sum graph}), and further processing it via standard message-passing, results in a less expressive architecture. Furthermore, when certain coarsening functions are employed within the \ourmethod\ framework, our resulting architecture becomes strictly more expressive than conventional node-based subgraph GNNs. These results suggest that the interplay between the coarsening function and the subgraph layers we have developed enhances the model's overall performance. We summarize this informally below and provide a more formal discussion in \Cref{app: Expressive power of EPGN}.

\begin{restatable}[Informal -- \ourmethod\ goes beyond coarsening]{proposition}{} For any transformation function $\mathcal{T}$, \ourmethod\ can implement message-passing on the sum graph, hence being at least as expressive. Also, there exist transformations $\mathcal{T}$'s s.t. \ourmethod\ is \emph{strictly more expressive} than that.
\end{restatable}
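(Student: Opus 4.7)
The plan is to prove the two claims separately: a direct simulation argument for the first (containment), and an appeal to the already established equivalence with GNN-SSWL$+$ for the second (strictness).

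For the containment, I would exhibit an explicit simulation of message-passing on the sum graph inside \ourmethod\ for an arbitrary transformation $\mathcal{T}$. The key observation is that the product graph $\mathcal{T}(G)\,\square\,G$ has node set $V^{\mathcal{T}}\times V$, which is larger than the sum graph's $V^{\mathcal{T}}\sqcup V$, so every \ourmethod\ feature $\mathcal{X}(S,v)$ is wide enough to carry a concatenated pair $(h_v, H_S)$ tracking, in disjoint channels, both a node and a super-node representation of a parallel sum-graph-MPNN run. Initialization stores the original node features in the $h$-channel and a copy of the super-node features in the $H$-channel. Then, by induction over layer index, I would show that a constant number of \ourmethod\ steps reproduces one sum-graph-MPNN step: horizontal updates along $\mathcal{A}_G$ simulate the aggregation of $h_v$ from its $G$-neighbors while touching only the $h$-channel; vertical updates along $\mathcal{A}_{\mathcal{T}(G)}$ simulate the aggregation of $H_S$ from its $\mathcal{T}(G)$-neighbors symmetrically; and the symmetry-based updates given by $\mathcal{A}_{\mathrm{Equiv}}$, possibly composed with a subsequent horizontal or vertical move, simulate the two membership aggregations $h_v\leftarrow\{H_S:v\in S\}$ and $H_S\leftarrow\{h_v:v\in S\}$. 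Disjoint channel assignment and gating MLPs prevent the two coordinates from interfering, and the final sum pool of \ourmethod, which adds over all $(S,v)$, can be calibrated to match any readout of the sum-graph MPNN up to known multiplicative constants arising from how many times each $h_v$ (resp.\ $H_S$) is counted.

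For the strictness, I would specialise to the trivial transformation $\mathcal{T}(G)=G$, under which each super-node is a singleton $\{v\}$ and the sum graph is essentially two copies of $G$ connected by identity membership edges. A short Weisfeiler--Leman argument then shows that any MPNN over this structure is no more discriminating than 1-WL on $G$ itself. On the other hand, \Cref{prop: TSGNN can implement GNN-SSWL+} guarantees that \ourmethod\ under this trivial $\mathcal{T}$ implements the maximally expressive node-based Subgraph GNN GNN-SSWL$+$, which is known to be strictly more powerful than 1-WL. Any classical 1-WL indistinguishable pair separated by GNN-SSWL$+$, for instance $C_6$ vs.\ the disjoint union $2C_3$, or a small Cai--F\"urer--Immerman construction, therefore witnesses the strict gap for this choice of $\mathcal{T}$.

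The main obstacle will be the membership simulation in the containment argument, and in particular the direction $H_S\leftarrow\{h_v:v\in S\}$. The specific sparse $\mathcal{A}_{\mathrm{Equiv}}$ described in the implementation realises only the opposite broadcast pattern ``$(S,v)$ with $v\in S$ sends to every $(S',v)$'', so the reverse aggregation must be produced either by invoking an additional element of the full equivariant basis of \Cref{prop: BasisInv formal} (which is still within the framework) or by composing the available broadcast with subsequent horizontal/vertical moves that exploit the partition structure of $V^{\mathcal{T}}$. Carefully justifying that one such route exists for every $\mathcal{T}$, while keeping the $h$- and $H$-channels decoupled, is the only delicate part of the proof.
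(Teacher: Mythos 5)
Your proposal matches the paper's actual strategy quite closely. For the containment, the paper (Proposition~\ref{prop: TSGNN is at least as expressive as transformation MPNN}, Lemmas~\ref{lemma: node init of sum transform graph can be implements by SPNN} and~\ref{lemma: proof aggregation of sum transform can be implemented by SPNN}) does exactly what you propose: it maintains, by induction on layers, a node feature map on the product graph of the form $\mathcal{X}^{t+1}(S,v)=[\tilde{\mathcal{X}}^t(S),\tilde{\mathcal{X}}^t(v)]$, with disjoint channels tracking the super-node and node sides of a parallel $\text{MPNN}_+$ run, and then calibrates the pooling to undo the multiplicative overcounts $|V^{\mathcal{T}}|$ and $|V|$. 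For the strictness, the paper (Proposition~\ref{prop:TSGNN more expressive then MPNNplus}) also specialises to the identity coarsening $\mathcal{T}(G)=\{\{v\}:v\in V\}$, proves $\text{MPNN}=\text{MPNN}_+(\mathcal{T})$ by a two-channel simulation argument, and then invokes Proposition~\ref{prop: TSGNN can implement GNN-SSWL+} to conclude that \ourmethod\ recovers a strictly more expressive model.

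One clarification on the ``obstacle'' you flag. The worry about the direction $H_S\leftarrow\{h_v:v\in S\}$ only arises from reading the main-body sketch of $\mathcal{A}_{\text{Equiv}}$, which describes a single broadcast pattern. The formal layer update used in the theoretical analysis (Definition~\ref{def: General Layer Update}, Eq.~\ref{eq: transformed subgraph message passing}) already contains \emph{both} membership aggregations, $\text{agg}^t_3$ over $\{(S',v):v\in S'\}$ and $\text{agg}^t_4$ over $\{(S,v'):v'\in S\}$, and the implementation (Definition~\ref{def: Adjacency Matrices on Product Graph}) explicitly includes both $A_{P_1}$ and $A_{P_2}$. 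Equivalence of this implementation with the general update is Proposition~\ref{prop: Equivalence of General Layer and Implemented Layer}. So the workaround you suggest (composing with horizontal/vertical moves) is unnecessary; the reverse aggregation is already a first-class primitive. Your fallback of ``invoking an additional element of the full equivariant basis'' is in fact exactly what happens: $A_{P_1}$ and $A_{P_2}$ are two of the basis elements from Proposition~\ref{prop: BasisInv formal} that the paper keeps. With that noted, your proposal is sound and follows the paper's proof route.
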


\begin{restatable}[Informal -- \ourmethod\ vs node based subgraphs]{proposition}{} There exist transformations $\mathcal{T}$'s s.t. our \ourmethod\ model using $\mathcal{T}$ as its coarsening function is \emph{strictly more expressive} than GNN-SSWL$+$.
\end{restatable}

\section{Experiments}
\label{sec:Experiments}
\begin{wraptable}[22]{r}{0.4\textwidth}
  \centering
  \scriptsize
  \vspace{-0.3cm}
\caption{Results on \textsc{Zinc-12k} dataset. Top two results are reported as \textcolor{blue}{\textbf{First}} and \textcolor{red}{\textbf{Second}}.}
  \resizebox{0.4\textwidth}{!}{%
    \addtolength{\tabcolsep}{-0.55em}
    \begin{tabular}{lr|c}
    \toprule
    \textbf{Method} & Bag size & ZINC (MAE ↓) \\
    \toprule
    GIN~\cite{xu2018powerful} & $T=1$ & $0.163 \pm 0.004$ \\
    \midrule
    OSAN~\cite{qian2022ordered}&  $T=2$ & $0.177 
    \pm 0.016$ \\
    Random~\cite{kong2024mag}& $T=2$ & $0.131 \pm 0.005$ \\
    PL~\cite{bevilacqua2023efficient}& $T=2$ & $0.120 \pm 0.003$ \\
    Mag-GNN~\cite{kong2024mag}& $T=2$ & $\textcolor{blue}{\mathbf{0.106}} \pm 0.014$ \\
    Ours & $T=2$ & $\textcolor{red}{\mathbf{0.109}} \pm 0.005$ \\
    \midrule
    Random~\cite{kong2024mag}& $T=3$ & $0.124 \pm N/A$ \\
    Mag-GNN~\cite{kong2024mag}& $T=3$ & $\textcolor{red}{\mathbf{0.104}} \pm N/A$ \\
        Ours& $T=3$ & $\textcolor{blue}{\mathbf{0.096}} \pm 0.005$ \\
    \midrule
    Random~\cite{kong2024mag}& $T=4$ & $0.125 \pm N/A$ \\
    Mag-GNN~\cite{kong2024mag}& $T=4$ & $\textcolor{red}{\mathbf{0.101}} \pm N/A$ \\
    Ours & $T=4$ & $\textcolor{blue}{\mathbf{0.090}} \pm 0.003$ \\
    \midrule
Random~\cite{bevilacqua2023efficient}&   $T=5$ & $0.113 \pm 0.006$ \\
    PL~\cite{bevilacqua2023efficient}&  $T=5$ & $\textcolor{red}{\mathbf{0.109}} \pm 0.005$ \\
    Ours &  $T=5$ & $\textcolor{blue}{\mathbf{0.095}} \pm 0.003$ \\
    \midrule
    GNN-SSWL$+$~\cite{zhang2023complete} & Full &  $0.070 \pm 0.005$ \\
    Subgraphormer~\cite{bar-shalom2024subgraphormer}& Full & $0.067 \pm 0.007$ \\
    Subgraphormer+PE~\cite{bar-shalom2024subgraphormer} & Full & $\textcolor{red}{\mathbf{0.063}} \pm 0.001$ \\
    Ours & Full & $\textcolor{blue}{\mathbf{0.062}} \pm 0.0007$ \\
    \bottomrule
  \end{tabular}
  }
  \label{tab: zinc}
\end{wraptable}
We experimented extensively over seven different datasets to answer the following questions:
\emph{(Q1) Can \texttt{\ourmethod} outperform efficient Subgraph GNNs operating on small bags? (Q2) Does the additional symmetry-based updates boost performance? (Q3) Does \texttt{\ourmethod} offer a good solution in settings where full-bag Subgraph GNNs cannot be applied? (Q4) Does \texttt{\ourmethod} in the full-bag setting validate its theory and match state-of-the-art full-bag Subgraph GNNs?} 

In the following sections, we present our main results and refer to \Cref{app: Extended Experimental Section} for additional experiments and details.

\textbf{Baselines.} For each task, we include several baselines. The \textsc{Random} baseline corresponds to random subgraph selection.
We report the best performing random baseline from all prior work~\cite{bevilacqua2023efficient,kong2024mag,qian2022ordered,bar-shalom2024subgraphormer}. The other two (non-random) baselines are: (1) \textsc{Learned}~\cite{bevilacqua2023efficient,kong2024mag,qian2022ordered}, which represents methods that learn the specific subgraphs to be used; and (2) \textsc{Full}~\cite{zhang2023complete,bar-shalom2024subgraphormer}, which corresponds to full-bag Subgraph GNNs.

\textbf{\textsc{ZINC}.} We experimented with both the \textsc{ZINC-12k} and \textsc{ZINC-Full} datasets~\citep{sterling2015zinc, gomez2018automatic, dwivedi2023benchmarking}, adhering to a $500k$ parameter budget as 
prescribed. As shown in~\Cref{tab: zinc},  \ourmethod\ outperforms all efficient baselines by a significant margin, with at least a $+0.008$ MAE improvement for bag sizes $T \in \{3, 4, 5\}$. Additionally, in the full-bag setting, our method recovers state-of-the-art results. The results for \textsc{ZINc-Full} are available in \Cref{tab: app zinc} in the Appendix.

\textbf{OGB.} We tested our framework on several datasets from the OGB benchmark collection~\cite{hu2020open}. \Cref{tab: ogbg} shows the performance of our method compared to both efficient and full-bag Subgraph GNNs. Our  \ourmethod\ outperforms all baselines across all datasets for bag sizes $T \in \{2, 5\}$, except for the \textsc{molhiv} dataset with $T=2$, where PL achieves the best results and our method ranks second. In the full-bag setting,  \ourmethod\ is slightly outperformed by the top-performing Subgraph GNNs but still offers comparable results.

\begin{wraptable}[9]{r}{0.5\textwidth}
\vspace{-0.5cm}
  \centering
  \scriptsize
\caption{Results on \textsc{Peptides} dataset.}
    \addtolength{\tabcolsep}{-0.55em}
\begin{tabular}{l|c|c} 
    \toprule
    \multirow{2}*{Model $\downarrow$ / Dataset $\rightarrow$}& \textsc{Peptides-func} & \textsc{Peptides-struct} \\
    & (AP $\uparrow$) & (MAE $\downarrow$) \\ 
    \midrule
   GCN~\cite{kipf2016semi} & $0.5930$\tiny $\pm0.0023$ & $0.3496$\tiny $\pm0.0013$ \\
    GIN~\cite{xu2018powerful} & $0.5498$\tiny $\pm0.0079$ & $0.3547$\tiny $\pm0.0045$ \\
    GatedGCN~\cite{bresson2017residual} & $0.5864$\tiny $\pm0.0077$ & $0.3420$\tiny $\pm0.0013$ \\
    GatedGCN+RWSE~\cite{dwivedi2022long} & $0.6069$\tiny $\pm0.0035$ & $0.3357$\tiny $\pm0.0006$ \\
    \midrule
    Random~\cite{bar-shalom2024subgraphormer} &  $0.5924$\tiny $\pm0.005$ & $0.2594$\tiny $\pm0.0021$ \\
    Ours & $\mathbf{0.6156}$\tiny $\pm0.0080$ & $\mathbf{0.2539}$\tiny $\pm0.0015$ \\
    \bottomrule
    \end{tabular}
  \label{tab: LRGB}
\end{wraptable}
\textbf{Peptides.} We experimented on the \textsc{Peptides-func} and \textsc{Peptides-struct} datasets~\citep{dwivedi2022long} -- which full-bag Subgraph GNNs already struggle to process -- evaluating \ourmethod's ability to scale to larger graphs.
The results are summarized in \Cref{tab: LRGB}. \ourmethod\ outperforms all MPNN variants, even when incorporating structural encodings such as GATEDGCN+RWSE. Additionally, our method surpasses the random\footnote{For the \textsc{Peptides} datasets, we benchmarked our model against the random variant of \texttt{Subgraphormer + PE}, which similarly incorporates information from the Laplacian eigenvectors. To ensure a fair comparison, we used a single vote and the same exact bag size of 35 subgraphs. Additionally, since \texttt{Subgraphormer + PE} employs GAT~\cite{velivckovic2017graph} as the underlying MPNN, we also utilized GAT for this specific experiment to maintain consistency and fairness.} baseline on both datasets. 

\begin{wraptable}[7]{r}{0.35\textwidth}
  \centering
  \scriptsize
\caption{Ablation study.  
}
    \addtolength{\tabcolsep}{-0.55em}
    \begin{tabular}{l|c|c}
    \toprule
Bag size & w/ & w/o \\
\toprule
T=2      & $\mathbf{0.109}$\tiny $\pm 0.005$          & $0.143$\tiny $\pm 0.003$         \\
T=3     & $\mathbf{0.096}$\tiny $\pm 0.005$          & $0.101$\tiny $\pm 0.006$          \\
T=4      & $\mathbf{0.090}$\tiny $\pm 0.003$          & $0.106$\tiny $\pm 0.001$          \\
T=5      & $\mathbf{0.095}$\tiny $\pm 0.003$          & $0.104$\tiny $\pm 0.005$         \\
\bottomrule
\end{tabular}
  \label{tab: ablation}
\end{wraptable}
\textbf{Ablation study -- symmetry-based updates.} We assessed the impact of the symmetry-based update 
on the performance of \ourmethod. Specifically, we ask, \emph{do the symmetry-based updates significantly contribute to the performance of  \ourmethod?} To evaluate this, we conducted several experiments using the \textsc{Zinc-12k} dataset across various bag sizes, $T \in \{2, 3, 4, 5\}$, comparing  \ourmethod\ with and without the symmetry-based update. The results are summarized in \Cref{tab: ablation}. It is clear that the symmetry-based updates play a key role in the performance of \ourmethod. For a bag size of $T=2$, the inclusion of the symmetry-based update improves the MAE by a significant $0.034$. For other bag sizes, the improvements range from $0.005$ to $0.016$, clearly demonstrating the benefits of including the symmetry-based updates.


\begin{wraptable}[16]{r}{0.5\textwidth}
\centering
\scriptsize
\vspace{-0.3cm}
\caption{Results on OGB datasets. The top two results are reported as \textcolor{blue}{\textbf{First}} and \textcolor{red}{\textbf{Second}}.}
\label{tab: ogbg}
\resizebox{0.5\textwidth}{!}{%
\addtolength{\tabcolsep}{-0.55em}
\begin{tabular}{lr|c|c|c}
\toprule
\multirow{2}*{Model $\downarrow$ / Dataset $\rightarrow$} & \multirow{2}*{Bag size} & \textsc{molhiv} & \textsc{molbace}  & \textsc{molesol} \\
& & (ROC-AUC $\uparrow$) & (ROC-AUC $\uparrow$) & (RMSE $\downarrow$) \\
\midrule
GIN~\cite{xu2018powerful}& $T=1$ & $75.58$\tiny $\pm 1.40$ & $72.97$\tiny $\pm 4.00$ & $1.173$\tiny $\pm 0.057$  \\
\midrule
Random~\cite{bevilacqua2023efficient}& $T=2$ & $77.55$\tiny $\pm 1.24$ & $75.36$\tiny $\pm 4.28$ &  $0.951$\tiny $\pm 0.039$  \\
PL~\cite{bevilacqua2023efficient} & $T=2$ & $\textcolor{blue}{\mathbf{79.13}}$\tiny $\pm 0.60$ & $\textcolor{red}{\mathbf{78.40}}$\tiny $\pm 2.85$ &  $\textcolor{red}{\mathbf{0.877}}$\tiny $\pm 0.029$  \\
Mag-GNN~\cite{kong2024mag} & $T=2$ & $77.12$\tiny $\pm 1.13$ & - & - \\
Ours& $T=2$ & $\textcolor{red}{\mathbf{77.72}}$\tiny $\pm 0.76$ & $\textcolor{blue}{\mathbf{80.58}}$\tiny $\pm 1.04$ &  $\textcolor{blue}{\mathbf{0.850}}$\tiny $\pm 0.024$  \\
\midrule
OSAN~\cite{qian2022ordered}& $T=3$ & - & - &  $\textcolor{blue}{\mathbf{0.959}}$\tiny $\pm 0.184$ \\
\midrule
OSAN~\cite{qian2022ordered}& $T=5$ & - & $76.30$\tiny $\pm 3.00$ &  - \\
PL~\cite{bevilacqua2023efficient}& $T=5$ & $\textcolor{red}{\mathbf{78.49}}$\tiny $\pm 1.01$ & $\textcolor{red}{\mathbf{78.39}}$\tiny $\pm 2.28$ &  $\textcolor{red}{\mathbf{0.883}}$\tiny $\pm 0.032$ \\
Random~\cite{bevilacqua2023efficient}& $T=5$ & $77.30$\tiny $\pm 2.56$ & $78.14$\tiny $\pm 2.36$ &  $0.900$\tiny $\pm 0.032$\\
Ours& $T=5$ & $\textcolor{blue}{\mathbf{79.09}}$\tiny $\pm 0.90$ & $\textcolor{blue}{\mathbf{79.64}}$\tiny $\pm 1.43$ &  $\textcolor{blue}{\mathbf{0.863}}$\tiny $\pm 0.029$ \\
\midrule
GNN-SSWL+~\cite{zhang2023complete}& Full & $\textcolor{red}{\mathbf{79.58}}$\tiny $\pm 0.35$ & $\textcolor{red}{\mathbf{82.70}}$\tiny $\pm 1.80$ &  $0.837$\tiny $\pm 0.019$\\
Subgraphormer& Full & $\textcolor{blue}{\mathbf{80.38}}$\tiny $\pm 1.92$ & $81.62$\tiny $\pm 3.55$ &  $0.832$\tiny $\pm 0.043$  \\
Subgraphormer + PE & Full &  $79.48$\tiny $\pm 1.28$ & $\textcolor{blue}{\mathbf{84.35}}$\tiny $\pm 0.65$ &  $\textcolor{red}{\mathbf{0.826}}$\tiny $\pm 0.010$  \\
Ours & Full &  $79.44$\tiny $\pm 0.87$ & $80.71$\tiny $\pm 1.76$ &  $\textcolor{blue}{\mathbf{0.814}}$\tiny $\pm 0.021$ \\
\bottomrule
\end{tabular}
}
\end{wraptable}
\paragraph{Discussion.}
In what follows, we address research questions \textbf{Q1} to \textbf{Q4}.
\begin{enumerate*}[label=(A\arabic*)]
    \item \Cref{tab: zinc,tab: ogbg,tab: LRGB} clearly demonstrate that we outperform efficient Subgraph GNNs (which operate on a small bag) in 10 out of 12 dataset and bag size combinations.
    \item Our ablation study on the \textsc{Zinc-12k} dataset, as shown in \Cref{tab: ablation}, clearly demonstrates the benefits of the symmetry-based updates across all the considered bag sizes.
    \item \Cref{tab: LRGB} demonstrates that \ourmethod\ provides an effective solution when the full-bag setting cannot be applied, outperforming all baselines.
    \item On the \textsc{Zinc-12k} dataset (see \Cref{tab: zinc}), \ourmethod\ achieves state-of-the-art results compared to  Subgraph GNNs. On the OGBG datasets (see \Cref{tab: ogbg}), our performance is comparable to these top-performing Subgraph GNNs.
\end{enumerate*}

\section{Conclusions}\label{sec:outro}
In this work, we employed graph coarsenings and leveraged the insightful connection between Subgraph GNNs and the graph Cartesian product to devise \ourmethod, a novel and flexible Subgraph GNN that can effectively generate and process any desired bag size. Several directions for future research remain open. Firstly, we experimented with spectral clustering based coarsening, but other strategies are possible and are interesting to explore. Secondly, in our symmetry-based updates, we have only considered a portion of the whole equivariant basis we derived: evaluating the impact of other basis elements deserve further attention, both theoretically and in practice. Finally, whether Higher-Order Subgraph GNNs can benefit from our developed parameter-sharing scheme remains an intriguing open question.

\textbf{Limitations.} Our method operates over a product graph. Although we provide control over the size of this product graph, achieving better performance requires a larger bag size. This can become a complexity bottleneck, particularly when the original graph is large.
\section*{Acknowledgements}
The authors are grateful to Beatrice Bevilacqua, for helpful discussions, and constructive conversations about the experiments. HM is the Robert J. Shillman Fellow and is supported by the Israel Science Foundation through a personal grant (ISF 264/23) and an equipment grant (ISF 532/23). FF is funded by the Andrew and Erna Finci Viterbi Post-Doctoral Fellowship; FF partially performed this work while visiting the Machine Learning Research Unit at TU Wien led by Prof.\ Thomas Gärtner.

\clearpage

\clearpage
\bibliography{main}
\bibliographystyle{plainnat}

\medskip


\clearpage

\appendix
\section*{Appendix}
The appendix is organized as follows:

\begin{itemize}
    \item In \Cref{app: Basic Definitions}, we provide some basic definitions that will be used in later sections of the paper.
    \item In \Cref{app: Theoretical Validation of Implementation Details} we discuss some theoretical aspects of our model implementation, and its relation to \Cref{eq: whole architecture}.

    \item In \Cref{app:Node Initialization -- Theoretical Analysis} we define four natural general node marking policies and analyze their theoretical effects on our model, as well as their relation to some node-based node marking policies. Finally, we provide a principled derivation of one of these policies using the natural symmetry of our base object. 
    
    \item In \Cref{app: Recovering Subgraph GNNs} we compare our model to node-based subgraph GNNs, which are the most widely used variant of subgraph GNNs. Additionally, we demonstrate that different choices of coarsening functions can recover various existing subgraph GNN designs.
    \item In \Cref{app: Comparison to Theoretical Baseline} we demonstrate how our model can leverage the information provided by the coarsening function in an effective way, comparing its expressivity to a natural baseline which also leverages the coarsening function. We show that for all coarsening functions, we are at least as expressive as the baseline and that for some coarsening functions, our model is strictly more expressive. 
    
    \item In \Cref{app:Linear Invariant (Equivariant) Layer -- Extended Section} we delve deeper into the characterization of all linear maps $L : \mathbb{R}^{\mathcal{P}([n]) \times [n]} \rightarrow  \mathbb{R}^{\mathcal{P}([n]) \times [n]}$ that are equivariant to the action of the symmetric group.
    
    \item In \Cref{app: Extended Experimental Section}  we provide experimental details to reproduce the results in \Cref{sec:Experiments}, as well as a comprehensive set of ablation studies.
    
    \item In \Cref{app: proofs} we provide detailed proofs to all propositions in this paper.
    
\end{itemize}

\section{Basic Definitions}
\label{app: Basic Definitions}

We devote this section to formally defining the key concepts of this paper, as well as introducing new useful notation. We start by defining the two principle components of our pipeline, the cartesian product graph and the coarsening function:

\begin{restatable}[Cartesian Product Graph]{definition}{CartesianProductGraphDef}
\label{def: Graph Cartesian Product}
Given two graphs $G_1$ and $G_2$, their Cartesian product $G_1 \square G_2$ is defined as:
\begin{itemize}
    \item The vertex set $ V(G_1 \square G_2) = V(G_1) \times V(G_2) $.
    \item Vertices $ (u_1, u_2) $ and $ (v_1, v_2) $ in $ G_1 \square G_2 $ are adjacent if:
    \begin{itemize}
        \item $ u_1 = v_1 $ and $ u_2 $ is adjacent to $ v_2 $ in $ G_2 $, or
        \item $ u_2 = v_2 $ and $ u_1 $ is adjacent to $ v_1 $ in $ G_1 $.
    \end{itemize}
\end{itemize}
\end{restatable}

\begin{restatable}[Coarsening Function]{definition}{TransformationFunctionDef}
\label{def: coarsening function}
A Coarsening function \(\mathcal{T}(\cdot)\) is defined as a function that, given a graph \(G = (V, E)\) with vertex set \(V = [n]\) and adjacency matrix \(A \in \mathbb{R}^{n \times n}\), takes \(A\) as input and returns a set of "super-nodes" \(\mathcal{T}(A) \subseteq \mathcal{P}([n])\). The function \(\mathcal{T}(\cdot)\) is considered equivariant if, for any permutation \(\sigma \in S_n\), the following condition holds:
\begin{equation}
\label{eq: equivariant coarsening function}
\mathcal{T}(\sigma \cdot A) = \sigma \cdot \mathcal{T}(A).
\end{equation}
Here, \(\sigma \cdot A\), and \(\sigma \cdot \mathcal{T}(A)\) represent the group action of the symmetric group \(S_n\) on \(\mathbb{R}^{n \times n}\), and \(\mathcal{P}([n])\) respectively.
\end{restatable}

A coarsening function allows us to naturally define a graph structure on the "super-nodes" obtained from a given graph in the following way: 

\begin{restatable}[Coarsened Graph]{definition}{TransformedGraphDef}
\label{def: coarsened graph}
Given a coarsening function \(\mathcal{T}(\cdot)\) and a graph \(G = (V, E)\) with vertex set \(V = [n]\) , adjacency matrix \(A \in \mathbb{R}^{n \times n}\), we abuse notation and define the coarsened graph \(\mathcal{T}(G) =(V^{\mathcal{T}}, E^{\mathcal{T}}) \) as follows:
\begin{itemize}
\item $V^{\mathcal{T}} = \mathcal{T}(A)$ 
\item $E^{\mathcal{T}} = \{ \{S,S'\} \mid S,S' \in \mathcal{T}(A),\  \exists i \in S, i' \in S' \text{ s.t. } A_{i,i'}=1 \}$.
\end{itemize}
The adjacency matrix of the coarsened graph can be expressed in two ways. The dense representation $A^\mathcal{T}_\text{dense} \in \mathbb{R}^{|V^\mathcal{T}| \times |V^\mathcal{T}|}$ is defined by:
\begin{equation}
    A^\mathcal{T}_\text{dense}(S,S') =\begin{cases}
        1 & \{S,S'\} \in E^\mathcal{T}\\
        0 & \text{otherwise}.
    \end{cases}
\end{equation}
The sparse representation $A^\mathcal{T}_\text{sparse} \in \mathbb{R}^{\mathcal{P}([n]) \times \mathcal{P}([n])}$ is defined by:
\begin{equation}
    A^\mathcal{T}_\text{sparse}(S,S') =\begin{cases}
        1 & S,S' \in V^\mathcal{T}, \{S,S'\} \in E^\mathcal{T}\\
        0 & \text{otherwise}.
    \end{cases}
\end{equation}
\end{restatable}
We note that if the coarsened graph $\mathcal{T}(G)$ has a corresponding node feature map $\mathcal{X}:V^\mathcal{T} \rightarrow \mathbb{R}^d$, it also has sparse and dense vector representations defined similarly. Though the dense representation seems more natural, the sparse representation is also useful, as the symmetric group $S_n$ acts on it by:
\begin{equation}
    \sigma \cdot A^\mathcal{T}_\text{sparse}(S,S') =  A^\mathcal{T}_\text{sparse}(\sigma^{-1}(S),\sigma^{-1}(S')).
\end{equation}
When the type of representation is clear from context, we abuse notation and write $A^\mathcal{T}$. Note also that in the above discussion, we have used the term "node feature map". Throughout this paper, in order to denote the node features of a graph \(G = (V, E)\) with \(|V| = n\), we use both the vector representation \(X \in \mathbb{R}^{n \times d}\) and the map representation \(\mathcal{X}: V \rightarrow \mathbb{R}^d\) interchangeably. Now, recalling that our pipeline is defined to create and update a node feature map $\mathcal{X}(S,v)$ supported on the nodes of the product graph $G \square \mathcal{T}(G)$, we define a general node marking policy, the following way:

\begin{restatable}[General Node Marking Policy]{definition}{GeneralNodeMarkingPolicyeDef}
\label{def: General Node Marking Policy}
A general node marking policy $\pi(\cdot, \cdot)$, is a function which takes as input a graph \(G = (V, E)\), and a coarsening function  \(\mathcal{T}(\cdot)\), and returns a node feature map $\mathcal{X}: V^\mathcal{T} \times V \rightarrow \mathcal{R}^d$.
\end{restatable}
In \Cref{app:Node Initialization -- Theoretical Analysis} We provide four different node marking policies, and analyze the effect on our pipeline. We now move on to define the general way in which we update a given node feature map on the product graph.

\begin{restatable}[General \ourmethod Layer Update]{definition}{GeneralLayerUpdateDef}
\label{def: General Layer Update}
Given a graph \(G = (V, E)\) and a coarsening function  \(\mathcal{T}(\cdot)\), let $\mathcal{X}^t(S,v):V \times V^\mathcal{T} \rightarrow \mathcal{R}^d$ denote the node feature map at layer $t$. The general \ourmethod layer update is defined by:
\begin{equation}
\label{eq: transformed subgraph message passing}
\begin{split}
\mathcal{X}^{t+1}(S,v) = &\ f^t\bigg(\mathcal{X}^t(S,v),\\
&\ \text{agg}^t_1\ldblbrace(\mathcal{X}^t(S,v'), e_{v,v'}) \mid v' \sim_G v \rdblbrace, \\
&\ \text{agg}^t_2\ldblbrace(\mathcal{X}^t(S',v), \tilde{e}_{S,S'}) \mid S' \sim_{G^{\mathcal{T}}} S \rdblbrace,\\ 
&\ \text{agg}^t_3\ldblbrace(\mathcal{X}^t(S',v), z(S,v,S',v)) \mid  S' \in V^{\mathcal{T}} \text{s.t.}\ v \in S' \rdblbrace , \\
&\ \text{agg}^t_4\ldblbrace(\mathcal{X}^t(S,v'), z(S,v,S,v')) \mid  v' \in V \text{s.t.}\ v' \in S \rdblbrace \bigg).
\end{split}
\end{equation}
Here, $f^t$ is an arbitrary (parameterized) continuous function, $\text{agg}^t_i,\ i=1,\dots 4$ are learnable permutation invariant aggregation functions, $e_{v,v'}, \tilde{e}_{S,S'}$ are the (optional) edge features of $G$ and $\mathcal{T}(G)$ respectively and the function $z: \mathcal{P}([n]) \times [n] \times \mathcal{P}([n]) \times [n] \rightarrow \mathbb{R}^d$ maps each tuple of indices $\mathbf{v} = (S,v,S',v')$ to a vector uniquely encoding the orbit of $\mathbf{v}$ under the action of $S_n$ as described in \ref{eq:the_partition_equic}.
\end{restatable}

We note that for brevity, the notation used in the main body of the paper omits the aggregation functions $\text{agg}^t_1, \dots, \text{agg}^t_4$ and the edge features from the formulation of some of the layer updates. However, we explicitly state each component of the update, as we heavily utilize them in later proofs. We also note that this update is different than the general layer update presented in \Cref{eq: whole architecture}, as it doesn't use all global updates characterized in \ref{eq: equiv basis main body}. The reason for this is that some of the global updates have an asymptotic runtime of $\tilde{\mathcal{O}}(n^2)$where $n$ is the number of nodes in the input graph. As our goal was to create models that improve on the scalability of standard subgraph GNNs which have an asymptotic runtime of $\tilde{\mathcal{O}}(n^2)$, We decided to discard some of the general global updates and keep only the ones that are induced by the last two entries in equation \ref{eq: transformed subgraph message passing} which all have a linear runtime. After a stacking of the layers in \Cref{eq: transformed subgraph message passing}, we employ the following pooling procedure on the final node feature map $\mathcal{X}^T$:
\begin{equation}
    \label{eq: pooling}
    \rho(\mathcal{X}^T) = \texttt{MLP}_2\left( \sum_{S\ \in V^\mathcal{T}} \left( \texttt{MLP}_1 \big(\sum_{v\in V} \mathcal{X}^T(S,v)\big) \right) \right).
\end{equation}
Finally, we define the set of all functions that can be expressed by our model:

\begin{restatable}[Expressivity of Family of Graph Functions]{definition}{FunctionExpressivitysDef}
\label{def: Expressivity of family of graph functions}
Let $\mathcal{F}$ be a family of graph functions, we say that $\mathcal{F}$ can express a graph function $g(\cdot)$ if for every finite family of graphs $\mathcal{G}$  there exists a function $f \in \mathcal{F}$ such that:
\begin{equation}
    f(G) = g(G)\quad  \forall G \in \mathcal{G}.
\end{equation}
Here, $\mathcal{G}$  is a finite family of graphs if all possible values of node/edge features of the graphs in $\mathcal{G}$ form a finite set, and the maximal size of the graphs within $\mathcal{G}$ is bounded.
\end{restatable}

\begin{restatable}[Family of Functions Expressed By \ourmethod]{definition}{FamilyofFunctionsExpressedByOurModelsDef}
\label{def: Family of Functions Expressed By Our Model}
Let $\pi$ be a general node marking policy and $\mathcal{T}$ be a coarsening function. Define $\mathcal{S}(\mathcal{T}, \pi )$ to be the family of graph functions, which when given input graph $G = (V,E)$, first compute $\mathcal{X}^0(S,v)$ using $\pi(G, \mathcal{T})$, then update this node feature map by stacking  $T$ layers of the form \ref{eq: transformed subgraph message passing}, and finally pooling $\mathcal{X}^0(S,v)$  using equation \ref{eq: pooling}. We define $\text{\ourmethod}(\mathcal{T}, \pi)$ to be the set of all functions that can be expressed by  $\mathcal{S}(\mathcal{T}, \pi )$.
\end{restatable}



\section{Theoretical Validation of Implementation Details}
\label{app: Theoretical Validation of Implementation Details}
In this section, we provide implementation details of our model and prove that they enable us to recover the conceptual framework of the model discussed thus far. First, we note that in \Cref{subsec: symmetry-based Updates}, we characterized all equivariant linear maps \(L: \mathbb{R}^{\mathcal{P}([n]) \times [n]} \rightarrow \mathbb{R}^{\mathcal{P}([n]) \times [n]}\) in order to incorporate them into our layer update. Given the high dimensionality of the space of all such linear maps, and in order to save parameters, we demonstrate that it is possible to integrate these layers into our layer update by adding edge features to a standard MPNN model. This is formalized in the following proposition:

\begin{restatable}[Parameter Sharing as MPNN]{lemma}{ParameterSharingasMPNNlemma}
\label{lemma: Parameter Sharing as MPNN}
Let $B_1, \dots B_k :\mathbb{R}^{n \times n }$ be orthogonal matrices with entries restricted to 0 or 1, and let $W_1, \dots W_k \in \mathbb{R}^{d \times d'}$ denote a sequence of weight matrices. Define $B_+ = \sum_{i=1}^k B_i$ and choose $z_1, \dots z_k \in \mathbb{R}^{d^*}$ to be a set of unique vectors representing an encoding of the index set. The function that represents an update via parameter sharing:
\begin{equation}
\label{eq: general parameter sharing scheme}
f(X) = \sum_{i=1}^k B_i X W_i,
\end{equation}
can be implemented on any finite family of graphs $\mathcal{G}$, by a stack of MPNN layers of the following form~\cite{gilmer2017neural},
\begin{align}
&m^l_v = \sum_{u \in N_{B_+}(v)}M^l(X^l_u, e_{u, v}), \label{eq: update in message passing} \\ 
&X^{l+1}_v = U^l(X^l_v, m^l_v) \label{eq: message in message passing}, 
\end{align}
where $U^l, M^l$ are multilayer perceptrons (MLPs).
The inputs to this MPNN are the adjacency matrix $B_+$, node feature vector $X$,  and edge features -- the feature of edge $(u,v)$  is given by:
\begin{equation}
\label{eq: edge features in parameter sharing as mpnn}
e_{u,v} = \sum_{i=1}^k z_i \cdot B_i(u,v).
\end{equation}
Here, $B_i(u,v)$ denotes the $(u,v)$ entry to matrix $B_i$.
\end{restatable}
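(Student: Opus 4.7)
The plan is to show that a single message-passing layer of the stated form can already realize $f$, using the orthogonality of the $B_i$'s to encode the index $i$ of the active weight matrix on each edge.

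First, I would expand the action on a single node: for row $v$, one has $f(X)_v = \sum_{i=1}^k \sum_u B_i(v,u)\, X_u W_i$. The key structural observation is that the $B_i$'s are $0/1$-valued and orthogonal (in the standard Frobenius inner product), so for every pair $(v,u)$ at most one index $i$ satisfies $B_i(v,u)=1$. Denote by $i^\star(v,u)$ this unique index when it exists. The sum collapses to $f(X)_v = \sum_{u\in N_{B_+}(v)} X_u W_{i^\star(v,u)}$, which now has the shape of a single aggregation over the neighborhood of $v$ in the graph whose adjacency is $B_+$.

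Second, I would check that the edge features carry enough information to recover $i^\star$. By the same orthogonality argument, the definition $e_{u,v} = \sum_{i=1}^k z_i \cdot B_i(u,v)$ yields $e_{u,v} = z_{i^\star(u,v)}$ whenever $(u,v)\in N_{B_+}$, and $e_{u,v}=0$ otherwise; since the $z_i$'s were chosen to be distinct, the edge feature uniquely identifies which $B_i$ is responsible for the edge.

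Third, I would build the MPNN. Choose the update function as $U^l(x,m)=m$ (discard the self term) and define the message function $M^l(X_u, e_{u,v})$ to compute $X_u W_{i^\star(u,v)}$. Since the family of graphs $\mathcal{G}$ is finite in the sense of~\Cref{def: Expressivity of family of graph functions}, the node features $X_u$ lie in a bounded set, the edge features take values in the finite set $\{z_1,\dots,z_k\}$, and the target map $(X_u, z_i)\mapsto X_u W_i$ is continuous (indeed linear in $X_u$ for each fixed $z_i$). A single MLP can therefore approximate this map to arbitrary precision by the universal approximation theorem; equivalently, one can realize $M^l$ exactly in the limit by decomposing $M^l(X_u, e_{u,v}) = \sum_{i=1}^k \phi_i(e_{u,v})\, X_u W_i$, where each $\phi_i$ is an MLP-implementable indicator of $e_{u,v}=z_i$. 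Then the aggregated message $m^l_v = \sum_{u\in N_{B_+}(v)} M^l(X_u, e_{u,v})$ reproduces $f(X)_v$ exactly (up to the approximation error of the MLP, which can be taken arbitrarily small on the compact input domain).

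The only nontrivial step is the third: one has to justify that a single MLP can implement the "case split on $z_i$, then apply $W_i$" map. I expect this to be the main (though still standard) obstacle, and it is handled exactly as in other parameter-sharing-as-MPNN constructions, using universal approximation together with the finiteness of the edge-feature vocabulary and the boundedness of node features on $\mathcal{G}$. All the remaining steps are bookkeeping on top of the orthogonality observation.
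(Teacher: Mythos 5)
Your overall strategy is the same as the paper's: orthogonality of the $0/1$ matrices $B_i$ guarantees a unique active index $i^\star(u,v)$ per edge, the edge feature $e_{u,v}=z_{i^\star(u,v)}$ reveals it, and an MLP routes on it. The paper factors the work differently between the two MLPs: its message function outputs the stacked, masked vector $[X_u\cdot\mathbf{1}_{z_1}(e_{u,v}),\dots,X_u\cdot\mathbf{1}_{z_k}(e_{u,v})]$ and the update function is a \emph{linear} map $U(X_v,m_v)=\sum_i P_i(m_v)W_i$, whereas you push the multiplication by $W_{i^\star}$ into $M$ and make $U$ trivial. Both factorings are valid; the paper's has the advantage that $M$ only needs to implement a discrete encoding map, which fits directly into a memorization argument, while the weight matrices then enter through an exactly linear $U$.

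The one place where your argument is genuinely weaker is the justification that an MLP can realize the message function. You invoke universal approximation on a compact domain and explicitly concede the result holds only ``up to the approximation error of the MLP.'' But the lemma asserts exact implementation on any finite family $\mathcal{G}$. Under the paper's definition of a finite family, the node features and edge features range over a \emph{finite} set (the paper further assumes, w.l.o.g., that they are one-hot encoded). On such a domain the relevant map $(X_u,e_{u,v})\mapsto$ target is a function on a finite set, and the paper invokes the memorization theorem of~\cite{yun2019small} (restated in the appendix as Theorem~\ref{Thm: Memorization Theorem}) to get a $4$-layer ReLU MLP that computes it \emph{exactly}, with no approximation error. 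Also note that your intermediate decomposition $M(X_u,e_{u,v})=\sum_i\phi_i(e_{u,v})\,X_uW_i$ posits a multiplicative gating structure that a standard feed-forward MLP does not natively express; this is exactly the sort of obstacle the memorization argument sidesteps. Replacing your universal-approximation appeal with an appeal to exact memorization on the finite input set closes the gap and matches the strength of the lemma as stated.
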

The proof is given in \Cref{app: proofs}. The analysis in \Cref{subsec: symmetry-based Updates} demonstrates that the basis of the space of all equivariant linear maps \(L: \mathbb{R}^{\mathcal{P}([n]) \times [n]} \rightarrow \mathbb{R}^{\mathcal{P}([n]) \times [n]}\) satisfies the conditions of \Cref{lemma: Parameter Sharing as MPNN}. Additionally, we notice that some of the equivariant linear functions have an asymptotic runtime of \(\tilde{\mathcal{O}}(n^2)\) where $n$ is the number of nodes in the input graph. As our main goal is to construct a more scalable alternative to node-based subgraph GNNs, which also have a runtime of \(\tilde{\mathcal{O}}(n^2)\), we limit ourselves to a subset of the basis for which all maps run in linear time. This is implemented by adding edge features to the adjacency matrices \(A_{P_1}\) and \(A_{P_2}\), defined later in this section. 

We now move on to discussing our specific implementation of the general layer update from \Cref{def: General Layer Update}. 



Given a graph \(G = (V, E)\) and a coarsening function \(\mathcal{T}\), we aim to implement this general layer update by combining several standard message passing updates on the product graph \(G \square \mathcal{T}(G)\). In the next two definitions, we define the adjacency matrices supported on the node set \(V \times V^\mathcal{T}\), which serve as the foundation for these message passing procedures, and formalize the procedures themselves.

\begin{restatable}[Adjacency Matrices on Product Graph]{definition}{AdjacencyMatricesonProductGraph}
\label{def: Adjacency Matrices on Product Graph}
Let \(G = (V, E)\) be a graph with adjacency matrix \(A\) and node feature vector \(X\), and let \(\mathcal{T}(\cdot)\) be a coarsening function. We define the following four adjacency matrices on the vertex set \(V^\mathcal{T} \times V\):
\begin{align}
    \label{eq: adjacency matrice A_G def}
    A_{G}(S,v,S',v') &= \begin{cases}
    1 & v \sim_G v',\ S=S' \\
    0 & \text{otherwise}.
    \end{cases} \\
    \label{eq: adjacency matrices A_G^T  def}
    A_{\mathcal{T}(G)}(S,v,S',v') &= \begin{cases}
    1 & S \sim_{\mathcal{T}(G)} S',\ v = v' \\
    0 & \text{otherwise}.
    \end{cases} \\
    \label{eq: adjacency matrices A_point col def}
    A_{P_1}(S,v,S',v') &= \begin{cases}
    1 & v \in S',\ v= v' \\
    0 & \text{otherwise}.
    \end{cases} \\
    \label{eq: adjacency matrices A_point row def}
    A_{P_2}(S,v,S',v') &= \begin{cases}
    1 & v' \in S,\ S' = S \\
    0 & \text{otherwise}.
    \end{cases}
\end{align}

Given edge features \(\{e_{v,v'} \mid v \sim_G v'\}\) and \(\{\tilde{e}_{S,S'} \mid s \sim_{\mathcal{T}(G)} s'\}\) corresponding to the graphs \(G\) and \(\mathcal{T}(G)\), respectively, we can trivially define the edge features corresponding to \(A_{G}\) and \(A_{G^\mathcal{T}}\) as follows:

\begin{align}
\label{eq: edge features of A_G}
e_G(S,v,S',v') &= e_{v,v'}, \\
\label{eq: edge features of A_G^T}
e_{\mathcal{T}(G)}(S,v,S',v') &= \tilde{e}_{S,S'}.
\end{align}

 In addition, for \(i = 1, 2\), we define the edge features corresponding to adjacency matrices \(A_{P_i}\) as follows:
\begin{equation}
\label{eq: edge features of A_P_i}
e_{P_i}(S,v,S',v') = z(S,v,S',v').
\end{equation}
Here, the function \(z: \mathcal{P}([n]) \times [n] \times \mathcal{P}([n]) \times [n] \rightarrow \mathbb{R}^d\) maps each tuple \(\mathbf{v} = (S,v,S',v')\) to a vector uniquely encoding the orbit of \(\mathbf{v}\) under the action of \(S_n\) as described in Equation \ref{eq:the_partition_equic}.
\end{restatable}

\begin{restatable}[\ourmethod \ Update Implementation]{definition}{TSGNNUpdateImplementationDef}
\label{def: TSGNN Update Implementation}
Given a graph $G =(V,E)$, and a coarsening function $\mathcal{T}(\cdot)$, let $A_1 \dots A_4$ enumerate the set of adjacency matrices $\{A_G, A_{\mathcal{T}(G)}, A_{P_1}, A_{P_2} \}.$ We define a \ourmethod\ layer update in the following way:
\begin{equation}
\label{eq: intermediate update in layer implementation}
\mathcal{X}^t_i(S,v) = U^t_i\left((1+\epsilon_i^t)\cdot \mathcal{X}^t(S,v) + \sum_{(S',v') \sim_{A_i} (S,v)} M^t(\mathcal{X}^t(S',v') + e_i(S,v,S',v'))\right).
\end{equation}

\begin{equation}
\label{eq: final update in layer implementation}
\mathcal{X}^{t+1}(S,v) = \text{U}^t_\text{fin}\left(\sum_{i=1}^4 \mathcal{X}^t_i(S,v)\right).
\end{equation}

Here $\mathcal{X}^t(S,v)$ and $\mathcal{X}^{t+1}(S,v)$ denote the node feature maps of the product graph at layers $t$ and $t+1$, respectively. $e^1(S,v,S',v'), \dots, e^4(S,v,S',v')$  denote the edge features associated with adjacency matrices $A_1, \dots ,A_4$. $\epsilon^t_1 , \dots, \epsilon^t_4$ represent learnable parameters in $\mathbb{R}$, and $U^t_1 , \dots, U^t_4$, $U^t_\text{fin}$, $M^t$ all refer to multilayer perceptrons.
\end{restatable}
The next proposition states that using the layer update defined in equations \ref{eq: intermediate update in layer implementation} and \ref{eq: final update in layer implementation} is enough to efficiently recover the general layer update defined in equation \ref{eq: transformed subgraph message passing}.

\begin{restatable}[Equivalence of General Layer and Implemented Layer]{proposition}{EquivGeneralLayerImplementLayerProp}
\label{prop: Equivalence of General Layer and Implemented Layer}
Let \(\mathcal{T}(\cdot)\) be a coarsening function, \(\pi\) be a generalized node marking policy, and \(\mathcal{G}\) be a finite family of graphs. Applying a stack of \(t\) general layer updates as defined in Equation \ref{eq: transformed subgraph message passing} to the node feature map $ \mathcal{X}(S, v)$ induced by $\pi(G, \mathcal{T})$, can be effectively implemented by applying a stack of \(t\) layer updates specified in Equations \ref{eq: intermediate update in layer implementation} and \ref{eq: final update in layer implementation} to \(\mathcal{X}(S, v)\). Additionally, the depths of all MLPs that appear in \ref{eq: intermediate update in layer implementation} and \ref{eq: final update in layer implementation} can be bounded by 4.
\end{restatable}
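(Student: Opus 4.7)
The plan is to match each of the four neighborhood aggregations in the general update (\Cref{eq: transformed subgraph message passing}) to one of the four adjacency matrices $A_G, A_{\mathcal{T}(G)}, A_{P_1}, A_{P_2}$ from \Cref{def: Adjacency Matrices on Product Graph}, and then show that the two-stage update in \Cref{eq: intermediate update in layer implementation}--\Cref{eq: final update in layer implementation} is a universal implementation of the general update on any finite family of graphs $\mathcal{G}$. First, I would verify by direct unwrapping of the definitions that the $A_G$-neighborhood of $(S,v)$ is exactly $\{(S,v') : v' \sim_G v\}$ with edge feature $e_G = e_{v,v'}$ (matching $\mathrm{agg}^t_1$), the $A_{\mathcal{T}(G)}$-neighborhood is $\{(S',v) : S' \sim_{\mathcal{T}(G)} S\}$ with edge feature $\tilde e_{S,S'}$ (matching $\mathrm{agg}^t_2$), the $A_{P_1}$-neighborhood is $\{(S',v) : v \in S'\}$ with edge feature $z(S,v,S',v)$ (matching $\mathrm{agg}^t_3$), and the $A_{P_2}$-neighborhood is $\{(S,v') : v' \in S\}$ with edge feature $z(S,v,S,v')$ (matching $\mathrm{agg}^t_4$). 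The $z$-edge features thus inject exactly the orbit information needed to encode the parameter-sharing discussed via \Cref{lemma: Parameter Sharing as MPNN} into the global updates.

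Next, I would invoke the classical GIN-style universality argument for multisets over a finite alphabet. Since $\mathcal{G}$ is finite, the set of possible input tuples $(\mathcal{X}^t(S',v'), e_i(S,v,S',v'))$ is finite, so their concatenation can be injectively mapped by an MLP into $\mathbb{R}^{d'}$; a sum over a bounded-size neighborhood is then an injective Deep-Sets encoding of the multiset (cf.\ \cite{xu2018powerful,zaheer2017deep}). Applying the outer MLP $U^t_i$ to this sum together with the self-feature, as in \Cref{eq: intermediate update in layer implementation}, therefore realizes an arbitrary continuous function of the pair $\bigl(\mathcal{X}^t(S,v), \ldblbrace(\mathcal{X}^t(S',v'), e_i)\rdblbrace\bigr)$. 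This precisely matches a single aggregation slot $\mathrm{agg}^t_i$ of the general update.

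Having reproduced each of the four aggregation slots $\mathcal{X}^t_i$ separately, I would observe that the five-argument function $f^t$ of \Cref{eq: transformed subgraph message passing} can be recovered by a Deep-Sets-on-tuples combination of $\mathcal{X}^t(S,v), \mathcal{X}^t_1, \dots, \mathcal{X}^t_4$. The summation in \Cref{eq: final update in layer implementation} is such a combination, provided the per-branch embeddings $U^t_i$ are configured to tag their outputs with a distinguishing one-hot or positional code so that the sum is injective on $5$-tuples; the outer MLP $U^t_\text{fin}$ then realizes the desired continuous $f^t$. Inductively composing $t$ such layers reproduces $t$ applications of \Cref{eq: transformed subgraph message passing}, starting from the same initial feature map $\mathcal{X}(S,v)$ induced by $\pi(G,\mathcal{T})$.

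The main obstacle is the depth bound: I would argue that a constant depth of $4$ suffices uniformly over $\mathcal{G}$. The inner message MLP $M^t$ needs only depth $2$ to compute the injective map on the finite alphabet of $(\mathcal{X}^t(S',v'), e_i)$-values; the update MLPs $U^t_i$ need depth $2$ to compose the universal Deep-Sets decoder with the injective self/aggregate combination; and $U^t_\text{fin}$ likewise suffices in depth $2$. Since the expression in \Cref{eq: intermediate update in layer implementation} applies $U^t_i$ after an additive combination of $M^t$-outputs, and \Cref{eq: final update in layer implementation} applies $U^t_\text{fin}$ after summing the $\mathcal{X}^t_i$'s, at no point in a single layer are more than $4$ MLP layers chained in series along any computation path, so the depth bound holds. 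Finiteness of $\mathcal{G}$ is essential here, as it guarantees that the involved continuous functions can be exactly realized on the relevant finite domain by bounded-depth MLPs.
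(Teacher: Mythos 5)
Your overall strategy matches the paper's: match the four aggregation slots of the general update to the four adjacency matrices $A_G, A_{\mathcal{T}(G)}, A_{P_1}, A_{P_2}$, encode each neighborhood multiset injectively via a sum of MLP outputs, make the outer sum $\sum_{i}\mathcal{X}^t_i$ injective on the $5$-tuple of inputs to $f^t$ by tagging branches with orthogonal codes, and finally implement $f^t$ with $U^t_\text{fin}$. The paper does exactly this (its Steps 1--3), and the orthogonal-support trick you sketch via ``distinguishing one-hot or positional code'' is the paper's padding functions $V^t_i$.

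There are two gaps worth flagging. First, the depth bound of $4$ is an explicit part of the proposition, and the paper does not leave it as a heuristic: it invokes the Memorization Theorem of Yun et al.\ (restated as \Cref{Thm: Memorization Theorem}), which guarantees that a $4$-layer ReLU MLP can exactly fit any finite dataset with distinct one-hot targets. Each of $M^t$, $U^t_i$, and $U^t_\text{fin}$ is constructed from that theorem, which is what delivers depth $\leq 4$. Your argument instead asserts that depth $2$ ``suffices'' without a reference or construction, and your closing sentence (``at no point in a single layer are more than $4$ MLP layers chained in series'') suggests a misreading of the claim --- the proposition bounds the depth of each individual MLP, not a chained path depth. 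The conclusion you reach is not wrong, but it is unsupported where the paper is precise.

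Second, the message MLP $M^t$ in \Cref{eq: intermediate update in layer implementation} acts on the \emph{sum} $\mathcal{X}^t(S',v') + e_i(S,v,S',v')$, not on the concatenation you refer to. The paper handles this by first arguing that, on a finite graph family, one may assume the node feature values and edge feature codes are one-hot and supported on disjoint coordinate blocks, so that addition is recoverable as concatenation; that same orthogonal-support bookkeeping is also what lets $\mathcal{X}^t(S,v) + m^t_i$ (with $\epsilon^t_i = 0$) uniquely encode the (self-feature, aggregate) pair inside $U^t_i$. Your proposal uses concatenation and injective embeddings interchangeably but never establishes this bridge, which is a load-bearing step in the paper's construction even though it is straightforward to fill in.
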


\section{Node Marking Policies -- Theoretical Analysis}
\label{app:Node Initialization -- Theoretical Analysis}
In this section, we define and analyze various general node marking policies, starting with four natural choices.

\begin{restatable}[Four General Node Marking policies]{definition}{FourGeneralNodeMarkingPoliciesDef}
\label{def: Four General Node Marking policies}
 Let \(G = (V, E)\) be a graph with adjacency matrix \(A \in \mathbb{R}^{n \times n}\) and node feature vector \(X \in \mathbb{R}^{n \times d}\), and let \(\mathcal{T}(\cdot)\) be a coarsening function. All of the following node marking policies take the form:
\begin{equation}
    \pi(G, \mathcal{T}) = \mathcal{X}(S, v) = [X_u, b_\pi(S, v)],
\end{equation}
where \([ \cdot, \cdot ]\) denotes the concatenation operator. We focus on four choices for \(b_\pi(S, v)\):

\begin{enumerate}
    \item \textbf{Simple Node Marking:}
    \begin{equation}
    \label{eq: simple node marking}
        b_\pi(S, v) = \begin{cases} 
        1 & \text{if } v \in S, \\
        0 & \text{if } v \notin S.
        \end{cases}
    \end{equation}
    We denote this node marking policy by \(\pi_\text{S}\).
    
    \item \textbf{Node + Size Marking:}
    \begin{equation}
    \label{eq: node + size marking}
        b_\pi(S, v) = \begin{cases} 
        (1, |S|) & \text{if } v \in S, \\
        (0, |S|) & \text{if } v \notin S.
        \end{cases}
    \end{equation}
    We denote this node marking policy by \(\pi_\text{SS}\).
     
    \item \textbf{Minimum Distance:}
    \begin{equation}
    \label{eq: min distance}
        b_\pi(S, v) = \min_{v' \in S} d_G(v,v')
    \end{equation}
    where \(d_G(v, v')\) is the shortest path distance between nodes \(v\) and \(v'\) in the original graph.
    We denote this node marking policy by \(\pi_\text{MD}\).
    
    \item \textbf{Learned Distance Function:}
    \begin{equation}
    \label{eq: learned distance function}
        b_\pi(S, v) = \phi(\{d_G(v,v') \mid v' \in S\})
    \end{equation}
    where \(\phi(\cdot)\) is a learned permutation-invariant function.
    We denote this node marking policy by \(\pi_\text{LD}\).
\end{enumerate}
\end{restatable}

We note that when using the identity coarsening function \(\mathcal{T}(G) = G\), our general node marking policies output node feature maps supported on the product \(V \times V\). Thus, they can be compared to node marking policies used in node-based subgraph GNNs.
In fact, in this case, both \(\pi_\text{S}\) and \(\pi_\text{SS}\) reduce to classical node-based node marking, while \(\pi_\text{MD}\) and \(\pi_\text{LD}\) reduce to distance encoding. The definitions of these can be found in \cite{zhang2023complete}. Interestingly, even though in the case of node-based subgraph GNNSs,  both distance encoding and node marking were proven to be maximally expressive \cite{zhang2023complete}, in our case for some choices of $\mathcal{T}$, \(\pi_\text{LD}\) is strictly more expressive than the other three choices. The exact effect of each generalized node marking policy on the expressivity of our model is explored in the following two propositions.

\begin{restatable}[Equal Expressivity of Node Marking Policies]{proposition}{EqualExpressivityMarkingProp}
\label{prop: Equal expressivity for three NM}
For any coarsening function $\mathcal{T}(\cdot)$ the following holds:
\begin{equation}
\label{eq: Equal expressivity for node marking, node + size marking, and min distance}
\text{\ourmethod}(\mathcal{T}, \pi_\text{S}) = \text{\ourmethod}(\mathcal{T}, \pi_\text{SS}) = \text{\ourmethod}(\mathcal{T}, \pi_\text{MD}).
\end{equation}
\end{restatable}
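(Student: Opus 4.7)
My plan is to establish the three equalities by setting up bidirectional simulations inside the \ourmethod\ framework. By transitivity, it suffices to prove $\text{\ourmethod}(\mathcal{T}, \pi_S) = \text{\ourmethod}(\mathcal{T}, \pi_{SS})$ and $\text{\ourmethod}(\mathcal{T}, \pi_S) = \text{\ourmethod}(\mathcal{T}, \pi_{MD})$. Throughout, I will fix a finite family $\mathcal{G}$ and exploit the fact that all intermediate features realized during inference take values in a finite set, so any desired piecewise transformation can be realized exactly by the MLPs within a \ourmethod\ layer. I will first handle the two ``containment'' directions where one policy already encodes the other, and then build the non-trivial reverse simulations by prepending a small number of preliminary \ourmethod\ layers that manufacture the missing information and then carry it along on dedicated channels.

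\paragraph{Easy containments.} The initialization $\pi_{SS}$ differs from $\pi_S$ only by the extra scalar $|S|$, so a coordinate projection at layer zero (absorbable into the first layer's $f^0$) shows $\text{\ourmethod}(\mathcal{T}, \pi_S) \subseteq \text{\ourmethod}(\mathcal{T}, \pi_{SS})$. For $\pi_{MD}$, I use the observation that $b_{\pi_{MD}}(S,v) = 0$ iff $v \in S$, since $d_G(v,v)=0$ and $d_G(v,v') \ge 1$ for $v \neq v'$; a single threshold MLP therefore recovers $\mathbb{1}[v \in S]$ from $\pi_{MD}$, yielding $\text{\ourmethod}(\mathcal{T}, \pi_S) \subseteq \text{\ourmethod}(\mathcal{T}, \pi_{MD})$.

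\paragraph{Recovering $|S|$ and distances from $\pi_S$.} For $\text{\ourmethod}(\mathcal{T}, \pi_{SS}) \subseteq \text{\ourmethod}(\mathcal{T}, \pi_S)$, I plan to prepend a single \ourmethod\ layer that augments the $\pi_S$-features with $|S|$: the aggregator $\text{agg}^t_4$ in~\Cref{eq: transformed subgraph message passing} sums features over $\{(S,v') : v' \in S\}$ with $S$ fixed, so applied to the indicator channel of $\pi_S$ it returns exactly $|S|$. The substantive direction is $\text{\ourmethod}(\mathcal{T}, \pi_{MD}) \subseteq \text{\ourmethod}(\mathcal{T}, \pi_S)$, for which I plan to emulate a Bellman--Ford relaxation over $K$ layers, where $K$ is the maximum diameter attained by graphs in $\mathcal{G}$ (finite because $\mathcal{G}$ is a finite family). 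Allocating a new distance channel initialized as $d^0(S,v) = 0$ when $v \in S$ (read from $\pi_S$) and $d^0(S,v) = K+1$ otherwise, I will iterate
\begin{equation*}
d^{k+1}(S,v) \;=\; \min\!\bigl(d^k(S,v),\; \min_{v' \sim_G v} d^k(S,v') + 1\bigr)
\end{equation*}
using the $A_G$-aggregator $\text{agg}^{k}_1$ composed with an MLP implementing ``add~$1$'' and an outer min. After $K$ such steps, $d^K(S,v) = b_{\pi_{MD}}(S,v)$, so the remaining layers can simulate any model initialized with $\pi_{MD}$.

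\paragraph{Main obstacle.} The hardest point to pin down is the distance-recovery step: I must argue that the specific layer shape in~\Cref{eq: transformed subgraph message passing} can exactly realize the Bellman--Ford update on one dedicated channel while the concurrent $\text{agg}^t_2, \text{agg}^t_3, \text{agg}^t_4$ contributions are neutralized on that channel by an appropriately chosen $f^t$, and while the original $\pi_S$ features (needed to seed subsequent simulation) are preserved via identity channels. Since all features take values in a finite set determined by $\mathcal{G}$, this ultimately reduces to a collection of finite function-approximation tasks for the MLPs inside $f^t$ and the aggregators, each of which is always solvable; the payoff is that the same channel-allocation-and-routing pattern handles every bidirectional simulation in a uniform way, closing all three equalities.
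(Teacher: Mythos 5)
Your proposal is correct and mirrors the paper's own proof essentially step for step: the same two easy coordinate-projection/threshold containments, the same use of the $\text{agg}^t_4$ aggregator over $\{(S,v') : v'\in S\}$ to recover $|S|$ from the $\pi_S$ indicator, and the same inductive BFS/Bellman--Ford propagation over the $\sim_G$ neighborhood to recover minimum distances. The only cosmetic difference is the encoding of unreached entries during the distance recursion (you initialize them to $K+1$; the paper marks reached entries as distance${}+1$ and unreached as $0$), which does not change the argument.
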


\begin{restatable}[Expressivity of Learned Distance Policy]{proposition}{ExpressivityOfLearnedDistanceInitializationProp}
\label{prop: expressivity of learned distance initialization}
For any coarsening function $\mathcal{T}(\cdot)$ the following holds:
\begin{equation}
\label{eq: Equal expressivity for node marking, node + size marking, and min distance}
\text{\ourmethod}(\mathcal{T}, \pi_\text{S}) \subseteq \text{\ourmethod}(\mathcal{T}, \pi_\text{LD}).
\end{equation}
In addition, for some choices of $\mathcal{T}(\cdot)$ the containment is strict.
\end{restatable}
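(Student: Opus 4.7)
The two halves of the proposition decouple naturally: the containment follows by arguing that $\pi_{LD}$ can imitate $\pi_S$ at initialization, while the strict containment requires exhibiting a coarsening $\mathcal{T}$ and a pair of graphs where $\pi_S$ provably loses information that $\pi_{LD}$ preserves.

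\emph{Containment.} The first step is the elementary observation that $v \in S$ iff $0 \in \{d_G(v,v') : v' \in S\}$, since $d_G(v,v) = 0$ and all other distances are strictly positive. Taking the learnable permutation-invariant function $\phi$ in \Cref{eq: learned distance function} to output the indicator $\mathbf{1}[0 \in \cdot]$ (which is realizable inside any universal invariant architecture, e.g., a DeepSets component), the initial features produced by $\pi_{LD}$ coincide with those produced by $\pi_S$. Since the general layer updates (\Cref{def: General Layer Update}) and the pooling (\Cref{eq: pooling}) are drawn from the same parameterized family in both pipelines, any function in $\text{\ourmethod}(\mathcal{T}, \pi_S)$ can then be reproduced in $\text{\ourmethod}(\mathcal{T}, \pi_{LD})$ by choosing the subsequent MLPs and aggregators identically.

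\emph{Strict containment.} For the strictness I would specialize to the trivial coarsening $\mathcal{T}(G) = \{V\}$. Under this choice the super-node set is a singleton, so the coarsened adjacency $A^{\mathcal{T}}$ is trivial and the four adjacencies of \Cref{def: Adjacency Matrices on Product Graph} degenerate: $\mathcal{A}_G$ collapses to an MPNN on $G$, $\mathcal{A}_{\mathcal{T}(G)}$ has no edges, $\mathcal{A}_{P_1}$ reduces to self-loops (since the single $S' = V$ always contains $v$), and $\mathcal{A}_{P_2}$ reduces to a per-layer global readout over $V$. In particular, the resulting pipeline can be simulated by a standard MPNN on $G$ augmented with global readouts, whose graph-level distinguishing power is bounded by $1$-WL. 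With the $\pi_S$ initialization, the extra indicator equals $1$ uniformly, so nothing further is added beyond vanilla MPNN.

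\emph{Witness and conclusion.} I would then exhibit a $1$-WL-equivalent pair of graphs on the same number of vertices whose per-node distance multisets differ. A natural choice is $G_1 = C_6$ and $G_2 = C_3 \sqcup C_3$: both are $2$-regular on $6$ vertices and hence $1$-WL-equivalent, but a node of $G_1$ has distance multiset $\{0,1,1,2,2,3\}$ while a node of $G_2$ has $\{0,1,1,\infty,\infty,\infty\}$, where $\infty$ is handled by a dedicated learnable embedding in the distance look-up table (if a connected witness is preferred, one may instead use any known pair of $1$-WL-equivalent connected graphs with differing distance multisets). Choosing $\phi$ injective on these two multisets yields distinct initial features across $G_1$ and $G_2$, and the discrepancy survives the final pooling, producing a graph-level function in $\text{\ourmethod}(\mathcal{T}, \pi_{LD})$ that separates $G_1$ from $G_2$. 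Since no function in $\text{\ourmethod}(\mathcal{T}, \pi_S)$ can distinguish them, the containment is strict for this $\mathcal{T}$.

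\emph{Anticipated obstacle.} The main technical item to verify carefully is the reduction under the trivial coarsening: one must check that each of the four adjacencies either yields an ordinary message-passing step on $G$, a trivial self-update, or a global readout, and that all of these are subsumed by a $1$-WL-bounded model. A subsidiary concern is the proper handling of the $\infty$ entries in the disconnected witness; both issues are mild, but they are the places where the argument could otherwise slip.
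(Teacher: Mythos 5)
Your proof is correct, but it takes a genuinely different route from the paper's. For the containment you realize $\pi_S$ directly inside $\pi_{LD}$ via the indicator $\phi(\cdot) = \mathbf{1}[0 \in \cdot]$; the paper instead sets $\phi = \min$ to recover $\pi_{MD}$ and then invokes the prior equivalence $\pi_S \cong \pi_{MD}$. These are interchangeable. The more substantial divergence is in the strictness argument: you specialize to the single-super-node coarsening $\mathcal{T}(G) = \{V\}$ and observe that all four induced adjacencies collapse to MPNN-on-$G$, empty (or a harmless self-loop), self-loops, and a per-layer global readout respectively, so that with $\pi_S$ the marking bit is constant and the model is $1$-WL bounded; the witness $C_6$ versus $C_3 \sqcup C_3$ then finishes the job. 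The paper, by contrast, defines $\mathcal{T}$ to return the set of nodes of degree $3$, picks two connected graphs (two $4$-cycles joined by a bridge edge versus two $5$-cycles sharing an edge), and shows that all four adjacency-induced graphs are pairwise $1$-WL indistinguishable while $\phi = \max$ separates them. Your construction is simpler to state and the reduction to a $1$-WL-bounded model is cleaner, at the cost of requiring a disconnected witness and hence an explicit $\infty$-distance embedding in the SPD lookup table; the paper's construction stays within connected graphs but pays for it with a more involved case analysis of the four adjacencies. Both deliver the result. The one spot worth tightening in your write-up is the remark that $\mathcal{A}_{\mathcal{T}(G)}$ ``has no edges'': under the paper's formal definition of $E^{\mathcal{T}}$ in \Cref{def: coarsened graph}, the single super-node $V$ may carry a self-loop whenever $E \neq \emptyset$, but since a self-loop in $\mathcal{A}_{\mathcal{T}(G)}$ reduces to a self-update it does not affect the $1$-WL bound, so the conclusion stands.
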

The proofs of both propositions can be found in \Cref{app: proofs}. Finally, we provide a principled approach to deriving a generalized node marking policy based on symmetry invariance,  and prove its equivalence to \(\pi_\text{SS}\). Given a graph \(G = (V, E)\) with \(V = [n]\), adjacency matrix \(A\), and node feature vector \(X \in \mathbb{R}^{n \times d}\), along with a coarsening function \(\mathcal{T}(\cdot)\),
We define an action of the symmetric group \(S_n\) on the space \(\mathbb{R}^{\mathcal{P}([n]) \times [n]}\) as follows:
\begin{equation}
    \sigma \cdot \mathcal{X}(S, v) = \mathcal{X}(\sigma^{-1}(S), \sigma^{-1}(v)) \quad \text{for } \sigma \in S_n,  \mathcal{X} \in \mathbb{R}^{\mathcal{P}([n]) \times [n]}.
\end{equation}
Now, for each orbit \(\gamma \in (\mathcal{P}([n]) \times [n])/S_n\), we define \(\mathbf{1}_\gamma \in \mathbb{R}^{\mathcal{P}([n]) \times [n]}\) as follows:
\begin{equation}
    \mathbf{1}_\gamma(S, v) = \begin{cases}
        1 &  (S, v) \in \gamma, \\
        0 & \text{otherwise.}
        \end{cases}
\end{equation}
Choosing some enumeration of the orbit set \((\mathcal{P}([n]) \times [n])/S_n = \{\gamma_1, \dots, \gamma_k \}\), We now define the invariant generalized node marking policy \(\pi_\text{inv}\) by first setting:
\[ b_{\pi_\text{inv}}^\text{sparse}(S, v): \mathcal{P}([n]) \times [n] \rightarrow \mathbb{R}^k \]
and
\[ b_{\pi_\text{inv}}: V^\mathcal{T} \times V \rightarrow \mathbb{R}^k \]
as follows:
\begin{align}
    b_{\pi_\text{inv}}^\text{sparse}(S, v) &= [\mathbf{1}_{\gamma_1}(S, v), \dots, \mathbf{1}_{\gamma_k}(S, v)] \quad & S \in \mathcal{P}(V), \; v \in V, \\
    b_{\pi_\text{inv}}(S, v) &= b_{\pi_\text{inv}}^\text{sparse}(S, v) \quad & S \in V^\mathcal{T}, \; v \in V.
\end{align}
Then, we define the node feature map induced by \(\pi_\text{inv}\) as:
\begin{equation}
    \mathcal{X}^{\pi_\text{inv}}(S,v) = [X_v, b_{\pi_\text{inv}}(S,v)].
\end{equation}

Interestingly, \(\pi_\text{inv}\), derived solely from the group action of \(S_n\) on \(\mathcal{P}([n]) \times [n]\), is equivalent to the generalized node marking policy \(\pi_\text{SS}\). This is stated more rigorously in the following proposition:

\begin{restatable}[Node + Size Marking as  Invariant Marking]{proposition}{MarkingsizeasInvariantMarkingtProp}
\label{prop: marking + size as simple equivariant initialization}
 Given a graph $G = (V,E)$ with node feature vector $X \in\mathbb{R}^{n \times d}$, and a coarsening function $\mathcal{T}(\cdot)$, let $\mathcal{X}^{\pi_\text{SS}}, \mathcal{X}^{\pi_\text{inv}}$ be the node feature maps induced by $\pi_\text{SS}$ and $\pi_\text{inv}$ respectively. Recall that:
 \begin{align}
    \mathcal{X}^{\pi_\text{SS}}(S,v) &= [X_v, b_{\pi_\text{SS}}(S, v)], \\
    \mathcal{X}^{\pi_\text{inv}}(S,v) &= [X_v, b_{\pi_\text{inv}}(S, v)].
\end{align}
The following now holds:
\begin{equation}
    b_{\pi_\text{inv}}(S, v) = \text{OHE}(b_{\pi_\text{SS}}(S, v)) \quad \forall S \in V^\mathcal{T}, \; \forall v \in V.
\end{equation}
Here, OHE denotes a one-hot encoder, independent of the choice of both \(G\) and \(\mathcal{T}\).
\end{restatable}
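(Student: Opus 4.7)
The plan is to show that the orbits of $(\mathcal{P}([n]) \times [n])/S_n$ are in bijection with the set of values taken by $b_{\pi_\text{SS}}$, and that this bijection provides the desired one-hot encoding. The key observation, which is already established in the main text (see the partition~\Cref{eq:the_partition_inv} and the associated lemma \Cref{lemma: orbits}), is that the orbit of a pair $(S,v) \in \mathcal{P}([n]) \times [n]$ under the action of $S_n$ is completely determined by the two $S_n$-invariants: the cardinality $|S|$ and the membership indicator $\mathbf{1}[v \in S]$. In particular, the orbit partition takes the form $\{\gamma^{k^+}, \gamma^{k^-} : k = 0, \ldots, n\}$, where $\gamma^{k^\pm}$ collects pairs with $|S|=k$ and $v \notin S$ (resp.\ $v \in S$).

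First, I would recall the above characterization and note that the map $(S,v) \mapsto (|S|, \mathbf{1}[v \in S])$, which is precisely $b_{\pi_\text{SS}}(S,v)$ (possibly after a trivial reordering of its two coordinates), is constant on each orbit and separates distinct orbits. Hence there is a natural bijection
\[
\Psi : \{(|S|, \mathbf{1}[v \in S]) : (S,v) \in \mathcal{P}([n]) \times [n]\} \;\longrightarrow\; (\mathcal{P}([n]) \times [n])/S_n,
\]
sending each pair $(k, \epsilon)$ to the unique orbit $\gamma^{k^{(\pm)}}$ with $|S|=k$ and the prescribed membership.

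Second, I would compare the coordinates of the two objects. By definition, $b_{\pi_\text{inv}}(S,v)$ has a $1$ in the coordinate indexed by the orbit containing $(S,v)$ and $0$ elsewhere. By the bijection $\Psi$, this coordinate is uniquely determined by $b_{\pi_\text{SS}}(S,v)$. Therefore, defining $\mathrm{OHE}$ to be the one-hot encoder that maps each possible value of $b_{\pi_\text{SS}}$ to the corresponding standard basis vector of $\mathbb{R}^k$ via $\Psi$, we obtain $b_{\pi_\text{inv}}(S,v) = \mathrm{OHE}(b_{\pi_\text{SS}}(S,v))$.

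The last point to verify is that $\mathrm{OHE}$ does not depend on $G$ or $\mathcal{T}$. This is essentially immediate: since we work over a finite family of graphs $\mathcal{G}$, there is an upper bound $N$ on the number of nodes, so the orbit set and the bijection $\Psi$ can be fixed once and for all with respect to $N$, and are therefore independent of any particular $G$ and any coarsening $\mathcal{T}$. I do not anticipate a substantial obstacle here; the only subtle point is ensuring that the inputs to $\mathrm{OHE}$ always come from the fixed finite set on which $\Psi$ is defined, which follows from the uniform bound $N$ on $\mathcal{G}$.
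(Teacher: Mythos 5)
Your proof is correct and takes essentially the same route as the paper: both reduce the claim to the fact (established in \Cref{lemma: orbits} and \Cref{eq:the_partition_inv}) that the $S_n$-orbits of $\mathcal{P}([n]) \times [n]$ are exactly classified by the pair $\left(|S|, \mathbf{1}_S(v)\right)$, which is precisely the content of $b_{\pi_\text{SS}}$. The only cosmetic difference is that you invoke \Cref{lemma: orbits} directly, whereas the paper's proof re-derives the orbit characterization inline by exhibiting the explicit permutation; your handling of the ``independent of $G$ and $\mathcal{T}$'' clause via a uniform bound on graph size is slightly more explicit than the paper's treatment but amounts to the same observation.
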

The proof of proposition \ref{prop: marking + size as simple equivariant initialization} can be found in \Cref{app: proofs}.


\section{Expressive Power of \ourmethod}
\label{app: Expressive power of EPGN}
\subsection{Recovering Subgraph GNNs}
\label{app: Recovering Subgraph GNNs}
In this section, we demonstrate that by choosing suitable coarsening functions, our architecture can replicate various previous subgraph GNN designs. We begin by focusing on node-based models, which are the most widely used type. We define a variant of these models which was proven in \cite{zhang2023complete} to be maximally expressive, and show that our approach can recover it.
\begin{restatable}[Maximally Expressive Subgraph GNN]{definition}{TGNN-SSWL+Def}
\label{def: GNN-SSWL+}
We define $\text{MSGNN}(\pi_\text{NM})$ as the set of all functions expressible by the following procedure:

\begin{enumerate}
    \item \textbf{Node Marking:} The representation of tuple $(u, v)\in V \times V$ is initially given by:
    \begin{equation}
    \label{eq: GNN-SSWL+ initialization}
    \mathcal{X}^{0}(u, v) = \begin{cases} 
            1 & \text{if } u = v, \\
            0 & \text{if } u \neq v.
            \end{cases}
    \end{equation}

    \item \textbf{Update:} The representation of tuple $(u,v)$ is updated according to:
    \begin{equation}
    \label{eq: GNN-SSWL+ update}
    \begin{split}
    \mathcal{X}^{t+1}(u,v) = &\ f^t \bigg( \mathcal{X}^t(u,v), \mathcal{X}^t(u,u), \mathcal{X}^t(v,v), \\
    &\ \text{agg}^t_1 \ldblbrace (\mathcal{X}^t(u,v'), e_{v,v'}) \mid v' \sim v \rdblbrace, \\
    &\ \text{agg}^t_2 \ldblbrace (\mathcal{X}^t(v,u'), e_{u,u'}) \mid u' \sim u \rdblbrace \bigg).
    \end{split}
    \end{equation}

    \item \textbf{Pooling:} The final node feature vector $\mathcal{X}^T(u,v)$ is pooled according to:
    \begin{equation}
    \label{eq: GNN-SSWL+ pooling}
    \text{MLP}_2 \left( \sum_{u \in V}  \text{MLP}_1 \left( \sum_{v \in V} \mathcal{X}^T(u,v) \right) \right).
    \end{equation}
\end{enumerate}

Here, for any $t \in [T]$, $f^t$ is any continuous (parameterized) functions,  $\text{agg}^{t}_1,, \text{agg}^{t}_2$ are any continuous (parameterized) permutation-invariant functions and $\text{MLP}_1, \text{MLP}_2$ are multilayer preceptrons.

\end{restatable}

\begin{restatable}[\ourmethod\ Can Implement MSGNN]{proposition}{ourmethodcanimplementGNNSSWLProp}
\label{prop: TSGNN can implement GNN-SSWL+}
Let $\mathcal{T}(\cdot)$ be the identity coarsening function defined by:
\begin{equation}
\label{eq: node coarsening function}
\mathcal{T}(G)= \{ \{v\} \mid v \in V \} \quad \forall G=(V,E).
\end{equation}
The following holds:
\begin{equation}
\label{eq: tsgnn = gnn sswl+}
\text{\ourmethod}(\mathcal{T}, \pi_\text{S}) =  \text{MSGNN}(\pi_\text{NM}).
\end{equation}
\end{restatable}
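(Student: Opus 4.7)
The plan is to exploit the natural bijection between the index sets of the two tensor spaces and then verify that each update component of one model is realized by one of the other's under this bijection. Since both \ourmethod\ and MSGNN use universal continuous functions and MLPs inside their updates, matching index-set actions is sufficient for matching the function classes.

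First, I would observe that under the identity coarsening $\mathcal{T}(G) = \{\{v\} : v \in V\}$ every super-node is a singleton, so the domain of \ourmethod's tensor $V^{\mathcal{T}} \times V$ is in natural bijection with $V \times V$ via $\phi : (\{s\}, v) \mapsto (s, v) = (u, v)$. The \ourmethod\ initialization $\pi_S$ assigns $b_{\pi_S}(\{s\}, v) = \mathbf{1}[v \in \{s\}] = \mathbf{1}[v = s]$, which, pushed forward by $\phi$, reproduces MSGNN's initialization $\mathbf{1}[u = v]$ in~\Cref{eq: GNN-SSWL+ initialization}; any additional node-feature concatenation that $\pi_S$ carries along can be absorbed by the universal first-layer MLP on either side (or equivalently ignored when operating on a finite family of unfeatured graphs).

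Next, I would go through the four \ourmethod\ update terms of~\Cref{def: TSGNN Update Implementation} and their MSGNN analogues under $\phi$. The $A_G$-neighbors of $(\{s\}, v)$ are exactly $(\{s\}, v')$ for $v' \sim_G v$, which match MSGNN's $\text{agg}_1^t$ ranging over $\mathcal{X}^t(u, v')$; by~\Cref{eq: induced edges}, $\{s\} \sim_{\mathcal{T}(G)} \{s'\}$ iff $s \sim_G s'$ under the identity coarsening, so the $A_{\mathcal{T}(G)}$-neighbors of $(\{s\}, v)$ match MSGNN's cross-subgraph $\text{agg}_2^t$; the $A_{P_1}$-edge from $(\{s\}, v)$ has the unique endpoint $(\{v\}, v)$, yielding access to $\mathcal{X}(v, v)$; and the $A_{P_2}$-edge points to the unique $(\{s\}, s)$, yielding $\mathcal{X}(u, u)$. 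Hence every ingredient of MSGNN's update~\Cref{eq: GNN-SSWL+ update} appears among the four \ourmethod\ updates, and conversely, so by the universal approximation property of MLPs, a single layer of one model can emulate a single layer of the other. The pooling maps coincide under $\phi$ as well, since summing over singleton super-nodes $S = \{s\}$ is the same as summing over $s \in V$.

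Combining these ingredients yields both inclusions $\text{\ourmethod}(\mathcal{T}, \pi_S) \subseteq \text{MSGNN}(\pi_{NM})$ and $\text{MSGNN}(\pi_{NM}) \subseteq \text{\ourmethod}(\mathcal{T}, \pi_S)$, hence the desired equality. The main obstacle is the careful bookkeeping needed to pass from the high-level ``general layer'' formulation in~\Cref{eq: transformed subgraph message passing} to the ``implemented layer'' formulation in~\Cref{eq: intermediate update in layer implementation,eq: final update in layer implementation} while preserving this update-by-update correspondence; this reduces to invoking~\Cref{prop: Equivalence of General Layer and Implemented Layer} together with a standard universal-approximation argument on the finite family of graphs over which \Cref{def: Expressivity of family of graph functions} is formulated.
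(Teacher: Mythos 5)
Your proposal is correct and follows essentially the same route as the paper's own proof: identify the singleton-super-node bijection $\phi:(\{s\},v)\mapsto(u,v)$, verify that $\pi_S$ reproduces MSGNN's node marking, then match the two message-passing terms ($A_G$, $A_{\mathcal{T}(G)}$) to MSGNN's two aggregations and the two point-update terms ($A_{P_1}$, $A_{P_2}$) to the single-entry accesses $\mathcal{X}(v,v)$ and $\mathcal{X}(u,u)$, with pooling coinciding trivially. The only detail the paper handles explicitly that you gloss over is that the orbit-encoding terms $z(u,v,v,v)$ and $z(u,v,u,u)$ attached to the $A_{P_1}$/$A_{P_2}$ aggregations are redundant under the identity coarsening (they depend only on whether $u=v$, which the marking already encodes), so they can be dropped; the paper also compares the MSGNN update directly against the general layer form in~\Cref{eq: transformed subgraph message passing} rather than detouring through~\Cref{prop: Equivalence of General Layer and Implemented Layer}, which makes the argument slightly more direct.
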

The proof of proposition \ref{prop: TSGNN can implement GNN-SSWL+} can be found in \Cref{app: proofs}. We observe that, similarly, by selecting the coarsening function:
\begin{equation}
\mathcal{T}(G) = E \quad \forall G =(V,E),
\end{equation}
one can recover edge-based subgraph GNNs. An example of such a model is presented in \cite{bevilacqua2021equivariant} (DS-GNN), where it was proven capable of distinguishing between two 3-WL indistinguishable graphs, despite having an asymptotic runtime of $\tilde{\mathcal{O}}(m^2)$, where $m$ is the number of edges in the input graph. This demonstrates our model's ability to achieve expressivity improvements while maintaining a (relatively) low asymptotic runtime by exploiting the graph's sparsity through the coarsening function. Finally, we note that by selecting the coarsening function:
\begin{equation}
\mathcal{T}(G) = \{S \in \mathcal{P}(V) \mid |S|=k \} \quad  G =(V,E),
\end{equation}
We can recover an unordered variant of the $k$-OSAN model presented in \cite{qian2022ordered}.

\subsection{Comparison to Natural Baselines}
\label{app: Comparison to Theoretical Baseline}
In this section, we demonstrate how our model can leverage the information provided by the coarsening function $\mathcal{T}(\cdot)$ in an effective way. First, we define a baseline model that incorporates $\mathcal{T}$ in a straightforward manner. We then prove that, for any $\mathcal{T}(\cdot)$, our model is at least as expressive as this baseline. Additionally, we show that for certain choices of $\mathcal{T}(\cdot)$, our model exhibits strictly greater expressivity. To construct the baseline model, we first provide the following definition:

\begin{restatable}[Coarsened Sum Graph]{definition}{TransformedSumGraphDef}
\label{def: Transformed sum graph}
Given a graph $G=(V, E)$ and a coarsening function $\mathcal{T}(\cdot)$, we define the coarsened sum  graph $G^{\mathcal{T}}_{+} = (V^{\mathcal{T}}_{+}, E^{\mathcal{T}}_+)$ by:
\begin{itemize}
\item $V^{\mathcal{T}}_{+} = V \cup V^{\mathcal{T}}$.
\item $E^{\mathcal{T}}_{+} = E \cup E^\mathcal{T} \cup \{ \{S, v\} \mid S \in V^\mathcal{T},\  v \in V \  v \in S\}$.
\end{itemize}
If graph $G$ had a node feature vector $X \in \mathbb{R}^{n \times d}$, we define the node feature vector of $G^{T}_{+}$ as:
\begin{equation}
\label{eq: node features of sum graph}
    X_v= \begin{cases}
    [X_v,1]& v \in V \\
    0_{d+1} & v \in V^\mathcal{T}
    \end{cases}.
\end{equation}
Here we concatenated a $1$ to the end of node features of $V$ to distinguish them from the nodes of $V^\mathcal{T}$.
\end{restatable}
 \begin{wrapfigure}[7]{r}{0.22\linewidth}
    \vspace{-0.8cm}
    \centering
    \centering
    \includegraphics[scale=0.22]{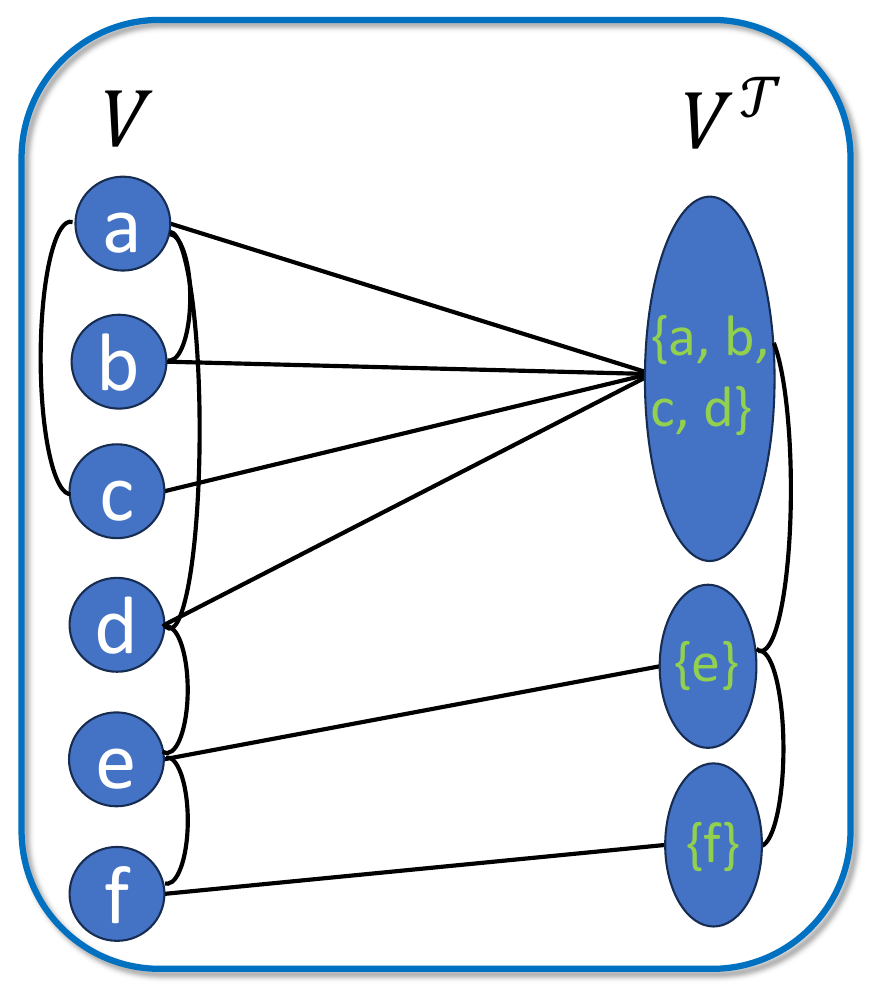} 
    \captionsetup{type=figure}
    \label{fig: A_G}
\end{wrapfigure}
The connectivity of the sum graph (for our running example~\Cref{fig: transf and prod}) is visualized inset. 

We now define our baseline model:

\begin{restatable}[Coarse MPNN]{definition}{TransformedSumGraphDef}
\label{def: Transformation MPNN}
Let $\mathcal{T}(\cdot)$ be a coarsening function. Define $\text{MPNN}_+(\mathcal{T})$ as the set of all functions which can be expressed by the following procedure:
\begin{enumerate}
    \item \textbf{Preprocessing:} We first construct the sum graph $G^\mathcal{T}_+$ of the input graph $G$, along with a node feature map $\mathcal{X}^0:V_+^\mathcal{T} \rightarrow \mathbb{R}^d$ defined according to equation \ref{eq: node features of sum graph}.

   \item \textbf{Update:} The representation of node $v \in V_+^\mathcal{T}$ is updated according to:
\begin{equation}
\label{eq: transformed mpnn layers}
\begin{aligned}
\text{For } v \in V: \quad & \mathcal{X}^{t+1}(v) = f^t_V \left( \mathcal{X}^t(v), \text{agg}^t_1  \ldblbrace (\mathcal{X}^t(u), e_{u,v}) \mid u \sim_G v \rdblbrace, \right. \\
& \left. \phantom{X} \text{agg}^t_2 \ldblbrace \mathcal{X}^t(S) \mid S \in V^T, v \in S \rdblbrace \right), \\
\text{For } S \in V^\mathcal{T}: \quad & \mathcal{X}^{t+1}(S) = f^t_{V^\mathcal{T}} \left( \mathcal{X}^t(S), \text{agg}^t_1 \ldblbrace (\mathcal{X}^t(S'), e_{S,S'}) \mid S' \sim_{\mathcal{T}(G)} S \rdblbrace, \right. \\
& \left. \phantom{X} \text{agg}^t_2 \ldblbrace \mathcal{X}^t(v) \mid v \in V, v \in S \right\} \rdblbrace).
\end{aligned}
\end{equation}
    \item \textbf{Pooling:} The final node feature vector $\mathcal{X}^T(\cdot)$ is pooled according to:
    \begin{equation}
    \label{eq: GNN-SSWL+ pooling}
    \text{MLP} \left( \sum_{v \in V_+^\mathcal{T}} \mathcal{X}^T(v) \right).
    \end{equation}
\end{enumerate}
Here, for $t \in [T]$,  $f^t_V, $, $f^t_{V^\mathcal{T}}$ are continuous (parameterized) functions and , $\text{agg}_1^t, \text{agg}_2^tT$ are  continuous (parameterized) permutation invariant functions. Finally, we notice that for the trivial coarsening function defined by 
\begin{equation}
    \mathcal{T}_\emptyset(G) = \emptyset,
\end{equation}
 the update in \Cref{eq: transformed mpnn layers} devolves into a standard MPNN update, as defined in \cite{gilmer2017neural} and so we define:
\begin{equation}
    \text{MPNN} = \text{MPNN}_+(\mathcal{T}_\emptyset).
\end{equation}
\end{restatable}

In essence, given an input graph $G=(V,E)$, the $\text{MPNN}_+(\mathcal{T})$ pipeline first constructs the coarsened graph $\mathcal{T}(G)$. It then adds edges between each super-node $S \in V^\mathcal{T}$ and the nodes it is  comprised of (i.e., any $v \in S$). This is followed by a standard message passing procedure on the graph. The following two propositions suggest that this simple approach to incorporating $\mathcal{T}$ into a GNN pipeline is less powerful than our model.

\begin{restatable}[\ourmethod \ Is at Least as Expressive as Coarse MPNN ]{proposition}{TSGNNStrongerThenMPNNProp}
\label{prop: TSGNN is at least as expressive as transformation MPNN}
For any coarsening function $\mathcal{T}(\cdot)$ the following holds:
\begin{equation}
\label{eq: Equal expressivity for node marking, node + size marking, and min distance}
 \text{MPNN} \subseteq  \text{MPNN}_+(\mathcal{T}) \subseteq \text{\ourmethod}(\mathcal{T}, \pi_\text{S}) 
\end{equation}
\end{restatable}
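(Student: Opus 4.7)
The plan is to establish the two containments $\text{MPNN} \subseteq \text{MPNN}_+(\mathcal{T})$ and $\text{MPNN}_+(\mathcal{T}) \subseteq \text{\ourmethod}(\mathcal{T},\pi_\text{S})$ separately. The first one is essentially degenerate: since $\text{MPNN} = \text{MPNN}_+(\mathcal{T}_\emptyset)$ by the final remark in Definition~\ref{def: Transformation MPNN}, I would show that in any $\text{MPNN}_+(\mathcal{T})$ the super-nodes can be rendered inert. The distinguishing flag in equation~\ref{eq: node features of sum graph} (last coordinate $1$ for $v \in V$ and $0$ otherwise) lets us choose $f^t_{V^\mathcal{T}}$ to keep the super-node state at zero throughout and $f^t_V$ to ignore its super-node aggregation, whose input is a constant since $V^\mathcal{T}$ partitions $V$. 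The final $\texttt{MLP}$ then reads off only the $V$-contributions, recovering any target MPNN readout.

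The second inclusion carries the main content. The key idea is to simulate an $\text{MPNN}_+(\mathcal{T})$ run by maintaining at each product-graph node $(S,v)$ a \emph{paired state} $(h_v^t, h_S^t)$ that tracks the evolving MPNN$_+$ representations of $v$ and $S$. At initialization, $\pi_\text{S}$ yields $\mathcal{X}^0(S,v) = [X_v, \mathbf{1}[v\in S]]$; an MLP applied to this produces $(h_v^0, h_S^0) = ([X_v,1],\, 0_{d+1})$ as prescribed by equation~\ref{eq: node features of sum graph}.

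For the layer-wise simulation, I would identify a direct correspondence between the four aggregations in the \ourmethod\ update of Definition~\ref{def: General Layer Update} and the four aggregations appearing in the MPNN$_+$ update rules of equation~\ref{eq: transformed mpnn layers}. Messages arriving along $A_G$ from $(S, v')$ with $v' \sim_G v$ carry the up-to-date $h_{v'}^t$ needed by $f^t_V$; messages along $A_{\mathcal{T}(G)}$ from $(S', v)$ with $S' \sim_{\mathcal{T}(G)} S$ carry the $h_{S'}^t$ needed by $f^t_{V^\mathcal{T}}$; messages along $A_{P_1}$ from $(S',v)$ with $v \in S'$ supply the set of super-nodes containing $v$, used by $f^t_V$; and messages along $A_{P_2}$ from $(S, v')$ with $v' \in S$ supply the nodes inside $S$, used by $f^t_{V^\mathcal{T}}$. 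Reading off the appropriate component of each paired message lets a single \ourmethod\ MLP simulate both $f^t_V$ and $f^t_{V^\mathcal{T}}$ concurrently on the two components of the paired state, using the flag bit already present from $\pi_\text{S}$ to gate which branch writes where.

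The main obstacle I anticipate is verifying \emph{consistency} of the paired state, namely that $h_v^t$ agrees across all $(S,v)$ sharing the same $v$ (and likewise for $h_S^t$), and reconciling the two different pooling schemes. Consistency is proved by induction on $t$: the aggregations governing $h_v$ each depend only on indices that are symmetric in $S$, and dually for $h_S$, so uniform parameter sharing propagates the invariant. For the readout, the two-level pooling $\texttt{MLP}_2\bigl(\sum_S \texttt{MLP}_1\bigl(\sum_v \mathcal{X}^T(S,v)\bigr)\bigr)$ is reconciled with the flat MPNN$_+$ pooling by noting that $\sum_v \mathcal{X}^T(S,v)$ evaluates to $\bigl(\sum_v h_v^T,\; |V|\cdot h_S^T\bigr)$; setting $\texttt{MLP}_1$ to output $h_S^T + |V^\mathcal{T}|^{-1}\sum_v h_v^T$ makes the outer sum equal $\sum_S h_S^T + \sum_v h_v^T$, after which $\texttt{MLP}_2$ realizes the MPNN$_+$ readout. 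Boundedness of the relevant feature domain under the finite-family-of-graphs convention (Definition~\ref{def: Expressivity of family of graph functions}) justifies all universal-approximation steps throughout.
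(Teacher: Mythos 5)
Your proof takes essentially the same route as the paper's. For the first inclusion the paper likewise zeroes the super-node branch ($f^t_{V^\mathcal{T}} \equiv 0$) and has $f^t_V$ ignore the super-node aggregation; for the second, the paper maintains at every product-graph node $(S,v)$ exactly your paired state in the form $\mathcal{X}^{t+1}(S,v) = [\tilde{\mathcal{X}}^t(S), \tilde{\mathcal{X}}^t(v)]$ (Lemmas \ref{lemma: node init of sum transform graph can be implements by SPNN} and \ref{lemma: proof aggregation of sum transform can be implemented by SPNN}), maps the four \ourmethod\ aggregations onto the MPNN$_+$ aggregations precisely as you describe, and reconciles the two-level pooling by padding with a constant coordinate that accumulates the cardinalities.

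One small slip worth fixing: as written, your $\texttt{MLP}_1$ cannot literally output $h_S^T + |V^\mathcal{T}|^{-1}\sum_v h_v^T$ as a function of $\sum_v \mathcal{X}^T(S,v)$, because $|V^\mathcal{T}|$ is a quantity over super-nodes and is not recoverable from a sum taken over $v$ at fixed $S$ (nor from the padded count, which at that stage yields only $|V|$). The clean fix, which is what the paper does, is to have $\texttt{MLP}_1$ perform only the division by $|V|$ (obtainable from the count coordinate) and re-emit a fresh count coordinate, so that after the outer sum over $S$ the count equals $|V^\mathcal{T}|$ and $\texttt{MLP}_2$ can perform the remaining normalization before applying the MPNN$_+$ readout. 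Also, the ``gating by the $\pi_\text{S}$ flag'' you mention is unnecessary in this simulation: after the first layer overwrites $\mathcal{X}^0$ with $[\tilde{\mathcal{X}}^0(S),\tilde{\mathcal{X}}^0(v)]$, both components are updated unconditionally at every layer and at every $(S,v)$, and consistency of $h_v$ across $S$ (resp.\ $h_S$ across $v$) holds because, as you note, neither component's update depends on the other index.
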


\begin{restatable}[\ourmethod \ Can Be More Expressive Than MPNN$+$]{proposition}{TSGNNcanbemoreexpressivethenMPNNProp}
\label{prop:TSGNN more expressive then MPNNplus}
Let $\mathcal{T}(\cdot)$ be the identity coarsening function defined by:
\begin{equation}
\label{eq: node coarsening function}
\mathcal{T}(G)= \{ \{v\} \mid v \in V \} \quad  G=(V,E).
\end{equation}
The following holds:

\begin{equation}
\label{eq: mpnn = mpnn+(N)}
\text{MPNN} = \text{MPNN}_+(\mathcal{T}).
\end{equation}
Thus: 
\begin{equation}
\label{eq: mpnn+(T) partial to \ourmethod}
\text{MPNN}_+(\mathcal{T}) \subset \text{\ourmethod}(\mathcal{T}, \pi_\text{S}),
\end{equation}
where this containment is strict.
\end{restatable}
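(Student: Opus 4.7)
The plan is to prove the proposition in two steps: first establish the equality $\text{MPNN} = \text{MPNN}_+(\mathcal{T})$ for the identity coarsening function, and then combine that equality with \Cref{prop: TSGNN can implement GNN-SSWL+} together with the standard separation between maximally expressive Subgraph GNNs and plain MPNNs in order to obtain strict containment.

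For the equality, the key structural observation is that when $\mathcal{T}(G) = \{\{v\} : v \in V\}$, the induced-edge rule (\Cref{eq: induced edges}) guarantees that the super-node side of the sum graph $G^{\mathcal{T}}_+$ is an exact structural copy of $G$, augmented with a perfect matching that connects each node $v \in V$ to its corresponding super-node $\{v\} \in V^{\mathcal{T}}$. The inclusion $\text{MPNN} \subseteq \text{MPNN}_+(\mathcal{T})$ is immediate: any MPNN on $G$ can be simulated inside $\text{MPNN}_+(\mathcal{T})$ by choosing $f^t_{V^{\mathcal{T}}}$ to output a constant and letting $f^t_V$ discard the super-node aggregation, then restricting the readout to the $V$ side.

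For the reverse inclusion $\text{MPNN}_+(\mathcal{T}) \subseteq \text{MPNN}$, I would employ a state-doubling simulation. On each node $v \in V$, maintain a concatenated feature $\bigl(\mathcal{X}^t(v), \mathcal{X}^t(\{v\})\bigr)$ that stores both the node's own state and the state of its matched super-node. Because the two copies of $G$ inside $G^{\mathcal{T}}_+$ share the same connectivity, and because the matching edge is purely local to a node--super-node pair, a single MPNN layer on $G$ can execute $f^t_V$ and $f^t_{V^{\mathcal{T}}}$ in parallel: the neighbor aggregation on $G$ delivers both the first components (consumed by $f^t_V$) and the second components (consumed by $f^t_{V^{\mathcal{T}}}$), while the matching-edge term reduces to a pointwise interaction between the two halves of the same concatenated state at node $v$. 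MLP-universality of the message and update functions available to the MPNN class then lets this paired update be realized as a single node-level MPNN layer. The pooling over $V^{\mathcal{T}}_+$ factorizes as a sum over the two halves of the doubled state, which is again expressible as an MPNN readout over $V$.

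For the strict containment, \Cref{prop: TSGNN can implement GNN-SSWL+} yields $\text{\ourmethod}(\mathcal{T}, \pi_\text{S}) = \text{MSGNN}(\pi_\text{NM})$ for this same $\mathcal{T}$, so together with the first step the claim reduces to the well-known separation $\text{MPNN} \subsetneq \text{MSGNN}(\pi_\text{NM})$, witnessed by any pair of non-isomorphic graphs that are $1$-WL equivalent but distinguishable by a maximally expressive node-based Subgraph GNN. The main technical obstacle, I anticipate, is the reverse inclusion of the first step: one must carefully verify that the doubled-state MPNN faithfully implements \emph{any} admissible pair $(f^t_V, f^t_{V^{\mathcal{T}}})$, including updates that mix information across the matching edge. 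Since the matching edge is local and the super-node connectivity exactly duplicates $G$, this ultimately reduces to a routine application of MLP-universality, but the argument must be written carefully to make sure that the aggregation functions $\text{agg}^t_1, \text{agg}^t_2$ on both the $V$ and $V^{\mathcal{T}}$ sides, as well as the membership aggregations, can all be simultaneously encoded into the single aggregator of the simulating MPNN.
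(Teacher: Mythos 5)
Your proposal is correct and follows essentially the same route as the paper: the state-doubling simulation (storing $[\mathcal{X}^t(v), \mathcal{X}^t(\{v\})]$ as a single node feature on $G$ and running paired updates in parallel, exploiting that the super-node side duplicates $G$ and the matching edge is pointwise) is exactly the paper's inductive construction, and the strict containment likewise comes from \Cref{prop: TSGNN can implement GNN-SSWL+} plus the known MPNN vs.\ maximally-expressive Subgraph GNN separation. The paper just makes the combined update function $\tilde{f}^t$ and aggregation $\tilde{\text{agg}}^t$ explicit and carries out the induction in detail, which is the "careful verification" you anticipated.
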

The proofs to the last two propositions can be found in \Cref{app: proofs}. \Cref{prop:TSGNN more expressive then MPNNplus} demonstrates that \ourmethod is strictly more expressive than $\text{MPNN}_+$ when using the identity coarsening function. However, this result extends to more complex coarsening functions as well. We briefly discuss one such example. Let $\mathcal{T}(\cdot)$ be the coarsening function defined by:
\begin{equation}
    \mathcal{T}_\triangle(G) = \{{v_1, v_2, v_3} \mid G[v_1,v_2,v_3] \cong \triangle \},
\end{equation}
i.e. for an input graph $G$, the set of super-nodes is composed of all triplets of nodes whose induced subgraph is isomorphic to a triangle. To see that \ourmethod \ is strictly more expressive then $\text{MPNN}_+$ when using $ \mathcal{T}_\triangle(\cdot)$, we look at the two graphs $G$ and $H$ depicted in \Cref{fig: SPNN is better then sum graph}. In the figure, we see the two original graphs, $G$ and $H$, their corresponding sum graphs $G^{\mathcal{T}_\triangle}_+$ and $H^{\mathcal{T}_\triangle}_+$, and a subgraph of their corresponging product graphs $G \square \mathcal{T}_\triangle(G)$ and $H \square \mathcal{T}_\triangle(H)$ induced by the sets $\{(S_0,v) \mid v \in V_G \}$ and $\{(S_0,v) \mid v \in V_H \}$ respectively (this can be thought of as looking at a single subgraph from the bag of subgraphs induced by \ourmethod).  One can clearly see that both the original graphs and their respective sum graphs are 1-WL indistinguishable. On the other hand, the subgraphs induced by our method are 1-WL distinguishable. Since for both $G$ and $H$ the "bag of subgraphs" induced by \ourmethod\  is composed of $6$ isomorphic copies of the same graph, this would imply that our method can distinguish between $G$ and $H$, making it strictly mor expressive then $\text{MPNN}_+$.

\begin{figure}[th]
\centering
\includegraphics[width=0.82\textwidth]{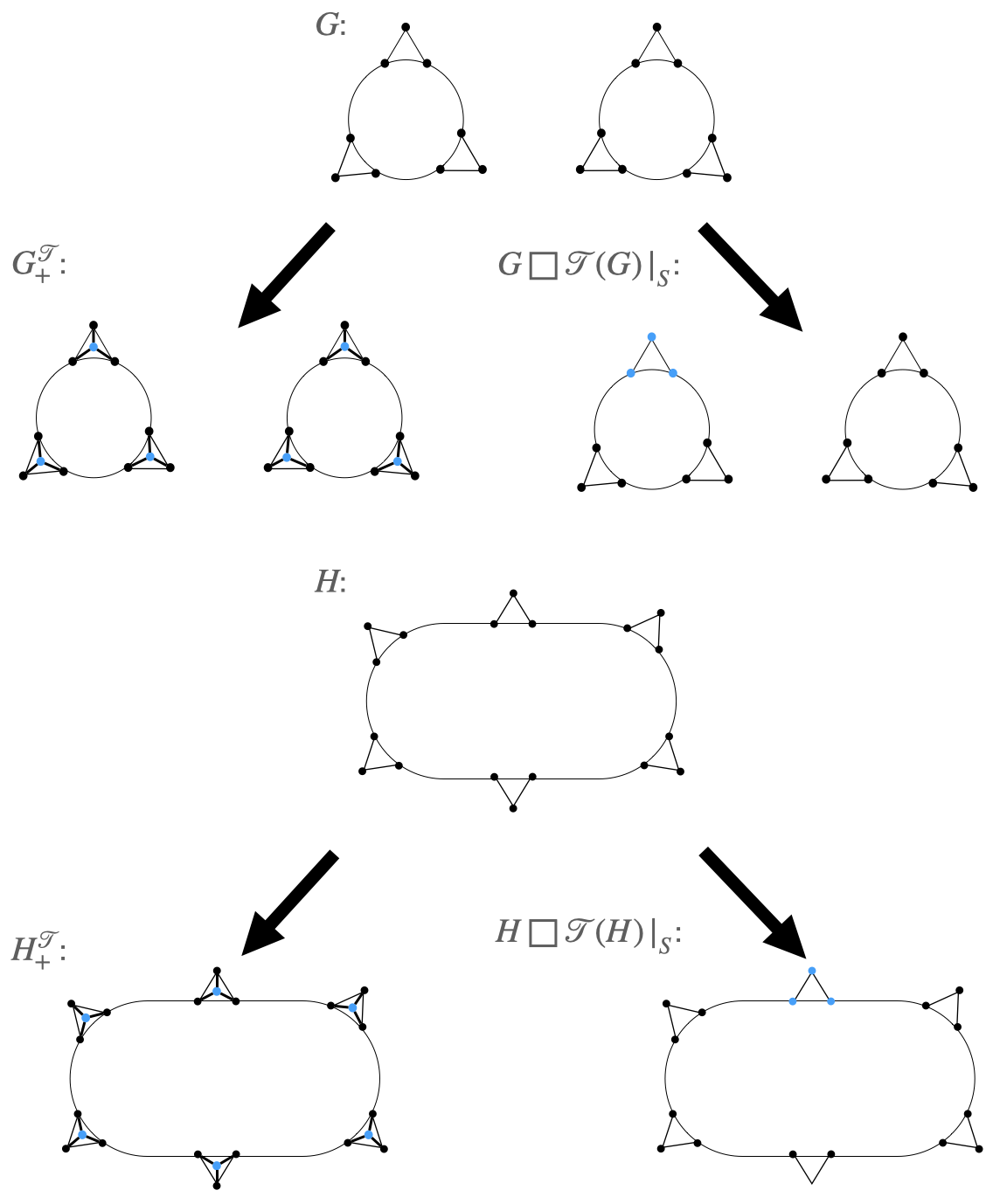}
\caption{Rows 1 and 3 depict two 1-WL indistinguishable graphs> Rows 2 and 4 depict the sum graph of each of these graphs, as well as one subgraph of their product graphs induced by all node, super-node tuples whose super-node is fixed.}
\label{fig: SPNN is better then sum graph}
\end{figure}

We conclude this section with the following proposition, showing there exists coarsening functions which, when combined with \ourmethod, results in an architecture that is strictly more expressive then node-based subgraph GNNs.

\begin{restatable}[\ourmethod \ can be strictly more expressive then node-based subgraph GNNs]{proposition}{TSGNNStrongerThenNodeBased}
\label{prop: TSGNN is more expressive than node-based} Let $\mathcal{T}$ be the coarsening function defined by:
\begin{equation}
    \mathcal{T}(G) = \{ \{v\}  \mid v \in V\} \cup E \quad G = (V,E).
\end{equation}
The following holds:
\begin{enumerate}
    \item Let $G_1, G_2$ be a pair of graphs such that there exists a node-based subgraph GNN model M where $M(G_1) \neq M(G_2)$. There exists a \ourmethod model $M'$ which uses $\mathcal{T}$ such that $M'(G_1)\neq M'(G_2)$.
    \item There exists a pair of graphs $G_1, G_2$ such that for any subgraph GNN model $M$ it holds that $M(G_1) = M(G_2)$, but there exists a \ourmethod model $M'$ which uses $\mathcal{T}$ such that $M'(G_1) \neq M'(G_2)$.
\end{enumerate}
\end{restatable}

This proposition is proved in \Cref{app: proofs}.

\section{Linear Invariant (Equivariant) Layer -- Extended Section}
\label{app:Linear Invariant (Equivariant) Layer -- Extended Section}

We introduce some key notation. In the matrix $\mathcal{X}$, the $i$-th row corresponds to the $i$-th subset $S$ arranged in the lexicographic order of all subsets of $[n]$, namely, $[\{0\}, \{0, 1\}, \{0, 2\}, \ldots, \{0, 1, 2, \ldots, n\}]$. Each $i$-th position in this sequence aligns with the $i$-th row index in $\mathcal{X}$. It follows, that the standard basis for such matrices in $\mathbb{R}^{2^n \times n}$ is expressed as $\mathbf{e}^{(S)} \cdot \mathbf{e}^{(i)^T}$, where $\mathbf{e}^{(S)}$ is a 1-hot vector, with the value 1 positioned according to $S$ in the lexicographic order. For a matrix $X \in \mathbb{R}^{a \times b}$, the operation of vectorization, denoted by $\text{vec}(X)$, transforms $X$ into a single column vector in $\mathbb{R}^{ab \times 1}$ by sequentially stacking its columns; in the context of $\mathcal{X}$, the basis vectors of those vectors are $\mathbf{e}^{(i)} \otimes \mathbf{e}^{(S)}$. The inverse process, reshaping a vectorized matrix back to its original format, is denoted as $[\text{vec}(X)] = X$. We also denote an arbitrary permutation by $\sigma \in S_n$. The actions of permutations on vectors, whether indexed by sets or individual indices, are represented by $\mathbf{P}_\mathcal{S} \in \text{GL}(2^n)$ and $\mathbf{P}_\mathcal{I} \in \text{GL}(n)$, respectively. This framework acknowledges $S_n$ as a subgroup of the larger permutation group $S_{2^n}$, which permutes all $2^n$ positions in a given vector $\mathbf{v}_S \in \mathbb{R}^{2^n}$.

Let \(\mathbf{L} \in \mathbb{R}^{1 \times 2^n \cdot n}\) be the matrix representation of a general linear operator \(\mathcal{L} : \mathbb{R}^{2^n \times n} \rightarrow \mathbb{R}\) in the standard basis. The operator \(\mathcal{L}\) is order-invariant iff
\begin{equation}
\label{eq: inv ugly form}
    \mathbf{L} \operatorname{vec}(\mathbf{P}_{\mathcal{S}}^T \mathcal{X} \mathbf{P}_\mathcal{I}) = \mathbf{L} \operatorname{vec}(\mathcal{X}).
\end{equation}

Similarly, let \(\mathbf{L} \in \mathbb{R}^{2^n \cdot n \times 2^n \cdot n}\) denote the matrix for \(\mathcal{L} : \mathbb{R}^{2^n \times n} \rightarrow \mathbb{R}^{2^n \times n}\). The operator \(\mathcal{L}\) is order-equivariant if and only if
\begin{equation}
\label{eq: equiv ugly form}
 [\mathbf{L} \operatorname{vec}(\mathbf{P}_{\mathcal{S}}^T \mathcal{X} \mathbf{P}_\mathcal{I})] = \mathbf{P}_{\mathcal{S}}^T[\mathbf{L} \operatorname{vec}(\mathcal{X})]\mathbf{P}_{\mathcal{I}}.
\end{equation}

Using properties of the Kronecker product (see \Cref{subsec: Full Derivation of inv,subsec: Full Derivation of equiv} for details), we derive the following conditions for invariant and equivariant linear layers:
\begin{align}
         \text{Invariant } \mathbf{L}: &\quad \mathbf{P}_{\mathcal{I}} \otimes \mathbf{P}_{\mathcal{S}} \operatorname{vec}(\mathbf{L}) = \operatorname{vec}(\mathbf{L}), \label{eq: inv}\\
        \text{Equivariant } \mathbf{L}: &\quad \mathbf{P}_{\mathcal{I}} \otimes \mathbf{P}_{\mathcal{S}} \otimes \mathbf{P}_{\mathcal{I}} \otimes \mathbf{P}_{\mathcal{S}} \operatorname{vec}(\mathbf{L}) = \operatorname{vec}(\mathbf{L}) \label{eq: equiv}.
\end{align}
\textbf{Solving \Cref{eq: inv,eq: equiv}}. Let $\sigma \in S_n$ denote a permutation corresponding to the permutation matrix $\mathbf{P}$. Let $\mathbf{P} \star \mathbf{L}$ denote the tensor that results from expressing $\mathbf{L}$ after renumbering the nodes in $V^{\mathcal{T}}, V$ according to the permutation $\sigma$. Explicitly, for $\mathbf{L} \in \mathbb{R}^{2^n \times n}$, the $(\sigma(S), \sigma(i))$-entry of $\mathbf{P} \star \mathbf{L}$ equals to the $(S, i)$-entry of $\mathbf{L}$. The matrix that corresponds to the operator $\mathbf{P} \star$ in the standard basis, $\mathbf{e}^{(i)} \otimes \mathbf{e}^{(S)}$ is the kronecker product $\mathbf{P}_{\mathcal{I}} \otimes \mathbf{P}_{\mathcal{S}}$. Since $\operatorname{vec}(\mathbf{L})$ is exactly the coordinate vector of the tensor $\mathbf{L}$ in the standard basis we have,
\begin{equation}
\label{eq: op inv}
\operatorname{vec}(\mathbf{P} \star \mathbf{L}) = \mathbf{P}_{\mathcal{I}} \otimes \mathbf{P}_{\mathcal{S}} \operatorname{vec}(\mathbf{L}),
\end{equation}
following the same logic, the following holds for the equivariant case, where $\mathbf{L} \in \mathbb{R}^{2^n \cdot n \times 2^n \cdot n}$,
\begin{equation}
\label{eq: op equiv}
\operatorname{vec}(\mathbf{P} \star \mathbf{L}) = \mathbf{P}_{\mathcal{I}} \otimes \mathbf{P}_{\mathcal{S}} \otimes \mathbf{P}_{\mathcal{I}} \otimes \mathbf{P}_{\mathcal{S}} \operatorname{vec}(\mathbf{L}).
\end{equation}

Given \Cref{eq: inv,eq: op inv} and \Cref{eq: equiv,eq: op equiv}, it holds that we should focus on solving,
\begin{equation}
    \mathbf{P} \star \mathbf{L} = \mathbf{L}, \quad \forall \mathbf{P} \text{ permutation matrices},
\end{equation}
for both cases where $\mathbf{L} \in \mathbb{R}^{2^n \times n}$ and $\mathbf{L} \in \mathbb{R}^{2^n \times n \times 2^n \times n}$, corresponding to the bias term, and linear term.

\textbf{Bias.} To this end, let us define an equivalence relation in the index space of a tensor in $\mathbb{R}^{2^n \times n}$. Given a pair $(S, i) \in \mathcal{P}([n]) \times [n]$, we define $\gamma^{k^+}$ to correspond to all pairs $(S, i)$ such that $|S| = k$ and $i \notin S$. Similarly, $\gamma^{k^-}$ corresponds to all pairs $(S, i)$ such that $|S| = k$ and $i \in S$. We denote this equivalence relation as follows:
\begin{equation}
\label{eq:the_partition_inv}
(\mathcal{P}([n]) \times [n]) /_\sim \triangleq \{\gamma^{k^*} : k = 1, \ldots, n; * \in \{+, -\} \}.
\end{equation}

For each set-equivalence class $\gamma \in (\mathcal{P}([n]) \times [n])_\sim$, we define a basis tensor, $\mathbf{B}^\gamma \in \mathbb{R}^{2^n \times n}$ by setting:
\begin{equation}
\label{eq: inv basis appendix}
    \mathbf{B}^{\gamma}_{S, i} = \begin{cases}
        1, & \text{if } (S, i) \in \gamma; \\
        0, & \text{otherwise}.
    \end{cases}
\end{equation}

Following similar reasoning, consider elements $(S_1, i_1, S_2, i_2) \in (\mathcal{P}([n]) \times [n] \times \mathcal{P}([n]) \times [n])$. We define a partition according to six conditions: the relationship between $i_1$ and $i_2$, denoted as $i_1 \leftrightarrow i_2$, which is determines by the condition: $i_1 = i_2$ or $i_1 \neq i_2$; the cardinalities of $S_1$ and $S_2$, denoted as $k_1$ and $k_2$, respectively; the size of the intersection $S_1 \cap S_2$, denoted as $k^\cap$; the membership of $i_l$ in $S_l$ for $l \in \{1, 2\}$, denoted as $\delta_\text{same} \in \{1, 2, 3, 4\}$; and the membership of $i_{l_1}$ in $S_{l_2}$ for distinct $l_1, l_2 \in \{1, 2\}$, denoted as $\delta_\text{diff} \in \{1, 2, 3, 4\}$. The equivalence relation thus defined can be represented as:
\begin{equation}
\label{eq:the_partition_equic}
(\mathcal{P}([n]) \times [n] \times \mathcal{P}([n]) \times [n]) /_\sim \triangleq \{\Gamma^{\leftrightarrow; k_1; k_2; k^\cap; \delta_\text{same}; \delta_\text{diff}}\}.
\end{equation}

For each set-equivalence class $\Gamma \in (\mathcal{P}([n]) \times [n] \times \mathcal{P}([n]) \times [n]) /_\sim$, we define a basis tensor, $\mathbf{B}^\Gamma \in \mathbb{R}^{2^n \times n \times 2^n \times n}$ by setting:
\begin{equation}
\label{eq: equiv basis appendix}
    \mathbf{B}^{\Gamma}_{S_1, i_1; S_2, i_2} = \begin{cases}
        1, & \text{if } (S_1, i_1, S_2, i_2) \in \Gamma; \\
        0, & \text{otherwise}.
    \end{cases}
\end{equation}

The following two proposition summarizes the results in this section,
\begin{restatable}[$\gamma$ ($\Gamma$) are orbits]{lemma}{Orbits}
\label{lemma: orbits}
    The sets $\{\gamma^{k^*} : k = 1, \ldots, n; * \in \{+, -\} \}$ and $\{\Gamma^{\leftrightarrow; k_1; k_2; k^\cap; \delta_\text{same}; \delta_\text{diff}}\}$  are the orbits of $S_n$ on the index space $(\mathcal{P}([n])\times [n])$ and $(\mathcal{P}([n])\times [n] \times (\mathcal{P}([n]) \times [n]) $, respectively. 
\end{restatable}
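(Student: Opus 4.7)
The plan is to prove both claims by the standard two-step argument: first show that each class is a union of $S_n$-orbits (i.e., the defining invariants are preserved by the action), and then show each class is a \emph{single} orbit by explicitly constructing, for any two tuples in the same class, a permutation mapping one to the other. The invariance direction is immediate in both cases: the action $\sigma \cdot (S, i) = (\sigma^{-1}(S), \sigma^{-1}(i))$ preserves cardinality of sets and membership relations, so $|S|$, the predicate $i \in S$, the sizes $|S_1|, |S_2|, |S_1 \cap S_2|$, the condition $i_1 = i_2$, and all membership predicates $i_j \in S_l$ are constant on orbits. Thus each $\gamma^{k^*}$ and each $\Gamma^{\leftrightarrow; k_1; k_2; k^\cap; \delta_\text{same}; \delta_\text{diff}}$ is a union of orbits.

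For the first claim on $\mathcal{P}([n]) \times [n]$, given two pairs $(S_1, i_1), (S_2, i_2)$ in the same class $\gamma^{k^*}$, I will build a permutation $\sigma \in S_n$ mapping one to the other. In the case $* = -$ (so $i_j \in S_j$ and $|S_j| = k$), I set $\sigma(i_1) = i_2$, then extend by any bijection $S_1 \setminus \{i_1\} \to S_2 \setminus \{i_2\}$ (these have equal size $k-1$), and complete it by any bijection $[n] \setminus S_1 \to [n] \setminus S_2$ (size $n-k$). The case $* = +$ is analogous, with $\sigma(i_1) = i_2$, a bijection $S_1 \to S_2$, and a bijection of the remaining $n-k-1$ elements. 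For the second claim on $(\mathcal{P}([n]) \times [n])^2$, the construction follows the same pattern but on a finer partition: I split $[n]$ into the four blocks $S_1 \cap S_2$, $S_1 \setminus S_2$, $S_2 \setminus S_1$, and $[n] \setminus (S_1 \cup S_2)$, and refine each block according to whether it contains $i_1$ or $i_2$. The six invariants $(\leftrightarrow; k_1; k_2; k^\cap; \delta_\text{same}; \delta_\text{diff})$ precisely determine the cardinalities of all these refined blocks, so for two tuples in the same class the corresponding blocks have equal sizes; picking any block-respecting family of bijections (and sending $i_j$ in the first tuple to $i_j$ in the second) yields the required $\sigma \in S_n$.

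The main obstacle is the bookkeeping in the equivariant case: one must verify that the six invariants are \emph{complete} in the sense that they uniquely determine the block structure of $[n]$ induced by $(S_1, i_1, S_2, i_2)$, and check internal consistency (e.g., $k^\cap \le \min(k_1, k_2)$, $k_1 + k_2 - k^\cap \le n$, and the $\leftrightarrow = $`equal' case forces $\delta_\text{same} = \delta_\text{diff}$ with matching membership patterns). Once the block refinement is tabulated and the cardinality matching verified case by case, the block-bijection construction produces $\sigma$ uniformly and the proof closes. I expect the argument to be entirely combinatorial, with no need for any machinery beyond counting and explicit permutation construction.
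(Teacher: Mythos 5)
Your proposal is correct and follows essentially the same argument as the paper: verify that the defining conditions are invariant under the action of $S_n$ (so each class is a union of orbits), then show transitivity within a class by explicitly building a block-respecting permutation. The only difference is that you carry out the equivariant $\Gamma$ case in detail via the four-region partition of $[n]$ induced by $S_1,S_2$ refined by the locations of $i_1,i_2$, whereas the paper only proves the invariant $\gamma$ case and states that $\Gamma$ "follows similar reasoning," deferring to \cite{maron2018invariant}.
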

\begin{restatable}[Basis of Invariant (Equivariant) Layer]{proposition}{BasisInv}
\label{prop: BasisInv formal}
The tensors $\mathbf{B}^{\gamma}$ ($\mathbf{B}^{\Gamma}$) in \Cref{eq: inv basis appendix} (\Cref{eq: equiv basis appendix}) form an orthogonal basis (in the standard inner product) to the solution of \Cref{eq: inv} (\Cref{eq: equiv}).
\end{restatable}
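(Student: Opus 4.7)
The plan is to reduce both claims to a general fact about group actions on finite index sets: a tensor is fixed by the action of a group iff it is constant on orbits, and the indicator functions of the orbits then form an orthogonal basis of that fixed subspace. Combining \Cref{eq: inv,eq: op inv} (resp.\ \Cref{eq: equiv,eq: op equiv}), the condition on $\mathbf{L}$ becomes $\mathbf{P}\star\mathbf{L}=\mathbf{L}$ for every permutation $\mathbf{P}$, where $\star$ is the natural entrywise action of $S_n$ on the index space $\Omega=\mathcal{P}([n])\times[n]$ (invariant case) or $\Omega=(\mathcal{P}([n])\times[n])^2$ (equivariant case). Thus $\mathbf{L}$ is a solution iff $\mathbf{L}_\omega=\mathbf{L}_{\omega'}$ whenever $\omega,\omega'$ lie in the same $S_n$-orbit of $\Omega$.

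Next I would record the general parameter-sharing lemma: for any finite set $\Omega$ with a group action, letting $\Omega/_\sim$ denote the orbit set, the indicators $\{\mathbf{B}^{\gamma}\}_{\gamma\in\Omega/_\sim}$, with $\mathbf{B}^{\gamma}_\omega=\mathbf{1}[\omega\in\gamma]$, are pairwise orthogonal in the standard inner product (they have disjoint supports), span the space of orbit-constant tensors (any such $\mathbf{L}$ equals $\sum_\gamma c_\gamma\mathbf{B}^\gamma$ with $c_\gamma$ the common value of $\mathbf{L}$ on $\gamma$), and are each nonzero. Applied to our two index spaces, this immediately gives the basis statement, provided the claimed partitions of \Cref{eq:the_partition_inv} and \Cref{eq:the_partition_equic} actually coincide with the orbit sets, which is exactly \Cref{lemma: orbits}.

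The main obstacle is therefore proving \Cref{lemma: orbits}, in particular that the six discrete invariants $(i_1\!\leftrightarrow\! i_2,\, k_1,\, k_2,\, k^\cap,\, \delta_\text{same},\, \delta_\text{diff})$ form a \emph{complete} set of invariants in the equivariant case. One direction is immediate: each of these quantities is manifestly preserved by any relabeling $\sigma\in S_n$. For the converse, given two quadruples $(S_1,i_1,S_2,i_2)$ and $(S_1',i_1',S_2',i_2')$ agreeing on all six invariants, I would partition $[n]$ into the four disjoint regions
\begin{equation*}
R_1=S_1\cap S_2,\quad R_2=S_1\setminus S_2,\quad R_3=S_2\setminus S_1,\quad R_4=[n]\setminus(S_1\cup S_2),
\end{equation*}
and similarly $R_1',\dots,R_4'$ for the primed tuple. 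Equality of $k_1,k_2,k^\cap$ (and hence $n-|S_1\cup S_2|$) forces $|R_j|=|R_j'|$ for all $j$, so any bijections $R_j\to R_j'$ assemble into $\sigma\in S_n$ with $\sigma(S_\ell)=S_\ell'$. The invariants $\delta_\text{same},\delta_\text{diff}$ determine precisely which region $R_j$ contains $i_1$ and $i_2$ (and analogously for $i_1',i_2'$), while $i_1\!\leftrightarrow\! i_2$ says whether $i_1,i_2$ live in the same or distinct cells of $[n]$; refining the bijections so that $i_1\mapsto i_1'$ and $i_2\mapsto i_2'$ is then possible, handling the two subcases $i_1=i_2$ and $i_1\ne i_2$ separately (in the former, both map to the same slot; in the latter, there is room because the relevant region has size at least two by hypothesis). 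The invariant case is strictly easier: the pair $(|S|,\mathbf{1}[i\in S])$ similarly determines the orbit, by choosing any permutation that maps $S$ to $S'$ and $i$ to $i'$ within the appropriate region.

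Combining the orbit characterization with the general parameter-sharing lemma yields the proposition: the tensors $\mathbf{B}^\gamma$ and $\mathbf{B}^\Gamma$ span the solution sets of \Cref{eq: inv} and \Cref{eq: equiv}, and are orthogonal with respect to the standard inner product because they are indicators of a partition of the underlying index space.
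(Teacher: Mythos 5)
Your proposal is correct and takes essentially the same route as the paper: reduce the fixed-point condition to constancy on $S_n$-orbits, invoke the orbit classification (\Cref{lemma: orbits}), and observe that orbit indicators have disjoint supports and hence form an orthogonal basis. The one place you go beyond the paper is in fleshing out the completeness of the six invariants for the equivariant case via the four-region partition $R_1,\dots,R_4$ and the case split on $i_1=i_2$ versus $i_1\ne i_2$; the paper states only the $\gamma$ case of \Cref{lemma: orbits} in detail and defers the $\Gamma$ case with ``similar reasoning,'' so your argument fills a gap the paper leaves implicit but does not change the overall strategy.
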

The proofs are given in \Cref{app: proofs}.

\subsection{Full Derivation of \Cref{eq: inv}.}
\label{subsec: Full Derivation of inv}
Our goal is to transition from the equation,
\begin{equation}
    \mathbf{L} \operatorname{vec}(\mathbf{P}_{\mathcal{S}}^T \mathcal{X} \mathbf{P}_\mathcal{I}) = \mathbf{L} \operatorname{vec}(\mathcal{X})
    \tag{\ref{eq: inv ugly form}}
\end{equation}
to the form,
\begin{equation}
    \mathbf{P}_{\mathcal{I}} \otimes \mathbf{P}_{\mathcal{S}} \operatorname{vec}(\mathbf{L}) = \operatorname{vec}(\mathbf{L})
    \tag{\ref{eq: inv}}
\end{equation}
We introduce the following property of the Kronecker product,
\begin{equation}
    \text{vec}(\mathbf{A} \mathbf{B} \mathbf{C}) = (\mathbf{C}^T \otimes \mathbf{A}) \text{vec}(\mathbf{B}).
    \label{eq: property of Kronecker product}
\end{equation}
Using \Cref{eq: property of Kronecker product} on the left side of \Cref{eq: inv ugly form}, we obtain
\begin{equation}
    \mathbf{L} \mathbf{P}^{T}_\mathcal{I} \otimes \mathbf{P}^{T}_{\mathcal{S}} \operatorname{vec}(\mathcal{X}) = \mathbf{L} \operatorname{vec}(\mathcal{X}),
\end{equation}
since this should be true for any $\mathcal{X} \in \mathbb{R}^{2^n \times n}$, we derive
\begin{equation}
    \mathbf{L} \mathbf{P}^{T}_\mathcal{I} \otimes \mathbf{P}^{T}_{\mathcal{S}} = \mathbf{L}.
\end{equation}
Applying the transpose operation on both sides, and noting that $(\mathbf{P}^{T}_\mathcal{I} \otimes \mathbf{P}^{T}_{\mathcal{S}})^T = \mathbf{P}_\mathcal{I} \otimes \mathbf{P}_{\mathcal{S}}$, we obtain
\begin{equation}
    \mathbf{P}_\mathcal{I} \otimes \mathbf{P}_{\mathcal{S}} \mathbf{L}^T  = \mathbf{L}^T.
\end{equation}
Recalling that $\mathbf{L} \in \mathbb{R}^{1 \times 2^n \cdot n}$, and thus $\mathbf{L}^T \in \mathbb{R}^{2^n \cdot n \times 1}$, we find that $\mathbf{L}^T = \operatorname{vec}(\mathbf{L})$. Substituting this back into the previous equation we achieve \Cref{eq: inv}.

\subsection{Full Derivation of \Cref{eq: equiv}.}
\label{subsec: Full Derivation of equiv}

Our goal is to transition from the equation,
\begin{equation}
    [\mathbf{L} \operatorname{vec}(\mathbf{P}_{\mathcal{S}}^T \mathcal{X} \mathbf{P}_\mathcal{I})] = \mathbf{P}_{\mathcal{S}}^T [\mathbf{L} \operatorname{vec}(\mathcal{X})] \mathbf{P}_\mathcal{I}
    \tag{\ref{eq: equiv ugly form}}
\end{equation}
to the form,
\begin{equation}
\mathbf{P}_{\mathcal{I}} \otimes \mathbf{P}_{\mathcal{S}} \otimes \mathbf{P}_{\mathcal{I}} \otimes \mathbf{P}_{\mathcal{S}} \operatorname{vec}(\mathbf{L}) = \operatorname{vec}(\mathbf{L}).
    \tag{\ref{eq: equiv}}
\end{equation}
Applying the property in \Cref{eq: property of Kronecker product}, after the reverse operation of the vectorization, namely,
\begin{equation}
    [\text{vec}(\mathbf{A} \mathbf{B} \mathbf{C})] = [(\mathbf{C}^T \otimes \mathbf{A}) \text{vec}(\mathbf{B})]
\end{equation}
on the right hand side of \Cref{eq: equiv ugly form}, for 
\begin{align}
     \mathbf{A} &\triangleq \mathbf{P}_{\mathcal{S}}^T; \\
     \mathbf{B} &\triangleq [\mathbf{L} \operatorname{vec}(\mathcal{X})]; \\
     \mathbf{C} &\triangleq \mathbf{P}_\mathcal{I},
\end{align} 
we obtain,
\begin{equation}
    [\mathbf{L} \operatorname{vec}(\mathbf{P}_{\mathcal{S}}^T \mathcal{X} \mathbf{P}_\mathcal{I})] = [\mathbf{P}^{T}_\mathcal{I} \otimes \mathbf{P}_{\mathcal{S}}^T \mathbf{L} \operatorname{vec}(\mathcal{X})].
\end{equation}
Thus, by omitting the revere-vectorization operation,
\begin{equation}
    \mathbf{L} \operatorname{vec}(\mathbf{P}_{\mathcal{S}}^T \mathcal{X} \mathbf{P}_\mathcal{I}) = \mathbf{P}^{T}_\mathcal{I} \otimes \mathbf{P}_{\mathcal{S}}^T \mathbf{L} \operatorname{vec}(\mathcal{X}).
\end{equation}
Noting that $(\mathbf{P}^{T}_\mathcal{I} \otimes \mathbf{P}_{\mathcal{S}}^T)^{-1} = \mathbf{P}_\mathcal{I} \otimes \mathbf{P}_{\mathcal{S}}$, and multiplying by this inverse both sides (from the left), we obtain, 
\begin{equation}
    \mathbf{P}_\mathcal{I} \otimes \mathbf{P}_{\mathcal{S}} \mathbf{L} \operatorname{vec}(\mathbf{P}_{\mathcal{S}}^T \mathcal{X} \mathbf{P}_\mathcal{I}) = \mathbf{L} \operatorname{vec}(\mathcal{X}).
\end{equation}
Applying, again, the property in \Cref{eq: property of Kronecker product}, we obtain, 
\begin{equation}
    \mathbf{P}_\mathcal{I} \otimes \mathbf{P}_{\mathcal{S}} \mathbf{L} \mathbf{P}_\mathcal{I}^T \otimes \mathbf{P}_\mathcal{S}^T    \operatorname{vec}(\mathcal{X}) = \mathbf{L} \operatorname{vec}(\mathcal{X}).
\end{equation}
Since this should be true for any $\mathcal{X} \in \mathbb{R}^{2^n \times n}$, we derive,
\begin{equation}
\label{eq: equiv final step}
    \mathbf{P}_\mathcal{I} \otimes \mathbf{P}_{\mathcal{S}} \mathbf{L} \mathbf{P}_\mathcal{I}^T \otimes \mathbf{P}_\mathcal{S}^T = \operatorname{vec}(\mathbf{L}).
\end{equation}
Again, applying \Cref{eq: property of Kronecker product} on the left side, where,
\begin{align}
     \mathbf{A} &\triangleq \mathbf{P}_\mathcal{I} \otimes \mathbf{P}_{\mathcal{S}}; \\
     \mathbf{B} &\triangleq \mathbf{L}; \\
     \mathbf{C} &\triangleq \mathbf{P}_\mathcal{I}^T \otimes \mathbf{P}_\mathcal{S}^T,
\end{align} 
we get the following equality,
\begin{equation}
    \mathbf{P}_\mathcal{I} \otimes \mathbf{P}_{\mathcal{S}} \mathbf{L} \mathbf{P}_\mathcal{I}^T \otimes \mathbf{P}_\mathcal{S}^T = \mathbf{P}_\mathcal{I} \otimes \mathbf{P}_{\mathcal{S}} \otimes \mathbf{P}_\mathcal{I} \otimes \mathbf{P}_{\mathcal{S}} \operatorname{vec}(\mathbf{L}).
\end{equation}
By substituting this to the left side of \Cref{eq: equiv final step} we obtain \Cref{eq: equiv}.

\subsection{Comparative Parameter Reduction in Linear Equivariant Layers}
\label{subapp: Comparative Parameter Reduction in Linear Equivariant Layers}
To demonstrate the effectiveness of our parameter-sharing scheme, which results from considering unordered tuples rather than ordered tuples, we present the following comparison. 3-IGNs~\cite{maron2018invariant} are structurally similar to our approach, with the main difference being that they consider indices as ordered tuples, while we consider them as sets. Both approaches use a total of six indices, as shown in the visualized block in~\Cref{fig: symmetry}, making 3-IGNs a natural comparator. By leveraging our scheme, we reduce the number of parameters from 203 (the number of parameters in 3-IGNs) to just 35!

\section{Extended Experimental Section}
\label{app: Extended Experimental Section}

\subsection{Dataset Description}
\label{app: Dataset Description}
In this section we overview the eight different datasets considered; this is summarized in \Cref{tab: dataset-overview}.
\begin{table*}[t]
    \centering
        \caption{Overview of the graph learning datasets.}
    \resizebox{\textwidth}{!}{%
        \begin{tabular}{l|c|c|c|c|c|c}
            \toprule
            \textbf{Dataset} & \textbf{\# Graphs} & \textbf{Avg. \# nodes} & \textbf{Avg. \# edges} & \textbf{Directed} &  \textbf{Prediction task} & \textbf{Metric} \\
            \midrule
            \textsc{ZINC-12k}~\cite{sterling2015zinc} & 12,000 & 23.2 & 24.9 & No  & Regression & Mean Abs. Error \\
            \textsc{ZINC-Full}~\cite{sterling2015zinc} & 249,456 & 23.2 & 49.8 & No  & Regression & Mean Abs. Error \\
            \textsc{ogbg-molhiv}~\cite{hu2020open}  & 41,127 & 25.5 & 27.5 & No  & Binary Classification & AUROC \\
            \textsc{ogbg-molbace}~\cite{hu2020open}  & 1513 & 34.1 & 36.9  & No & Binary Classification & AUROC \\
            \textsc{ogbg-molesol}~\cite{hu2020open}  & 1,128 & 13.3 & 13.7  & No & Regression & Root Mean Squ. Error \\
            \midrule
            \textsc{Peptides-func}~\cite{dwivedi2022long} & 15,535 & 150.9 & 307.3 & No & 10-task Classification & Avg. Precision \\
            \textsc{Peptides-struct}~\cite{dwivedi2022long} & 15,535 & 150.9 & 307.3 & No & 11-task Regression & Mean Abs. Error \\
            \bottomrule
        \end{tabular}
    }
    \label{tab: dataset-overview}
\end{table*}

\textbf{\textsc{ZINC-12k} and \textsc{ZINC-Full} Datasets~\cite{sterling2015zinc, gomez2018automatic, dwivedi2023benchmarking}.} The \textsc{ZINC-12k} dataset includes 12,000 molecular graphs sourced from the ZINC database, a compilation of commercially available chemical compounds. These molecular graphs vary in size, ranging from 9 to 37 nodes, where each node represents a heavy atom, covering 28 different atom types. Edges represent chemical bonds and there are three types of bonds. The main goal when using this dataset is to perform regression analysis on the constrained solubility (logP) of the molecules. The dataset is divided into training, validation, and test sets with 10,000, 1,000, and 1,000 molecular graphs respectively. The full version, \textsc{ZINC-Full}, comprises approximately 250,000 molecular graphs, ranging from 9 to 37 nodes and 16 to 84 edges per graph. These graphs also represent heavy atoms, with 28 distinct atom types, and the edges indicate bonds between these atoms, with four types of bonds present.

\textbf{\textsc{ogbg-molhiv}, \textsc{ogbg-molbace}, \textsc{ogbg-molesol} Datasets~\cite{hu2020open}.} These datasets are used for molecular property prediction and have been adopted by the Open Graph Benchmark (OGB, MIT License) from MoleculeNet. They use a standardized featurization for nodes (atoms) and edges (bonds), capturing various chemophysical properties.

\textbf{\textsc{Peptides-func} and \textsc{Peptides-struct} Datasets~\cite{dwivedi2022long}.} The \textsc{Peptides-func} and \textsc{Peptides-struct} datasets consist of atomic graphs representing peptides released with the Long Range Graph Benchmark (LRGB, MIT License). In \textsc{Peptides-func}, the task is to perform multi-label graph classification into ten non-exclusive peptide functional classes. Conversely, \textsc{Peptides-struct} is focused on graph regression to predict eleven three-dimensional structural properties of the peptides.

We note that for all datasets, we used the random splits provided by the public benchmarks.

\subsection{Experimental Details}
\label{app: Experimental Details}

\textbf{Implementation Details.} 
Our implementation of \Cref{eq: whole architecture} is given by:
\begin{equation}
    \mathcal{X}^{(l+1)} = \text{MLP} \left( \sum_{i=1}^3 \texttt{MPNN}^{(l+1,i)} \left( \mathcal{X}, \mathcal{A}_{i} \right) \right),
\end{equation}
where \(\mathcal{A}_1 = \mathcal{A}_G\), \(\mathcal{A}_2 = \mathcal{A}_{\mathcal{T}(G)}\), and \(\mathcal{A}_3 = \mathcal{A}_\text{Equiv}\).

For all considered datasets, namely, \textsc{ZINC-12k}, \textsc{ZINC-Full}, \textsc{ogbg-molhiv}, \textsc{ogbg-molbace}, and \textsc{ogbg-molesol}, except for the \textsc{Peptides-func} and \textsc{Peptides-struc} datasets, we use a GINE~\cite{hu2019strategies} base encoder. Given an adjacency matrix \(\mathcal{A}\), and defining \(e_{(S',v'), (S,v)}\) to denote the edge features from node \((S',v')\) to node \((S, v)\), it takes the following form:
\begin{equation}
\label{eq: gine}
    \mathcal{X}(S,v) = \texttt{MLP} \bigg( (1 + \epsilon) \cdot \mathcal{X}(S,v) + \sum_{(S',v') \sim_{\mathcal{A}} (S,v)} \texttt{ReLU}\big(\mathcal{X}(S',v') + e_{(S',v'), (S,v)}\big) \bigg).
\end{equation}

We note that for the symmetry-based updates, we switch the \texttt{ReLU} to an \texttt{MLP}\footnote{With the exception of the OGB datasets, to avoid overfitting.} to align with the theoretical analyses\footnote{The theoretical analysis assumes the usage of an \texttt{MLP} for all three considered updates.} (\Cref{app: Theoretical Validation of Implementation Details}), stating that we can implement the equivariant update developed in \Cref{subsec: symmetry-based Updates}. A more thorough discussion regarding the implementation of the symmetry-based updates is given in~\Cref{app: Implementation of Linear Equivariant and Invariant layers -- Extended Section}.

When experimenting with the \textsc{Peptides-func} and \textsc{Peptides-struc} datasets, we employ GAT~\cite{velivckovic2017graph} as our underlying \texttt{MPNN} to ensure a fair comparison with the random baseline—the random variant of \texttt{Subgraphormer + PE}~\cite{bar-shalom2024subgraphormer}. To clarify, we consider the random variant of \texttt{Subgraphormer + PE} as a natural random baseline since it incorporates the information in the eigenvectors of the Laplacian (which we also do via the coarsening function). To maintain a fair comparison, we use a single vote for this random baseline, and maintained the same hyperparameters.

Our experiments were conducted using the PyTorch \cite{paszke2019pytorch} and PyTorch Geometric \cite{fey2019fast} frameworks (resp. BSD and MIT Licenses), using a single NVIDIA L40 GPU, and for every considered experiment, we show the mean $\pm$ std. of 3 runs with different random seeds. Hyperparameter tuning was performed utilizing the Weight and Biases framework \cite{wandb} -- see \Cref{app: HyperParameters}. All our \texttt{MLP}s feature a single hidden layer equipped with a ReLU non-linearity function. For the encoding of atom numbers and bonds, we utilized learnable embeddings indexed by their respective numbers.

In the case of the \textsc{ogbg-molhiv}, \textsc{ogbg-molesol}, \textsc{ogbg-molbace} datasets, we follow \citet{frasca2022understanding}, therefore adding a residual connection between different layers. Additionally, for those datasets (except \textsc{ogbg-molhiv}), we used linear layers instead of \texttt{MLP}s inside the GIN layers. Moreover, for these four datasets, and for the \textsc{Peptides} datasets, the following pooling mechanism was employed
\begin{equation}
    \rho(\mathcal{X}) = \texttt{MLP} \left( \sum_{S} \left( \frac{1}{n} \sum_{v=1}^n  \mathcal{X}(s,v) \right) \right).
\end{equation}
For the \textsc{Peptides} datasets, we also used a residual connection between layers.

\subsection{HyperParameters}
\label{app: HyperParameters}
In this section, we detail the hyperparameter search conducted for our experiments. Besides standard hyperparameters such as learning rate and dropout, our specific hyperparameters are:

\begin{enumerate}
    \item \textbf{Laplacian Dimension:} This refers to the number of columns used in the matrix \( U \), where \( L = U^T \lambda U \), for the spectral clustering in the coarsening function.
    \item \textbf{SPD Dimension:} This represents the number of indices used in the node marking equation. To clarify, since \( |S| \) might be large, we opt for using the first \( k \) indices that satisfy \( i \in S \), sorted according to the SPD distance.
\end{enumerate}

\textbf{SPD Dimension.} For the Laplacian dimension, we chose a fixed value of 10 for all bag sizes for both \textsc{Zinc-12k} and \textsc{Zinc-full} datasets. For \textsc{ogbg-molhiv}, we used a fixed value of 1, since the value 10 did not perform well. For the \textsc{Peptides} datasets, we also used the value 1. For the \textsc{ogbg-molesol} and \textsc{ogbg-molbace} datasets, we searched over the two values $\{ 1, 2\}$.

\textbf{Laplacian Dimension.} For the Laplacian dimension, we searched over the values $\{1, 2\}$ for all datasets.

\textbf{Standard Hyperparameters.} For \textsc{Zinc-12k}, we used a weight decay of 0.0003 for all bag sizes, except for the full bag size, for which we used 0.0001.

All of the hyperparameter search configurations are presented in \Cref{tab: Hyperparameters}, and the selected hyperparameters are presented in \Cref{tab: Chosen Hyperparameters}.


\begin{table}[ht]
    \centering
  \caption{Hyperparameters search for \ourmethod.}
    \label{tab: Hyperparameters}
    \resizebox{1\textwidth}{!}{%
    \begin{tabular}{lr|c|c|c|c|c|c|c|c} 
    \toprule
        Dataset & Bag size & Num. layers & Learning rate & Embedding size & Epochs & Batch size & Dropout & Laplacian dimension & SPD dimension \\  
        \midrule 
        \textsc{Zinc-12k} & $T=2$ & $6 $ & $ 0.0005$ & $ 96$ & $ 400$ & $128 $ & $0$ & $\{ 1,2 \}$ & 10 \\
        \textsc{Zinc-12k} & $T \in \{ 3 ,4 ,5, 8, 18\} $ & $6$ & $ 0.0007$ & $ 96$ & $ 400$ & $128 $ & $0$ & $\{ 1,2 \}$ & 10 \\
        \textsc{Zinc-12k} & $T = \text{``full''} $ & $6$ & $ 0.0007$ & $ 96$ & $ 500$ & $128 $ & $0$ & $\{ 1,2 \}$ & 10 \\
        \midrule
        \textsc{Zinc-full} & $T=4$ & $6$ & $0.0007 $ &  $ 96$ & $ 400$ & $128 $ & $0$  & $\{ 1,2 \}$ & 10\\
        \textsc{Zinc-full} & $T=\text{``full''}$ & $6$ & $\{ 0.001, 0.0005 \}$ &  $ 96$ & $ 500$ & $128 $ & $0$  & $\{ 1,2 \}$ & 10\\
        \midrule
        \textsc{ogbg-molhiv} & $T \in \{ 2, 5, \text{``full''} \} $ & $ 2$ & $0.01$ & $60$ & $100$ & $32$ & $0.5$ & $\{ 1,2 \}$ & $1$ \\
        \textsc{ogbg-molesol} & $T \in \{ 2, 5, \text{``full''} \} $ & $3$ & $0.001$ & $60 $ & $100$ & $32$ & $0.3$ & $\{ 1,2 \}$& \{ 1, 2 \} \\
        \textsc{ogbg-molbace} & $T \in \{ 2, 5, \text{``full''} \} $ & $\{2, 3\}$ & $0.01$ & $60$ & $100$ & $32$ & $0.3$ & $\{ 1,2 \}$ & \{ 1, 2 \} \\
        \midrule
        \textsc{Peptides-func} & $T=30$ & $5$ & $\{ 0.01, 0.005\}$ & $96$ & $200$ & $128$ & $0 $ & $\{ 1,2 \}$ & 1 \\
        \textsc{Peptides-struc} & $T=30$ & $4$ & $\{ 0.01, 0.005\}$ & $96$ & $200$ & $128$ & $0 $ & $\{1,2 \}$ & 1  \\
        \bottomrule
    \end{tabular}
    }
\end{table}

\begin{table}[ht]
    \centering
  \caption{Chosen Hyperparameters for \ourmethod.}
    \label{tab: Chosen Hyperparameters}
    \resizebox{1\textwidth}{!}{%
    \begin{tabular}{lr|c|c|c|c|c|c|c|c} 
    \toprule
        Dataset & Bag size & Num. layers & Learning rate & Embedding size & Epochs & Batch size & Dropout & Laplacian dimension & SPD dimension \\  
        \midrule 
        \textsc{Zinc-12k} & $T=2$ & $6 $ & $ 0.0005$ & $ 96$ & $ 400$ & $128 $ & $0$ & $1$ & 10 \\
        \textsc{Zinc-12k} & $T=3$ & $6 $ & $ 0.0007$ & $ 96$ & $ 400$ & $128 $ & $0$ & $2$ & 10 \\
        \textsc{Zinc-12k} & $T=4$ & $6 $ & $ 0.0007$ & $ 96$ & $ 400$ & $128 $ & $0$ & $1$ & 10 \\
        \textsc{Zinc-12k} & $T=5$ & $6 $ & $ 0.0007$ & $ 96$ & $ 400$ & $128 $ & $0$ & $1$ & 10 \\
        \textsc{Zinc-12k} & $T=8$ & $6 $ & $ 0.0007$ & $ 96$ & $ 400$ & $128 $ & $0$ & $1$ & 10 \\
        \textsc{Zinc-12k} & $T=18$ & $6 $ & $ 0.0007$ & $ 96$ & $ 400$ & $128 $ & $0$ & $1$ & 10 \\
        \textsc{Zinc-12k} & $T=\text{``full''}$ & $6 $ & $ 0.0007$ & $ 96$ & $ 500$ & $128 $ & $0$ & N/A & 10 \\
        \midrule
        \textsc{Zinc-full} & $T=4$ & $6$ & $0.0007 $ &  $ 96$ & $ 400$ & $128 $ & $0$  & $1$ & 10\\
        \textsc{Zinc-full} & $T=\text{``full''}$ & $6$ & $0.0005$ &  $ 96$ & $ 500$ & $128 $ & $0$  & N/A  & N/A\\
        \midrule
        \textsc{ogbg-molhiv} & $T =2 \} $ & $ 2$ & $0.01$ & $60$ & $100$ & $32$ & $0.5$ & $1$ & 1 \\
        \textsc{ogbg-molhiv} & $T=5 $ & $ 2$ & $0.01$ & $60$ & $100$ & $32$ & $0.5$ & $1$ & 1 \\
        \textsc{ogbg-molhiv} & $T = \text{``full''} $ & $ 2$ & $0.01$ & $60$ & $100$ & $32$ & $0.5$ & N/A  & N/A \\
        \midrule
        \textsc{ogbg-molesol} & $T =2 $ & $3$ & $0.001$ & $60 $ & $100$ & $32$ & $0.3$ & $1$& $2$ \\
        \textsc{ogbg-molesol} & $T =5 $ & $3$ & $0.001$ & $60 $ & $100$ & $32$ & $0.3$ & $1$& $2$ \\
        \textsc{ogbg-molesol} & $T = \text{``full''} $ & $3$ & $0.001$ & $60 $ & $100$ & $32$ & $0.3$ & N/A & N/A \\
        \midrule
        \textsc{ogbg-molbace} & $T =2 $ & $3$ & $0.01$ & $60$ & $100$ & $32$ & $0.3$ & $1$ & $1$ \\
        \textsc{ogbg-molbace} & $T =5 $ & $3$ & $0.01$ & $60$ & $100$ & $32$ & $0.3$ & $1$ & $2$ \\
        \textsc{ogbg-molbace} & $T =\text{``full''} $ & $3$ & $0.01$ & $60$ & $100$ & $32$ & $0.3$ & N/A & N/A \\
        \midrule
        \textsc{Peptides-func} & $T=30$ & $5$ & $0.005$ & $96$ & $200$ & $128$ & $0 $ & $1$ & 1 \\
        \textsc{Peptides-struc} & $T=30$ & $4$ & $0.01$ & $96$ & $200$ & $128$ & $0 $ & $1$ & 1  \\
        \bottomrule
    \end{tabular}
    }
\end{table}

\textbf{Optimizers and Schedulers.} For the \textsc{ZINC-12k} and \textsc{Zinc-Full} datasets, we employ the Adam optimizer paired with a ReduceLROnPlateau scheduler,factor set to 0.5, patience at 40\footnote{For \textsc{Zinc-12k}, $T \in \{ 2,\text{``full''} \} $, we used a patience of 50.}, and a minimum learning rate of 0. For the \textsc{ogbg-molhiv} dataset, we utilized the ASAM optimizer~\citep{kwon2021asam} without a scheduler. For both \textsc{ogbg-molesol} and \textsc{ogbg-molbace}, we employed a constant learning rate without any scheduler. Lastly, for the \textsc{Peptides-func} and \textsc{Peptides-struct} datasets, the AdamW optimizer was chosen in conjunction with a cosine annealing scheduler, incorporating 10 warmup epochs.

\subsection{Implementation of Linear Equivariant and Invariant layers -- Extended Section}
\label{app: Implementation of Linear Equivariant and Invariant layers -- Extended Section}
In this section, in a more formal discussion, we specify how to integrate those invariant and equivariant layers to our proposed architecture. We start by drawing an analogy between parameter sharing in linear layers and the operation of an MPNN on a fully connected graph with edge features in the following lemma,
\begin{restatable}[Parameter Sharing as MPNN]{lemma}{ParameterSharingasMPNNPlemma}
\label{lemma: Parameter Sharing as MPNN}
Let $B_1, \dots B_k :\mathbb{R}^{n \times n }$ be orthogonal matrices with entries restricted to 0 or 1, and let $W_1, \dots W_k \in \mathbb{R}^{d \times d'}$ denote a sequence of weight matrices. Define $B_+ = \sum_{i=1}^k B_i$ and choose $z_1, \dots z_k \in \mathbb{R}^{d^*}$ to be a set of unique vectors representing an encoding of the index set. The function, which represents an update via parameter sharing:
\begin{equation}
\label{eq: general parameter sharing scheme}
f(X) = \sum_{i=1}^k B_i X W_i,
\end{equation}
can be implemented by a stack of MPNN layers of the following form~\cite{gilmer2017neural},
\begin{align}
&m^l_u = \sum_{v \in N_{B_+}(u)}M^l(X^l_v, e_{u, v}), \label{eq: update in message passing}, \\ 
&X^{l+1}_u = U^l(X^l_v, m^l_v) \label{eq: message in message passing}, 
\end{align}
where $U^l, M^l$ are multilayer preceptrons (MLPs).
The inputs to this MPNN are the adjacency matrix $B_+$, node feature vector $X$,  and edge features -- the feature of edge $(u,v)$  is given by:
\begin{equation}
\label{eq: edge features in parameter sharing as mpnn}
e_{u,v} = \sum_{i=1}^k z_i \cdot B_i(u,v).
\end{equation}
Here, $B_i(u,v)$ denotes the $(u,v)$ entry to matrix $B_i$.
\end{restatable}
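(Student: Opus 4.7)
The plan is to exploit the pointwise orthogonality of the basis matrices $B_1,\dots,B_k$ to reduce the parameter-sharing operation to a per-edge computation, and then realize that computation with a single MPNN layer whose message function is an MLP.

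First I would observe that since each $B_i$ has entries in $\{0,1\}$ and the $B_i$ are orthogonal in the standard inner product, for any pair $(u,v)$ with $B_+(u,v)=1$ there is a unique index $i^\star(u,v)\in[k]$ such that $B_{i^\star(u,v)}(u,v)=1$ and $B_j(u,v)=0$ for $j\neq i^\star(u,v)$. Using this, the operation in \Cref{eq: general parameter sharing scheme} can be rewritten row-wise as
\begin{equation*}
f(X)_u \;=\; \sum_{v=1}^n\sum_{i=1}^k B_i(u,v)\, X_v W_i \;=\; \sum_{v \in N_{B_+}(u)} X_v W_{i^\star(u,v)},
\end{equation*}
which is exactly a neighborhood aggregation over the graph whose adjacency is $B_+$.

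Next I would match the edge features defined in \Cref{eq: edge features in parameter sharing as mpnn} with this index selector: because the $z_i$ are chosen pairwise distinct, the identity $e_{u,v}=z_{i^\star(u,v)}$ holds for every edge of $B_+$, so the edge feature unambiguously identifies the correct weight matrix $W_{i^\star(u,v)}$ to apply. It then remains to construct an MLP $M$ that maps $(X_v,e_{u,v})$ to $X_v W_{i^\star(u,v)}$. I would invoke universal approximation on a compact domain: since the family $\mathcal{G}$ is finite (bounded graph size and a finite set of possible node/edge feature values), the input space of $M$ lies in a bounded subset of $\mathbb{R}^{d+d^\ast}$, and the target map on this set is a finite case-split indexed by the $k$ distinct values $z_1,\dots,z_k$, each branch applying a fixed linear map $W_i$. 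The update $U$ is then the trivial MLP implementing $U^l(X_v,m_v)=m_v$, so the per-layer output equals the aggregated message, reproducing $f(X)$ exactly.

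The main obstacle in a fully formal write-up will be making the universal approximation step precise: one must guarantee that a single MLP $M$ can simultaneously (i) separate the $k$ distinct edge-feature tokens $z_1,\dots,z_k$ and (ii) apply the correct linear map $W_i$ to $X_v$ on each branch. I would address this by an explicit construction using a small selector sub-network with neurons activating on a narrow band around each $z_i$ (made possible by their pairwise distinctness) and then a linear combination that routes $X_v$ through $W_i$; alternatively, one can appeal to the standard universal approximation theorem, since the target is continuous on the required compact input set. Stacking such layers then implements any finite composition of parameter-sharing maps of the form~\Cref{eq: general parameter sharing scheme}, completing the argument.
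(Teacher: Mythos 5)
Your decomposition is correct and is essentially the paper's: both arguments start from the observation that Frobenius-orthogonality of 0/1 matrices forces disjoint supports, so each $B_+$-edge $(u,v)$ carries a unique index $i^\star(u,v)$ with $B_{i^\star(u,v)}(u,v)=1$ and $e_{u,v}=z_{i^\star(u,v)}$, turning $f(X)_u$ into a neighbourhood sum over $B_+$. Where you differ from the paper is in how the work is split between the message MLP $M$ and the update $U$. You ask $M$ to emit $X_v W_{i^\star(u,v)}$ directly and take $U$ to be the identity; the paper instead has $M$ emit the block-encoded selector $\mathrm{enc}(X_v,e_{u,v})=[\,X_v\cdot\mathbf{1}_{z_1}(e_{u,v}),\dots,X_v\cdot\mathbf{1}_{z_k}(e_{u,v})\,]$, so that after aggregation $m_u=[(B_1X)_u,\dots,(B_kX)_u]$, and the weight matrices are then applied by a single \emph{linear} update $U(X_u,m_u)=\sum_i P_i(m_u)W_i$. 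This split is not cosmetic: the memorization theorem invoked throughout the paper (\Cref{Thm: Memorization Theorem}) only guarantees exact fitting when the targets lie in $\{0,1\}^{d_y}$. The paper's $M$ has exactly such binary targets (given one-hot node features), so the theorem applies verbatim; your $M$ targets arbitrary real vectors $X_vW_{i^\star}$, which falls outside the stated hypotheses.

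That is the one genuine gap in your write-up. Your appeal to the universal approximation theorem only yields approximate realization on a compact set, whereas the lemma asserts exact implementation; and the map $(X_v,e)\mapsto X_vW_{i^\star(e)}$ does not even extend continuously off the finite input set, because $i^\star$ is a discrete choice — so UAT does not help. Two clean fixes: (i) adopt the paper's split and keep $M$'s output binary so the memorization theorem applies as stated, or (ii) compose: first memorize a one-hot code of the pair $(X_v,e_{u,v})$ (binary output), then follow with one additional linear layer mapping that code to the desired $X_vW_{i^\star}$; this keeps the construction exact and works inside a single MLP. Your alternative ``narrow-band selector'' sketch also needs care, since gating $X_v$ by a selector requires a multiplication gadget, which ReLU networks cannot perform exactly except on finite/discrete inputs — effectively bringing you back to a memorization argument.
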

The proof is given in \Cref{app: proofs}.

Thus, our implementation for the global update is as follows, 
\begin{equation}
\label{eq: gine MLP}
    \mathcal{X}(S,i) = \texttt{MLP} \left( (1 + \epsilon) \cdot \mathcal{X}(S,i) + \sum_{(S',i') \sim_{\mathcal{A}_{\text{Equiv}}} (S,i)} \texttt{MLP}\bigg(\mathcal{X}(S',i') + e_{(S',i'), (S,i)}\bigg) \right),
\end{equation}
where $e_{(S',i'), (S,i)} = \sum_{\Gamma} z_\Gamma \cdot \mathbf{B}^{\Gamma}_{S, i; S', i'}$ and $z_\Gamma$ are orthogonal 1-hot vectors for different $\Gamma$'s.
The connectivity $\mathcal{A}_{Equiv}$ is such that $\mathcal{A}_{Equiv} (S, v, S', v')$ contains the value one iff $v \in S, v = v'$. This corresponds to choosing only several $\Gamma$'s in the partition, and since each $\Gamma$ is invariant to the permutation, this choice still maintains equivariance.

\subsection{Additional Results}\label{app:additional_results}
\textbf{\textsc{Zinc-full}.} Below, we present our results on the \textsc{Zinc-full} dataset for a bag size of $T=4$ and the full-bag. For the bag size $T=4$, we benchmark against MAG-GNN~\cite{kong2024mag}, which in their experiments used the best out of the bag sizes $T \in \{2, 3, 4\}$; however, they did not specify which one performed the best. The results are summarized in \Cref{tab: app zinc}.

\begin{table}[t]
    \centering
    \scriptsize
    \caption{Comparison over the \textsc{Zinc-Full} molecular dataset under $500k$ parameter budget. The best performing  method is highlighted in \textcolor{blue}{\textbf{blue}}, while the second best is highlighted in \textcolor{red}{\textbf{red}}.}
    \label{tab: app zinc}
    \resizebox{0.5\columnwidth}{!}{%
    \addtolength{\tabcolsep}{-0.4em}
    \begin{tabular}{l|c}
    \toprule
    \multirow{2}{*}{Model $\downarrow$ / Dataset $\rightarrow$} & \textsc{ZINC-Full} \\
     & (MAE $\downarrow$) \\
    \midrule
    MAG-GNN~\cite{kong2024mag} $(T = 4)$& $\textcolor{red}{\textbf{0.030}}$\tiny{$\pm 0.002$} \\
    Ours $(T = 4)$ & $\textcolor{blue}{\textbf{0.027}}$\tiny{$\pm 0.002$} \\
    \midrule
    GNN-SSWL~\cite{zhang2023complete} $(T = \text{``full''})$& $0.026$\tiny{$\pm 0.001$} \\
    GNN-SSWL+~\cite{zhang2023complete} $(T = \text{``full''})$ & $0.022$\tiny{$\pm 0.001$} \\
    Subgraphormer~\cite{bar-shalom2024subgraphormer}$(T = \text{``full''})$ & $\textcolor{blue}{\textbf{0.020}}$\tiny{$\pm 0.002$} \\
    Subgraphormer + PE ~\cite{bar-shalom2024subgraphormer} $(T = \text{``full''})$& $0.023$\tiny{$\pm 0.001$} \\
    Ours  $(T = \text{``full''})$& $\textcolor{red}{\textbf{0.021}}$\tiny{$\pm 0.001$} \\
    \bottomrule
    \end{tabular}
    \addtolength{\tabcolsep}{0.4em}
    }
\end{table}

\textbf{\textsc{Zinc-12k} -- additional results.} 
We present all the results from \Cref{fig: main figure}, along with some additional ones, in \Cref{tab: zinc all results}.

\begin{table}[t]
    \centering
    \tiny
    \caption{Test results on the \textsc{Zinc-12k} molecular dataset under $500k$ parameter budget. The top two results are reported as \textcolor{blue}{\textbf{First}} and \textcolor{red}{\textbf{Second}}.}
    \label{tab: zinc all results}
    \begin{tabular}{@{}l@{}r|c@{}}
        \toprule
        \textbf{Method} & Bag size & ZINC (MAE $\downarrow$) \\
        \toprule
        GCN~\cite{kipf2016semi} & $T=1$ & $0.321 \pm 0.009$ \\
        GIN~\cite{xu2018powerful} & $T=1$ & $0.163 \pm 0.004$ \\
        \midrule
        OSAN~\cite{qian2022ordered} & $T=2$ & $0.177 \pm 0.016$ \\
        Random~\cite{kong2024mag} & $T=2$ & $0.131 \pm 0.005$ \\
        PL~\cite{bevilacqua2023efficient} & $T=2$ & $0.120 \pm 0.003$ \\
        Mag-GNN~\cite{kong2024mag} & $T=2$ & $\textcolor{blue}{\mathbf{0.106}} \pm 0.014$ \\
        Ours & $T=2$ & $\textcolor{red}{\mathbf{0.109}} \pm 0.005$ \\
        \midrule
        Random~\cite{kong2024mag} & $T=3$ & $0.124 \pm \text{N/A}$ \\
        Mag-GNN~\cite{kong2024mag} & $T=3$ & $\textcolor{red}{\mathbf{0.104}} \pm \text{N/A}$ \\
        Ours & $T=3$ & $\textcolor{blue}{\mathbf{0.096}} \pm 0.005$ \\
        \midrule
        Random~\cite{kong2024mag} & $T=4$ & $0.125 \pm \text{N/A}$ \\
        Mag-GNN~\cite{kong2024mag} & $T=4$ & $\textcolor{red}{\mathbf{0.101}} \pm \text{N/A}$ \\
        Ours & $T=4$ & $\textcolor{blue}{\mathbf{0.090}} \pm 0.003$ \\
        \midrule
        Random~\cite{bevilacqua2023efficient} & $T=5$ & $0.113 \pm 0.006$ \\
        PL~\cite{bevilacqua2023efficient} & $T=5$ & $\textcolor{red}{\mathbf{0.109}} \pm 0.005$ \\
        Ours & $T=5$ & $\textcolor{blue}{\mathbf{0.095}} \pm 0.003$ \\
        \midrule
        Random~\cite{bevilacqua2023efficient} & $T=8$ & $0.102 \pm 0.003$ \\
        PL~\cite{bevilacqua2023efficient} & $T=8$ & $\textcolor{red}{\mathbf{0.097}} \pm 0.005$ \\
        Ours & $T=8$ & $\textcolor{blue}{\mathbf{0.094}} \pm 0.006$ \\
        \midrule
        Ours & $T=18$ & $\textcolor{blue}{\mathbf{0.082}} \pm 0.003$ \\
        \midrule
          NGNN~\cite{zhang2021nested} & Full & $0.111$\tiny $\pm 0.003$ \\
        DS-GNN~\cite{bevilacqua2021equivariant} & Full& $0.116$\tiny $\pm 0.009$  \\
        DSS-GNN~\cite{bevilacqua2021equivariant}  & Full& $0.102$\tiny $\pm 0.003$  \\
        GNN-AK~\cite{zhao2022from} & Full& $0.105$\tiny $\pm 0.010$  \\
        GNN-AK+~\cite{zhao2022from} & Full& $0.091$\tiny $\pm 0.002$   \\
        SUN~\cite{frasca2022understanding} & Full& $0.083$\tiny $\pm 0.003$   \\
        OSAN~\cite{qian2022ordered} & Full& $0.154$\tiny $\pm 0.008$   \\
        GNN-SSWL$+$~\cite{zhang2023complete} & Full & $0.070 \pm 0.005$ \\
        Subgraphormer~\cite{bar-shalom2024subgraphormer} & Full & $0.067 \pm 0.007$ \\
        Subgraphormer+PE~\cite{bar-shalom2024subgraphormer} & Full & $\textcolor{red}{\mathbf{0.063}} \pm 0.001$ \\
        Ours & Full & $\textcolor{blue}{\mathbf{0.062}} \pm 0.0007$ \\
        \bottomrule
    \end{tabular}
\end{table}

\textbf{Runtime comparison.} We compare the training time and prediction performance on the \textsc{Zinc-12k} dataset. For all methods, we report the training and inference times on the entire training and test sets, respectively, using a batch size of 128. Our experiments were conducted using an NVIDIA L40 GPU, while for the baselines, we used the timing reported in \cite{bevilacqua2023efficient}, which utilized an RTX A6000 GPU. The runtime comparison is presented in \Cref{tab: app runtime}.

\begin{table}[t]
    \centering
    \scriptsize
    \caption{Run time comparison over the \textsc{Zinc-12k} dataset. Time taken
at train for one epoch and at inference on the test set. All values are in milliseconds.}
    \label{tab: app runtime}
    \resizebox{0.7\columnwidth}{!}{%
    \addtolength{\tabcolsep}{-0.4em}
    \begin{tabular}{l|c|c|c}
    \toprule
     Method & Train time (for a single epoch; ms) & Test time (ms) & MAE $\downarrow$ \\
    \midrule
    GIN~\cite{xu2018powerful} & $1370.10 \pm 10.97$ & $84.81 \pm 0.26$ & $0.163 \pm 0.004$ \\
    \midrule
    OSAN~\cite{qian2022ordered} $(T=2)$ & $2964.46 \pm 30.36$ & $227.93 \pm 0.21$ & $0.177 \pm 0.016$ \\
    PL~\cite{bevilacqua2023efficient} $(T=2)$ & $2489.25 \pm 9.42$ & $150.38 \pm 0.33$ & $0.120 \pm 0.003$ \\
    Ours $(T=2)$ & $2764.60 \pm 234$ & $383.14 \pm 15.74$ & $0.109 \pm 0.005$ \\
    \bottomrule
    \end{tabular}
    \addtolength{\tabcolsep}{0.4em}
    }
\end{table}

\subsection{\textsc{Zinc12k} Product Graph Visualization}
In this subsection, we visualize the product graph derived from the first graph in the \textsc{Zinc12k} dataset. Specifically, we present the right part of \Cref{fig: transf and prod}, for the case of the real-world graphs in the \textsc{Zinc12k} dataset. We perform this visualization for different cluster sizes, $T \in \{2, 3, 4, 5, 8 ,12\}$, which also define the bag size, hence the notation $T$. 
The nodes in the product graph, $\mathcal{T}(G) \square G$, are $(S,v)$, where $S$ is the coarsened graph node (again a tuple), and $v$ is the node index (of a node from the original graph). For better clarity, we color the nodes $(S, v)$ with $v \in S$ using different colors, while reserving the gray color exclusively for nodes $(S, v)$ where $v \notin S$. The product graphs are visualized in \Cref{fig: T=2,fig: T=3,fig: T=4,fig: T=5,fig: T=8,fig: T=12} below.

\begin{figure}[h]
    \centering
    \begin{minipage}[b]{\textwidth}
        \centering
        \includegraphics[width=0.8\textwidth]{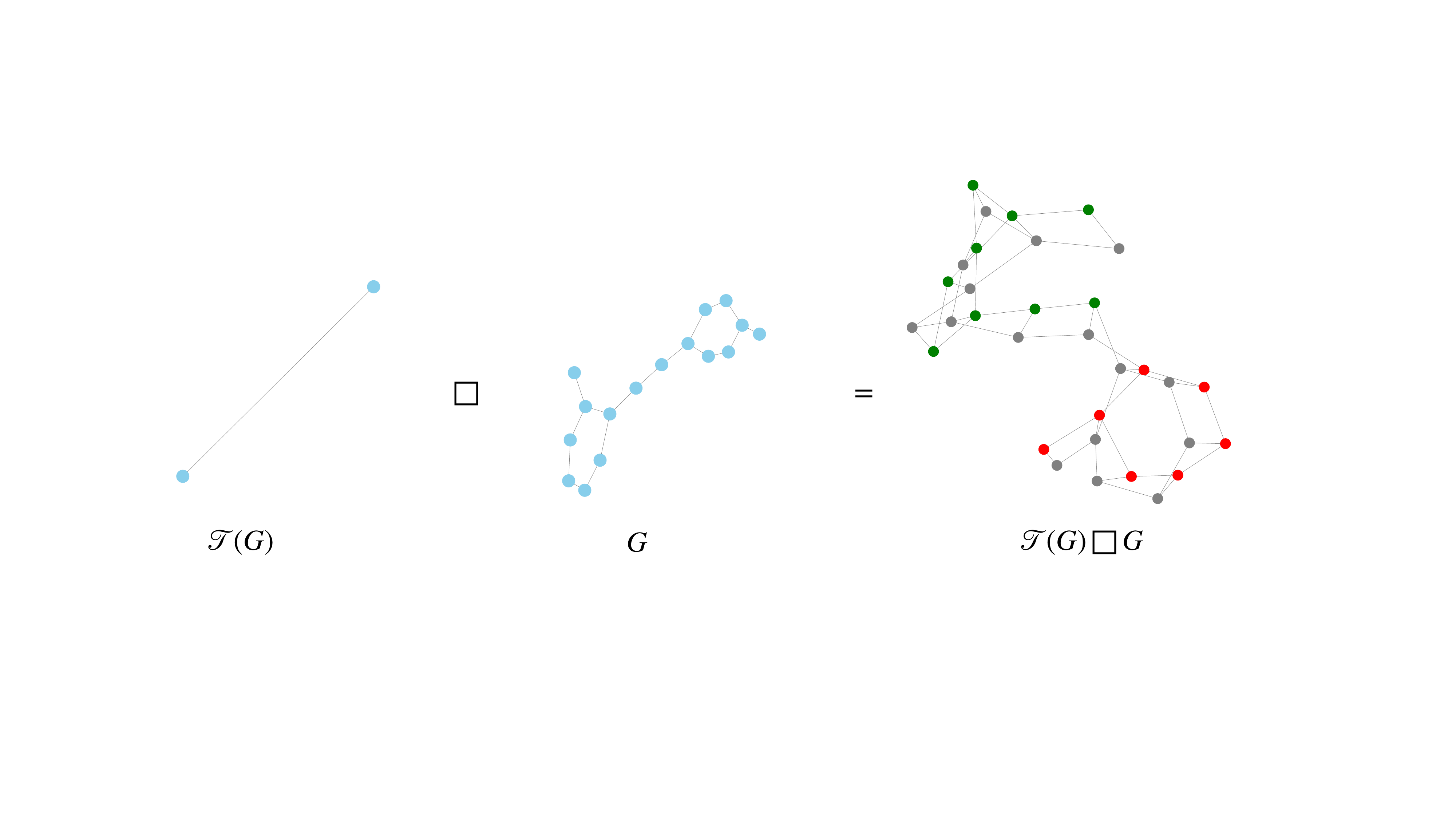}
        \caption{$T=2$.}
        \label{fig: T=2}
    \end{minipage}

    \vspace{1cm}

    \begin{minipage}[b]{\textwidth}
        \centering
        \includegraphics[width=0.8\textwidth]{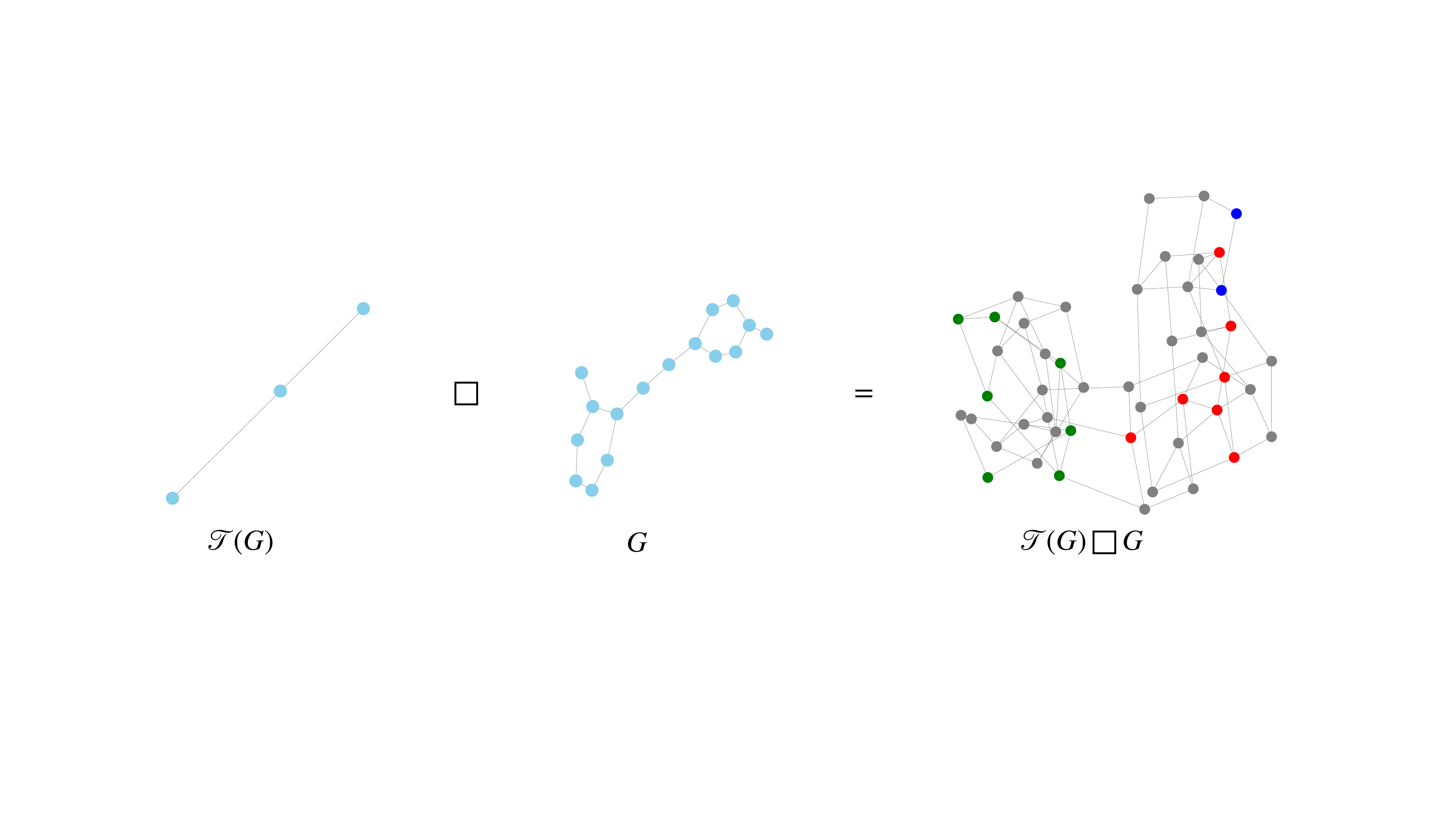}
        \caption{$T=3$.}
        \label{fig: T=3}
    \end{minipage}

    \vspace{1cm}

    \begin{minipage}[b]{\textwidth}
        \centering
        \includegraphics[width=0.8\textwidth]{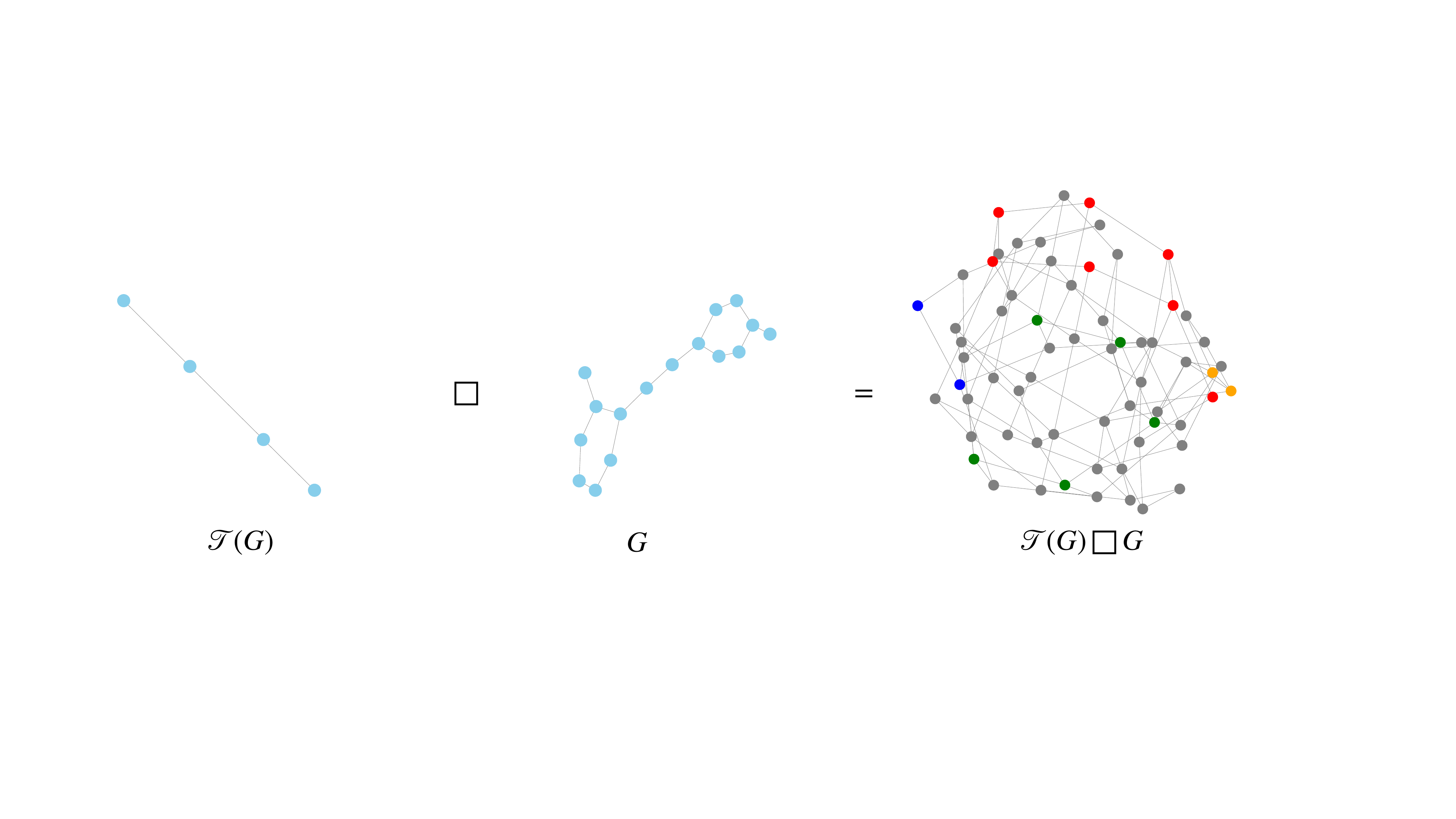}
        \caption{$T=4$.}
        \label{fig: T=4}
    \end{minipage}
    \label{fig:page1}
\end{figure}

\clearpage 

\begin{figure}[h]
    \centering
    \begin{minipage}[b]{\textwidth}
        \centering
        \includegraphics[width=0.8\textwidth]{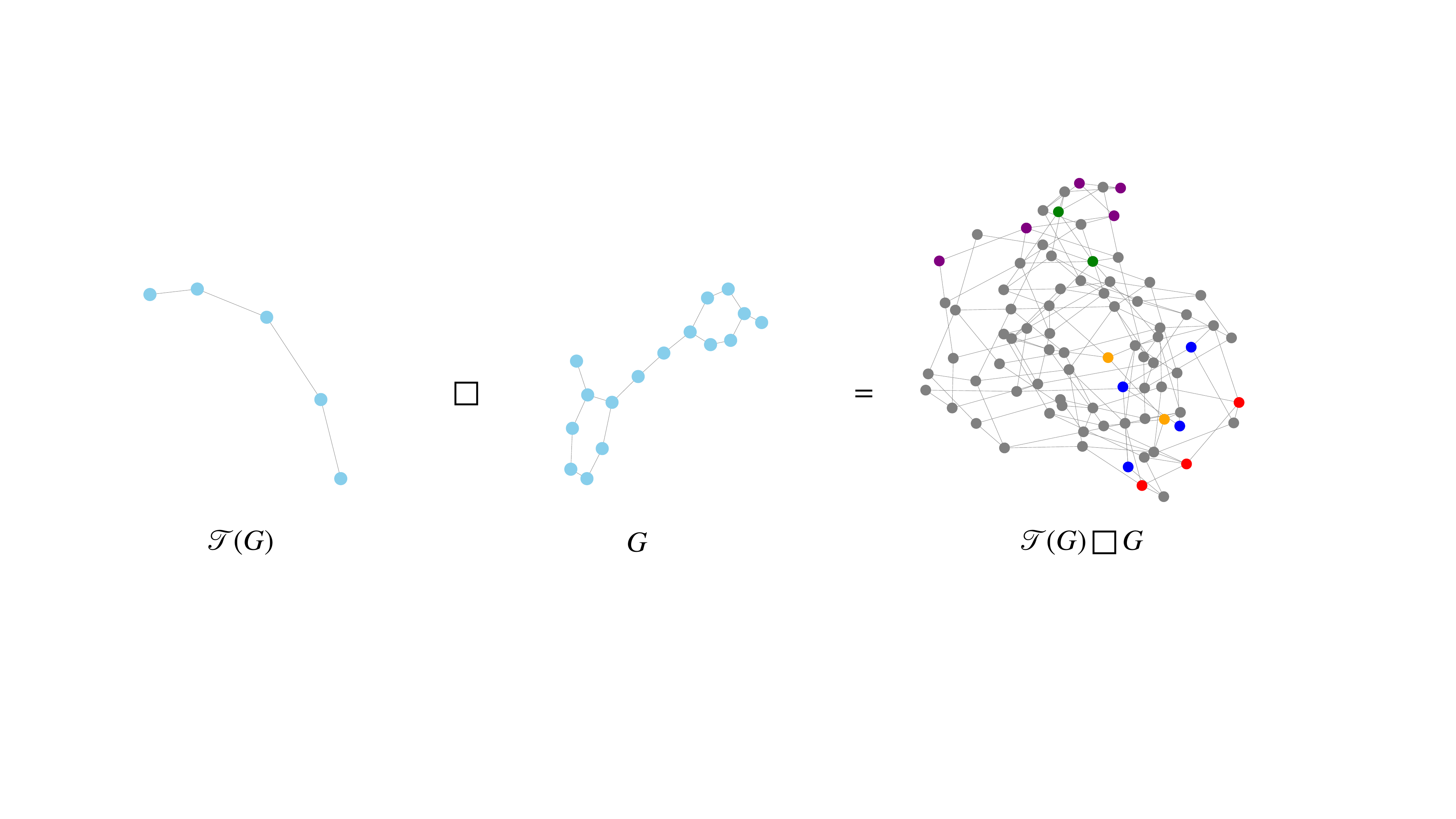}
        \caption{$T=5$.}
        \label{fig: T=5}
    \end{minipage}

    \vspace{1cm}

    \begin{minipage}[b]{\textwidth}
        \centering
        \includegraphics[width=0.8\textwidth]{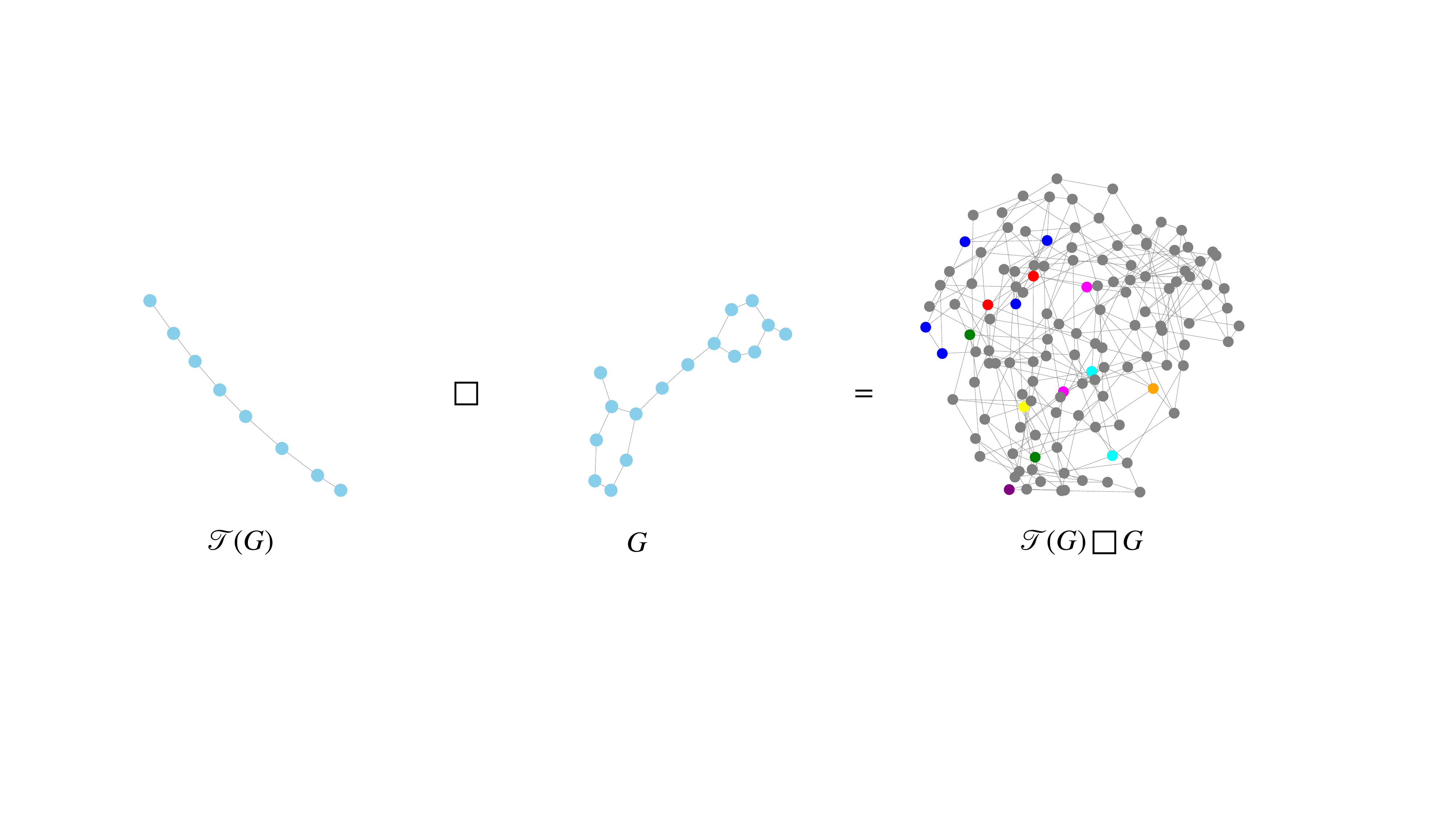}
        \caption{$T=8$.}
        \label{fig: T=8}
    \end{minipage}

    \vspace{1cm}

    \begin{minipage}[b]{\textwidth}
        \centering
        \includegraphics[width=0.8\textwidth]{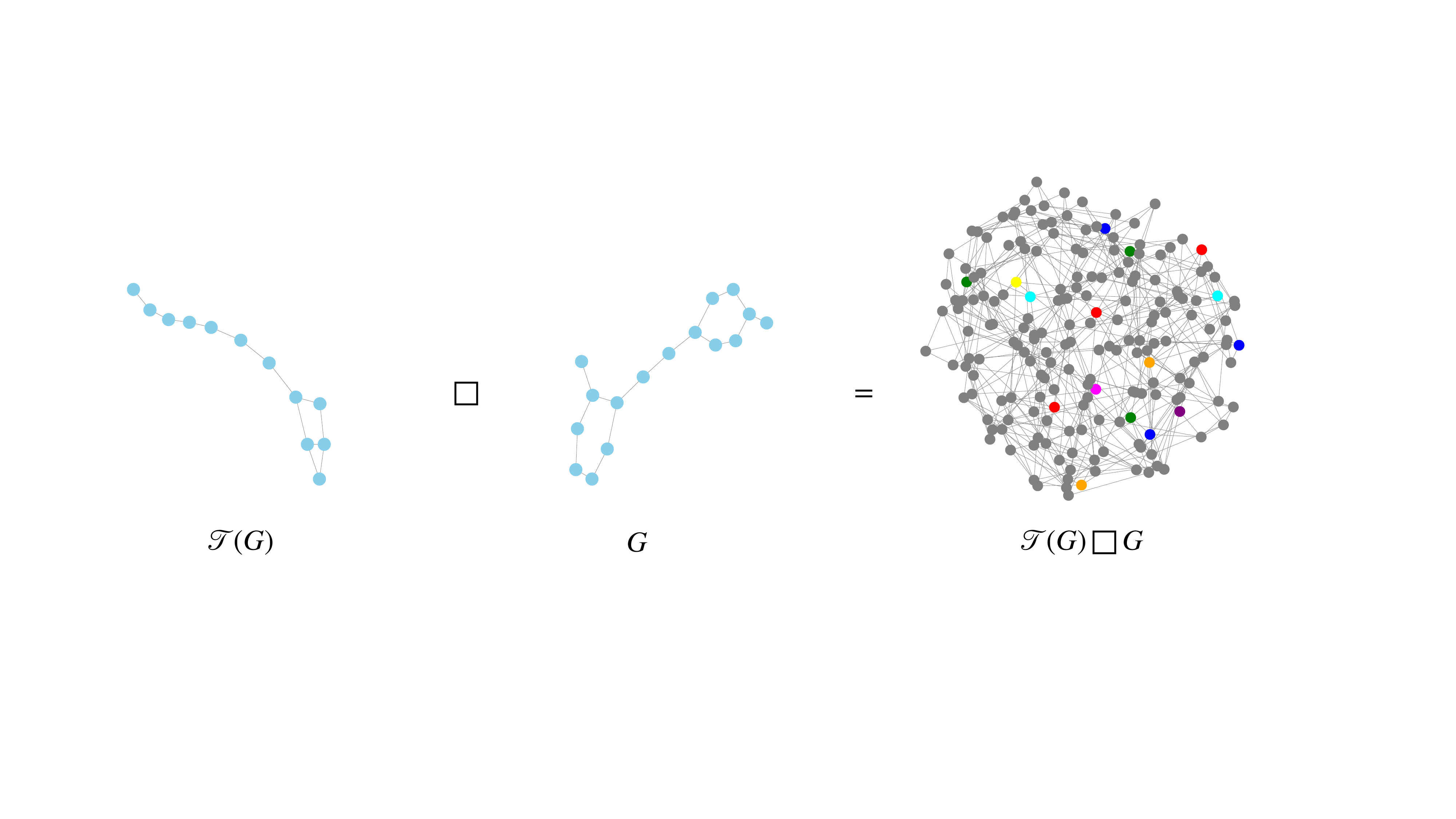}
        \caption{$T=12$.}
        \label{fig: T=12}
    \end{minipage}
    \label{fig:page2}
\end{figure}
\clearpage
\section{Proofs}
\label{app: proofs}
\subsection{Proofs of \Cref{app: Theoretical Validation of Implementation Details}}
We first state the memorization theorem, proven in \cite{yun2019small} , which will be heavily used in a lot of the proofs in this section. 

\begin{restatable}[Memorization Theorem]{theorem}{MemorizationTheorem}
\label{Thm: Memorization Theorem}
Consider a dataset $\{x_j, y_j\}^N_{j=1} \in \mathbb{R}^d \times \mathbb{R}^{d_y}$ , with each $x_j$ being distinct and every $y_j \in \{0, 1\}^{d_y}$. There exists a 4-layer fully connected ReLU neural network $f_\theta : \mathbb{R}^d \rightarrow \mathbb{R}^{d_y}$ that perfectly maps each $x_j$ to its corresponding $y_j$, i.e., $f_\theta(x_j) = y_j$ for all $j$.
\end{restatable}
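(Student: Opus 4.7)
The plan is to reduce the multi-dimensional memorization problem to a one-dimensional interpolation problem, and then solve the one-dimensional problem by a sum of ReLU-realizable tent functions (``bumps'') localised at the data points. This gives an \emph{exact} fit, not merely an approximation.

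\textbf{Step 1: Project to distinct scalars.} First I would pick $w \in \mathbb{R}^d$ such that the scalars $s_j := w^\top x_j$ are pairwise distinct. Since the $x_j$ are distinct, the ``bad'' set $\bigcup_{j\neq k}\{w : w^\top(x_j-x_k)=0\}$ is a finite union of proper hyperplanes in $\mathbb{R}^d$, so a generic $w$ works. After relabelling, assume $s_1<s_2<\cdots<s_N$ and set $\delta = \tfrac{1}{2}\min_{j}(s_{j+1}-s_j)>0$. This reduces the problem to interpolating the map $s_j \mapsto y_j$ on $\mathbb{R}$ with a ReLU network. The linear map $x\mapsto w^\top x$ is implemented by the first layer.

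\textbf{Step 2: Build ReLU bumps.} For each $j$ I would construct the tent function
\[
b_j(t) \;=\; \tfrac{1}{\delta}\bigl(\sigma(t-s_j+\delta)-2\sigma(t-s_j)+\sigma(t-s_j-\delta)\bigr),
\]
supported on $[s_j-\delta,s_j+\delta]$, with $b_j(s_j)=1$ and $b_j(s_k)=0$ for all $k\neq j$ (this vanishing is \emph{exact}, because $\delta$ is chosen strictly smaller than any gap between the $s_k$'s, and ReLU is exactly zero on negatives). The entire family $\{b_j\}_{j=1}^N$ sits inside a single ReLU hidden layer of width $3N$ taking $s=w^\top x$ as input. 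Then I would define
\[
f_\theta(x) \;=\; \sum_{j=1}^N y_j \, b_j(w^\top x),
\]
implemented by a linear output layer whose weight matrix has $k$-th row $((y_1)_k,\ldots,(y_N)_k)$ contracted against the tent-readout coefficients. Evaluating at $x_k$ gives $\sum_j y_j \, b_j(s_k)=\sum_j y_j \, \delta_{jk} = y_k$, as required. The construction so far uses the linear projection, one ReLU hidden layer, and a linear output layer; to match the depth budget of $4$ exactly, I would pad with identity-like ReLU layers $t\mapsto \sigma(t)-\sigma(-t)$, which are expressible as width-$2$ ReLU compositions and preserve all activations.

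\textbf{Main obstacle.} The hard part is ensuring \emph{exact} memorization rather than an $\varepsilon$-approximation, because ReLUs are only piecewise-linear and one might worry about bleed-through between bumps. This is precisely what the choice $\delta<\tfrac{1}{2}\min_j(s_{j+1}-s_j)$ is designed to handle: adjacent bumps have disjoint supports, so $b_j(s_k)$ is \emph{exactly} $0$ for $k\neq j$, and no limiting argument is needed. A secondary subtlety is the multi-dimensional output $y_j\in\{0,1\}^{d_y}$, but this is immediate since the output layer is linear and the same hidden bump features are reused across all $d_y$ output coordinates. A minor efficiency consideration — which is the main thrust of the Yun--Sra--Jadbabaie construction but not required for the existence statement used here — is whether $\Theta(N)$ hidden units can be reduced using bit-extraction; for our purposes the straightforward $O(N)$-width bump construction above is sufficient.
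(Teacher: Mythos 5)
Your proof is correct, but note that the paper does not prove this result itself: it simply restates it as a cited theorem from \cite{yun2019small} and uses it as a black box throughout the appendix. Your argument therefore supplies an independent proof, and it takes a genuinely different and considerably more elementary route than the cited reference. The bump-function construction — a generic linear projection $w$ giving pairwise-distinct scalars $s_j = w^\top x_j$, followed by $N$ disjointly supported tent functions $b_j(t) = \tfrac{1}{\delta}\bigl(\sigma(t-s_j+\delta)-2\sigma(t-s_j)+\sigma(t-s_j-\delta)\bigr)$ each built from three ReLUs on the same projected input — is sound: the affine pre-activations $w^\top x - s_j \pm \delta$ fold the projection into the first hidden layer, the choice $\delta \le \tfrac12\min_j(s_{j+1}-s_j)$ makes $b_j(s_k)=\delta_{jk}$ hold exactly, the multi-output case is handled by reusing the same hidden features across the $d_y$ linear readout coordinates, and the identity trick $\sigma(t)-\sigma(-t)=t$ correctly pads the depth from one hidden layer to the stated four. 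The trade-off, as you yourself flag, is width: your construction uses $\Theta(N)$ hidden units, whereas the construction in \cite{yun2019small} is precisely designed to reach roughly $\tilde{O}(\sqrt{N})$ units through a more intricate encode--decode scheme. Since the paper invokes the theorem purely for existence with no width or parameter bound, your simpler construction is fully sufficient for the role the theorem plays here, and is arguably more self-contained.
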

We now restate and prove the propositions and lemmas of \Cref{app: Theoretical Validation of Implementation Details}.

\ParameterSharingasMPNNlemma*

\begin{proof}
Since we are concerned only with input graphs $G$ from a finite family of graphs (where "finite" means that the maximal graph size is bounded and all possible node and edge feature values come from a finite set), we assume that for any $v \in [n]$, $i \in [k]$, both the input feature vectors $X_v \in \mathbb{R}^d$ and the encoding vectors $z_i \in \mathbb{R}^{d^*}$ are one-hot encoded. We aim to show that under these assumptions, any function $f(\cdot)$ of the form \ref{eq: general parameter sharing scheme} can be realized through a single-layer update detailed in Equations  \ref{eq: message in message passing} , \ref{eq: update in message passing}, where $M$ is a 4 layer MLP , and $U$ is a single linear layer. The proof involves the following steps:
\begin{enumerate}
    \item Compute $[B_1X, \dots, B_kX]$ using the message function $M$.
    \item Compute $f(X)$ using the update function $U$.
\end{enumerate}

\textbf{Step 1:} We notice that for every $i \in [k]$, $
v\in [n]$ we have:
\begin{equation}
    \label{eq: multiplication by linear layer as sum over neighborhood}
    (B_iX)_v = \sum_{B_i(v,u) = 1}X_u = \sum_{u \in N_{B_+}(v)}X_u \cdot \mathbf{1}_{z_i}(e_{u,v}).
\end{equation}
Here $\mathbf{1}_{z_i}$ is the indicator function of the set $\{z_i\}$. We notice that since $X_u$ and $z_i$ are one-hot encoded, there is a finite set of possible values for the pair $(X_u, e_{u,v})$. In addition, the function:
\begin{equation}
\label{eq: encoding the output of all basis linear maps}
enc(X_u, e_{u,v}) =  [X_u \cdot \mathbf{1}_{z_1}(e_{u,v}), \dots, X_u \cdot \mathbf{1}_{z_k}(e_{u,v})]  
\end{equation}

outputs vectors in the set $\{0,1\}^{d \times k}$. Thus, employing the memorization theorem \ref{Thm: Memorization Theorem}, we define a dataset $\{x_j,y_j\}_{j=1}^N$  by taking the $x_j$s to be all possible (distinct) values of $(X_u, e_{u,v})$ with each corresponding $y_i$ being the output $\text{enc}(x_i)$. We note that there are finitely many such values as both $X_u$ and $e_{u,v}$ are one-hot encoded. The theorem now tells us that there exists a a 4-layer fully connected ReLU neural network $M$ such that:
\begin{equation}
\label{eq: message implements enc}
M(X_u, e_{u,v}) = enc(X_u, e_{u,v}).
\end{equation}

and so, equation \ref{eq: multiplication by linear layer as sum over neighborhood} implies:
\begin{equation}
\label{eq: message implements all linear layers}
m_v = \sum_{u \in N_{B_+}(v)}M(X_u, e_{u, v}) = [(B_1X)_v, \dots, (B_kX)_v].
\end{equation}

\textbf{Step 2:} Define $P_i: \mathbb{R}^{k \times d} \rightarrow \mathbb{R}^{d}$ as the projection operator, extracting coordinates $d\cdot i + 1$ through $d \cdot (i+1)$ from its input vector:
\begin{equation}
   P_i(V) = V|_{d\cdot i +1 : d \cdot (i+1)}. 
\end{equation}
We define the update function to be the following linear map:
\begin{equation}
\label{eq: update def}
U(X_v, m_v) = \sum_{i=1}^{k} P_{i}(m_v) W_i.
\end{equation}
Combining equations \ref{eq: message implements all linear layers} and \ref{eq: update def} we get:
\begin{equation}
\label{eq: implementation of linaer layer combination in single mpnn layer}
\tilde{X}_v = U(X_v, m_v)= \sum_{i=1}^{k} (B_iX)_v \cdot W_i = f(X)_v.
\end{equation}
\end{proof}

\EquivGeneralLayerImplementLayerProp*
\begin{proof}
For convenience, let us first restate the general layer update:

\begin{equation}
\begin{split}
\mathcal{X}^{t+1}(S,v) = &\ f^t\bigg(\mathcal{X}^t(S,v),\\
&\ \text{agg}^t_1\ldblbrace(\mathcal{X}^t(S,v'), e_{v,v'}) \mid v' \sim_G v \rdblbrace, \\
&\ \text{agg}^t_2\ldblbrace(\mathcal{X}^t(S',v), \tilde{e}_{S,S'}) \mid S' \sim_{G^{\mathcal{T}}} S \rdblbrace,\\ 
&\ \text{agg}^t_3\ldblbrace(\mathcal{X}^t(S',v), z(S,v,S',v)) \mid  S' \in V^{\mathcal{T}} \text{s.t.}\ v \in S' \rdblbrace , \\
&\ \text{agg}^t_4\ldblbrace(\mathcal{X}^t(S,v'), z(S,v,S,v')) \mid v' \in V \text{s.t.}\ v' \in S \rdblbrace \bigg),
\end{split}\tag{\ref{eq: transformed subgraph message passing}}
\end{equation}

as well as the two step implemented layer update:

\begin{equation}
\mathcal{X}^t_i(S,v) = U^t_i\left((1+\epsilon_i^t)\cdot \mathcal{X}^t(S,v) + \sum_{(S',v') \sim_{A_i} (S,v)} M^t(\mathcal{X}^t(S',v') + e_i(S,v,S',v'))\right). \tag{\ref{eq: intermediate update in layer implementation}}
\end{equation}

\begin{equation}
\mathcal{X}^{t+1}(S,v) = \text{U}^t_\text{fin}\left(\sum_{i=1}^4 \mathcal{X}^t_i(S,v)\right).\tag{\ref{eq: final update in layer implementation}}
\end{equation}

We aim to demonstrate that any general layer, which updates the node feature map $\mathcal{X}^t(S,v)$ at layer $t$ to node feature map $\mathcal{X}^{t+1}(S,v)$ at layer $t+1$ as described in equation \ref{eq: transformed subgraph message passing}, can be effectively implemented using the layer update processes outlined in equations \ref{eq: intermediate update in layer implementation} and \ref{eq: final update in layer implementation}.

As we are concerned only with input graphs belonging to the finite graph family $\mathcal{G}$ (where "finite" indicates that the maximal graph size is bounded and all node and edge features have a finite set of possible values), we assume that the values of the node feature map $\mathcal{X}^t(S,v)$ and the edge feature vectors $e_i(S,v,S',v')$ are represented as one-hot vectors in $\mathbb{R}^k$. We also assume that the parameterized functions $f^t$ and $\text{agg}^t_1, \dots \text{agg}^t_4$, which are applied in Equation \ref{eq: transformed subgraph message passing} outputs one-hot vectors. Finally, we assume that there exists integers $d, d^*$, such that the node feature map values are supported on coordinates $1, \dots d$, the edge feature vectors are supported on coordinates $d+1, \dots d + d^*$, and coordinates $d + d^* +1, \dots k$ are used as extra memory space, with: 
\begin{equation}
\label{eq: memory condition on one-hot encoded vector in implementation}
 k > d \times d^* + d + d^*. 
\end{equation}
We note that the last assumption can be easily achieved using padding.
The proof involves the following steps:
\begin{enumerate}
    \item For $i = 1, \dots, 4$, Use the term:
    \begin{equation}
    \label{eq: term to encode neigborhood A_i}
        m^t_i = \sum_{(S',v') \sim_{A_i} (S,v)} M^t(\mathcal{X}^t(S',v') + e_i(S,v,S',v'))
    \end{equation}
    to uniquely encode: 
    \begin{equation}
    \label{eq: neigborhood A_i implemetation proof}
    \ldblbrace (\mathcal{X}^t(S',v'), e_i(S,v,S',v')) \mid  (S',v') \sim_{A_i} (S,v)\rdblbrace.
    \end{equation}
    \item Use the term:
    \begin{equation}
    \label{eq: term to encode full input to f^l}
    \mathcal{X}^t_* = \sum_{i=1}^4 \mathcal{X}^t_i(S,v)
    \end{equation}
    to uniquely encode the input of $f^t$ as a whole.
    
    \item Implement the parameterized function $f^t$.

\end{enumerate}

\textbf{Step 1:} Since we assume that node feature map values and edge feature vectors are supported on orthogonal sub-spaces of $\mathbb{R}^{k}$, the term: 
\begin{equation}
\label{eq: sum of edge and node features}
  \mathcal{X}^t(S,v) + e_i(S,v,S',v')  
\end{equation}
uniquely encodes the value of the tuple:
\begin{equation}
\label{eq: tuple edge and node features }
    (\mathcal{X}^t(S,v), e_i(S,v,S',v')).
\end{equation}

Since $\mathcal{X}^t(S,v)$ is a one-hot encoded vector with $d$ possible values, while $e_i(S,v,S',v')$ is a one-hot encoded vector with $d^*$ possible values, their sum has $d \cdot d^*$ possible values. Thus there exists a function:
\[
\text{enc}:\mathbb{R}^k \rightarrow \mathbb{R}^k 
\]
which encodes each such possible value as a one-hot vector in $\mathbb{R}^k$ supported on the last $k - d -d^*$ coordinates (this is possible because of equation \ref{eq: memory condition on one-hot encoded vector in implementation}). Now, employing theorem \ref{Thm: Memorization Theorem}, we define the $x_j$s as all possible (distinct) values of  \ref{eq: sum of edge and node features}, with each corresponding $y_j$ being the output $\text{enc}(x_j)$. The theorem now tells us that there exists a 4-layer fully connected ReLU neural network capable of implementing the function $\text{enc}(\cdot)$. We choose $M^t$ to be this network. Now since $ m^t_i$,  defined in equation \ref{eq: term to encode neigborhood A_i} is a sum of one-hot encoded vectors, it effectively counts the number of each possible value in the set \ref{eq: neigborhood A_i implemetation proof}. This proves step 1.

\textbf{Step 2:} First, we note that:
\begin{equation}
    \begin{aligned}
        \label{eq: A^G is local update}
        & \ldblbrace(\mathcal{X}^t(S,v'), e_{v, v'}) \mid v' \sim_G v \rdblbrace \\
        & = \ldblbrace(\mathcal{X}^t(S',v'), e_G(S,v,S',v')) \mid (S,v) \sim_{A_G} (S',v') \rdblbrace
    \end{aligned}
\end{equation}

\begin{equation}
    \begin{aligned}
        \label{eq: A^G^t is local coarsened graph update}
        & \ldblbrace(\mathcal{X}^t(S',v), e_{s',s}) \mid S' \sim_{\mathcal{T}(G)} S \rdblbrace \\
        & = \ldblbrace(\mathcal{X}^t(S',v'), e_{\mathcal{T}(G)}(S,v,S',v')) \mid (S,v) \sim_{A_{\mathcal{T}(G)}} (S',v') \rdblbrace
    \end{aligned}
\end{equation}

\begin{equation}
    \begin{aligned}
        \label{eq: A^p1 is column update update}
        & \ldblbrace(\mathcal{X}^t(S',v), z(S,v,S',v')) \mid v \in S' \rdblbrace \\
        & = \ldblbrace(\mathcal{X}^t(S',v'), e_{P_1}(S,v,S',v')) \mid (S,v) \sim_{A_{P_1}} (S',v') \rdblbrace
    \end{aligned}
\end{equation}

\begin{equation}
    \begin{aligned}
        \label{eq: A^p2 is row update}
        & \ldblbrace(\mathcal{X}^t(S,v'),  z(S,v,S',v')) \mid v' \in S \rdblbrace \\
        & =  \ldblbrace(\mathcal{X}^t(S',v'), e_{P_2}(S,v,S',v')) \mid (S,v) \sim_{A_{P_2}} (S',v') \rdblbrace
    \end{aligned}
\end{equation}

Now, since $m^t_i$ and $\mathcal{X}^t(S,v)$ are supported on orthogonal sub-spaces of $\mathbb{R}^k$, the sum  $\mathcal{X}^t(S,v) + m^t_i$ uniquely encodes the value of:
\begin{equation}
\label{eq: tuple of node feature and neighborhood}
\left(\mathcal{X}^t(S,v), \ldblbrace(\mathcal{X}^t(s,v), e_i(S,v,S',v')) \mid (S,v) \sim_{A_i} (S,' v')\rdblbrace \right). 
\end{equation}
Thus, we choose $\epsilon^t_1, \dots, epsilon^t_4$ to be all zeroes. To compute the aggregation functions $\text{agg}_1^t, \dots, \text{agg}_4^t$ using these unique encodings, and to avoid repetition of the value $\mathcal{X}^t(S,v)$, we define auxiliary functions $\tilde{\text{agg}_i^t}: \mathbb{R}^k \rightarrow \mathbb{R}^{k_i}$ for $i = 1, \dots, 4$ as follows:

\begin{equation}
    \label{eq: definition of tilde agg_1}
    \tilde{\text{agg}}^t_1(\mathcal{X}^t(S,v) + m^t_1) = \left(\mathcal{X}^t(S,v), \text{agg}^t_1\ldblbrace(\mathcal{X}^t(S,v'), e_1(S,v,S',v')) \mid (S,v) \sim_{A_1} (S,' v') \rdblbrace \right)
\end{equation}
and for $i  > 1$: 
\begin{equation}
    \label{eq: definition of tilde agg_1}
    \tilde{\text{agg}}^t_i(\mathcal{X}^t(S,v) + m^t_i) =  \text{agg}^l_i\ldblbrace (\mathcal{X}^t(s,v), e_i(S,v,S',v')) \mid (S,v) \sim_{A_i} (S,' v')\rdblbrace.
\end{equation}
Here, since  we avoided repeating the value of $\mathcal{X}^t(S,v)$ by only adding it to the output of $\tilde{\text{agg}}^t_1(\cdot)$,  the expression:
\begin{equation}
    \label{eq: agg tilde as encoding of input}
    \left(\tilde{\text{agg}}^t_1(\mathcal{X}^t(S,v) + m^t_1), \dots ,  \tilde{\text{agg}}^t_4(\mathcal{X}^t(S,v) + m^t_4) \right)
\end{equation}
is exactly equal to the input of $f^t$.
In addition, since the function $\text{agg}^t_i$ outputs one-hot encoded vectors, and the vector $\mathcal{X}^t(S,v)$ is one-hot encoded, the output of $\tilde{\text{agg}}^t_i$ is always within the set $\{0,1\}^{k_i}$. Now for any input vector $X \in \mathbb{R}^k$ define:
\begin{equation}
\label{eq: padded agg tilde 1}
    V^t_1(X) = (\tilde{\text{agg}}^t_1(X),\ 0_{k_2},\ 0_{k_3},\ 0_{k_4}). 
\end{equation}
\begin{equation}
\label{eq: padded agg tilde 2}
    V^t_2(X) = (0_{k_1},\ \tilde{\text{agg}}^t_2(X),\ 0_{k_3},\ 0_{k_4}).
\end{equation}
\begin{equation}
\label{eq: padded agg tilde 3}
   V^t_3(X) = (0_{k_1},\ 0_{k_2},\ \tilde{\text{agg}}^t_3(X),\  0_{k_4}).
\end{equation}
\begin{equation}
\label{eq: padded agg tilde 4}
    V^t_4(X) = (0_{k_1},\ 0_{k_2},\ \ 0_{k_3},\tilde{\text{agg}}^t_4(X)).
\end{equation}
We note that since the output of $\text{agg}^t_i$ is always within the set $\{0,1\}^{k_i}$, the outputs of $V^t_i$ is always within $\{0,1\}^{k_1 + \dots +k_4}$.
Now for $i =1 ,\dots 4$, employing theorem \ref{Thm: Memorization Theorem} we define a dataset $\{x_j,y_j\}_{j=1}^N$  by taking the $x_j$s  as all possible (distinct) values of  $\mathcal{X}^t(S,v) +m^t_i$, with each corresponding $y_j$ being the output $V^t_i(x_j)$. We note that there are finitely many such values as both $\mathcal{X}^t(S,v)$ and $m^t_i$ are one-hot encoded vectors. The theorem now tells us that there exists a a 4-layer fully connected ReLU neural network capable of implementing the function $V^t_i(\cdot)$. We choose $U^t_i$ to be this network.
Equations \ref{eq: padded agg tilde 1} - \ref{eq: padded agg tilde 4} now give us:
\begin{equation}
\label{eq: sum of h^l_i as encoding full input}
\sum_{i=1}^4 \mathcal{X}^t_i(S,v) =  \left(\tilde{\text{agg}}^t_1(\mathcal{X}^t(S,v) + m^t_1), \dots ,  \tilde{\text{agg}}^t_4(\mathcal{X}^t(S,v) + m^t_4) \right). 
\end{equation}
which as stated before, is exactly the input to $f^t$. This proves step 2.

\textbf{Step 3: } We employ theorem \ref{Thm: Memorization Theorem} for one final time, defining a dataset $\{x_j,y_j\}_{j=1}^N$  by taking the $x_j$s  as all possible(distinct) values of:
\[
\sum_{i=1}^4 \mathcal{X}^t_i(S,v)
\]
(which we showed is a unique encoding to the input of $f^t(\cdot)$), with each corresponding $y_j$ being the output $f^t(x_j)$. We note that Given the finite nature of our graph set, there are finitely many such values. Recalling that $f^t(\cdot)$ outputs one-hot encoded vectors, The theorem now tells us that there exists a a 4-layer fully connected ReLU neural network capable of implementing the function $f^t(\cdot)$. We choose $U^t_\text{fin}$ to be this network. This completes the proof.
\end{proof}

\subsection{Proofs of \Cref{app:Node Initialization -- Theoretical Analysis}}\label{proofs:init}

\EqualExpressivityMarkingProp*
\begin{proof}
 Let $\Pi = \{\pi_\text{S},  \pi_\text{SS}, \pi_\text{MD}\}$ be the set of all relevant node initialization policies, and assume for simplicity that our input graphs have no node features (the proof can be easily adjusted to account for the general case). For each $\pi \in \Pi$, let $\mathcal{X}^\pi(S,v)$ denote the node feature map induced by general node marking policy $\pi$, as per \Cref{def: Four General Node Marking policies}. We notice it is enough to prove for each $\pi_1, \pi_2 \in \Pi$ that $\mathcal{X}^{\pi_1}(S,v)$ can be implemented by updating $\mathcal{X}^{\pi_2}(S,v)$ using a stack of $T$ layers of type \ref{eq: transformed mpnn layers}. Thus, we prove the following four cases:
     
     


  \begin{itemize}
     \item Node + Size Marking $\Rightarrow$ Simple Node Marking.
     
     \item Minimum Distance $\Rightarrow$ Simple Node Marking.
     
     \item Simple Node Marking $\Rightarrow$ Node + Size Marking.

     \item Simple Node Marking $\Rightarrow$ Minimum Distance.
 \end{itemize}

\textbf{Node + Size Marking $\Rightarrow$ Simple Node Marking}:

In this case, we aim to update the node feature map:
\begin{equation}
\label{eq: node init in proof node marking +size -> node marking}
\mathcal{X}^0(S,v) = \mathcal{X}^{\pi_{SS}}(S,v) = \begin{cases} 
        (1,|S|) & v \in S \\
        (0,|S|) & v \notin S.
        \end{cases} 
\end{equation}
We notice that: 
\begin{equation}
    \mathcal{X}^0(S,v) = \langle (1,0),\ \mathcal{X}^{\pi_\text{S}}(S,v) \rangle,
\end{equation}
where $\langle \cdot, \cdot\rangle$ denotes the standard inner product in $\mathbb{R}^2$. Using a \ourmethod \ update as per equation \ref{eq: transformed subgraph message passing}, with the update function:
\begin{equation}
    f^1(\mathcal{X}^0(S,v), \cdot, \cdot, \cdot, \cdot) = \langle (1,0), \mathcal{X}^0(S,v) \rangle,
\end{equation}
where $f(a, \cdot, \cdot, \cdot, \cdot)$ indicates that the function $f$ depends solely on the parameter $a$, we obtain:
\begin{equation}
    \mathcal{X}^1(S,v) = f^1(\mathcal{X}^0(S,v), \cdot, \cdot, \cdot, \cdot) = \mathcal{X}^{\pi_{S}}(S,v).
\end{equation}

This implies that for any coarsening function $\mathcal{T}(\cdot)$, the following holds:
\begin{equation}
\label{eq: containment of node marking in marking + size}
\text{\ourmethod}(\mathcal{T}, \pi_\text{S}) \subseteq \text{\ourmethod}(\mathcal{T}, \pi_\text{SS}).
\end{equation}

\textbf{Minimum Distance $\Rightarrow$ Simple Node Marking}:

In this case, we aim to update the node feature map:
\begin{equation}
\label{eq: node init in proof mon distance -> node marking}
\mathcal{X}^0(S,v) = \mathcal{X}^{\pi_\text{MD}}(S,v) =  \min_{v \in s} d_G(u,v)
\end{equation}
We notice that: 
\begin{equation}
    \mathcal{X}^{\text{S}}(S,v) = g(\mathcal{X}^0(S,v))
\end{equation}
where $g:\mathbb{R} \rightarrow \mathbb{R}$ is any continuous function such that:
\begin{enumerate}
    \item $g(x) = 1\ \forall x > \frac{1}{2}$,
    \item $g(x) = 0\ \forall x < \frac{1}{4}$.
\end{enumerate}
Using a \ourmethod \ update as per equation \ref{eq: transformed subgraph message passing}, with the update function:
\begin{equation}
    f^1(\mathcal{X}^0(S,v), \cdot, \cdot, \cdot, \cdot) = g( \mathcal{X}^0(S,v)),
\end{equation}
we obtain:
\begin{equation}
    \mathcal{X}^1(S,v) = f^1(\mathcal{X}^0(S,v), \cdot, \cdot, \cdot, \cdot) = \mathcal{X}^{\pi_{S}}(S,v).
\end{equation}
This implies that for any coarsening function $\mathcal{T}(\cdot)$ the following holds:
\begin{equation}
\label{eq: containment of node marking in min distance}
\text{\ourmethod}(\mathcal{T}, \pi_\text{S}) \subseteq \text{\ourmethod}(\mathcal{T}, \pi_\text{MD}).
\end{equation}

\textbf{Simple Node Marking $\Rightarrow$ Node + Size Marking}:
In this case, we aim to update the node feature map:
\begin{equation}
\label{eq: node init in proof node marking -> marking + size}
\mathcal{X}^0(S,v)= \mathcal{X}^{\pi_{S}}(S,v) = \begin{cases} 
        1 & v \in S \\
        0 & v \notin S.
        \end{cases} 
\end{equation}
We notice that:
\begin{equation}
\sum_{v' \in S} \mathcal{X}^0(S,v') = |S|.    
\end{equation} 
Using a \ourmethod \ update as per \Cref{eq: transformed subgraph message passing}, with aggregation function:
\begin{equation}
    \text{agg}^l_4\ldblbrace(\mathcal{X}^0(S,v'), z(S,v,S,v')) \mid v' \in S \rdblbrace= \sum_{v' \in S} \mathcal{X}^0(S,v'),
\end{equation}

and update function:
\begin{equation}
    f^1\left(\mathcal{X}^0(S,v),\cdot,\cdot,\cdot, \sum_{v' \in S} \mathcal{X}^0(S,v')\right) =  \left (\mathcal{X}^0(S,v), \sum_{v' \in S} \mathcal{X}^0(S,v') \right ),
\end{equation}

we obtain:
\begin{equation}
    \mathcal{X}^1(S,v) = f^1\left(\mathcal{X}^0(S,v),\cdot,\cdot,\cdot, \sum_{v' \in S} \mathcal{X}^0(S,v')\right) = \mathcal{X}^{\pi_{SS}}(S,v).
\end{equation}

This implies that for any coarsening function $\mathcal{T}(\cdot)$ the following holds:
\begin{equation}
\label{eq: containment of marking + size in node marking}
\text{\ourmethod}(\mathcal{T}, \pi_\text{SS}) \subseteq \text{\ourmethod}(\mathcal{T}, \pi_\text{S}).
\end{equation}

\textbf{Simple Node Marking $\Rightarrow$ Minimum Distance}:

In this case, we aim to update the node feature map:
\begin{equation}
\label{eq: node init in proof node marking -> marking + size}
\mathcal{X}^0(S,v)= \mathcal{X}^{\pi_{S}}(S,v) = \begin{cases} 
        1 & v \in S \\
        0 & v \notin S.
        \end{cases} 
\end{equation}
We shall prove that $\mathcal{X}^{\pi_\text{MD}}$ can be expressed by updating $\mathcal{X}^0(S,v)$ with a stack of \ourmethod \ layers. We do this by inductively showing that this procedure can express the following auxiliary node feature maps: 
\begin{equation}
\label{eq: induction taregt function in node marking -> min distance}
\mathcal{X}_*^t (S,v) = \begin{cases} 
        \min_{v' \in S} d_G(v,v') + 1 & \min_{v' \in S} d_G(v,v') \leq t \\
        0 & \text{otherwise}.
        \end{cases} 
\end{equation}

We notice first that:
\begin{equation}
    \mathcal{X}^0(S,v) = \mathcal{X}_*^0(S,v).
\end{equation}
Now for the induction step, assume that there exists a stack of $t$ \ourmethod \ layers such that:
\begin{equation}
  \mathcal{X}^{t}(S,v) = \mathcal{X}_*^t(S,v). 
\end{equation}
We observe that equation:
\begin{equation}
\label{eq: min d_g = t+1}
    \min_{v' \in S} d_G(v,v') = t+1
\end{equation}
holds if and only if the following two conditions are met:
\begin{equation}
    \min_{v' \in S} d_G(v,v') > t
\end{equation}
\begin{equation}
    \exists u \in N_G(v) \text{ s.t. } \min_{u' \in S} d_G(u, u') = t.
\end{equation}

Equations \ref{eq: induction taregt function in node marking -> min distance} imply:
\begin{equation}
    \min_{v' \in S} d_G(v,v') > t \Leftrightarrow \mathcal{X}^t(S,v) = 0. 
\end{equation}

In addition, since the node feature map $\mathcal{X}^t=\mathcal{X}^t_*$ is bounded by $t+1$, \Cref{eq: induction taregt function in node marking -> min distance} implies: 
\begin{equation}
\label{eq: induction step node marking -> min distance}
\exists u \in N_G(v) \text{ s.t. } \min_{u' \in S} d_G(u,u') = t \Leftrightarrow \max\{\mathcal{X}^t(s,u) \mid v \sim_G u\} = t+1.
\end{equation}

Now, let $g_t:\mathbb{R}^2 \rightarrow \mathbb{R}$ be any continuous function such that for every pair of natural numbers $a, b \in \mathbb{N}$:  
\begin{enumerate}
    \item $g_t(a,b) = t+2 \quad  \text{if }a = 0 , b = t + 1$,
     \item $g_t(a,b) = a \quad \text{otherwise}$.
\end{enumerate}
Equations \ref{eq: min d_g = t+1} - \ref{eq: induction step node marking -> min distance}
 imply:
 \begin{equation}
     \mathcal{X}^{t+1}_*(S,v) = g_t(\mathcal{X}^t(S,v), \max\{\mathcal{X}^t(s,u) \mid v \sim_G u\}).
 \end{equation}

Using a \ourmethod \ update as per \Cref{eq: transformed subgraph message passing}, with aggregation function:
\begin{equation}
    \text{agg}^t_1\ldblbrace(\mathcal{X}^t(S,v'), e_{v,v'}) \mid v' \sim_G v \rdblbrace = \max_{v' \sim_G v} \mathcal{X}^t(S,v').
\end{equation}
and update function:
\begin{equation}
    f^t(\mathcal{X}^t(S,v), \max_{v' \sim_G v} \mathcal{X}^t(S,v'), \cdot,\cdot, \cdot) = g_t(\mathcal{X}^t(S,v) ,\max_{v' \sim_G v}  \mathcal{X}^t(S,v'))
\end{equation}
we obtain:
\begin{equation}
    \mathcal{X}^{t+1}(S,v) = f^t(\mathcal{X}^t(S,v), \max_{v' \sim_G v} \mathcal{X}^t(S,v'), \cdot,\cdot, \cdot) = \mathcal{X}^{t+1}_*(S,v).
\end{equation}

This completes the induction step. Now, let $\mathcal{G}$ be a finite family of graphs, whose maximal vertex size is $n$. We notice that:
\begin{equation}
    \mathcal{X}^{\pi_\text{MD}}(S,v) = \mathcal{X}_*^{n}(S,v)-1,
\end{equation}
Which implies that there exists a stack of $n$ \ourmethod \ layers such that:
\begin{equation}
    \mathcal{X}^0(S,v) = \mathcal{X}^{\pi_\text{S}}(S,v) \quad \text{and} \quad \mathcal{X}^n(S,v) = \mathcal{X}^{\pi_\text{MD}}(S,v).
\end{equation}
This implies:
\begin{equation}
\label{eq: containment of min distance in node marking}
\text{\ourmethod}(\mathcal{T}, \pi_\text{MD}) \subseteq \text{\ourmethod}(\mathcal{T}, \pi_\text{S}).
\end{equation}

This concludes the proof.
 \end{proof}

\begin{figure}[th]
\centering
\includegraphics[width=0.4\textwidth]{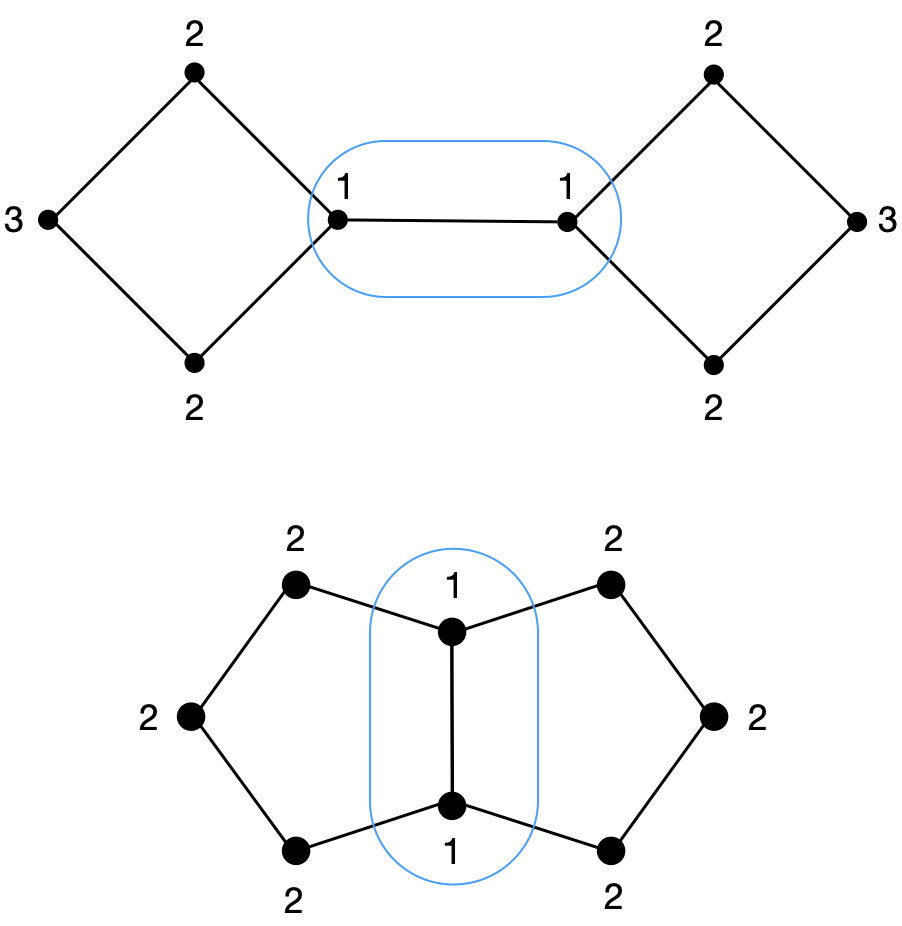}
\caption{Graphs G and H defined in the proof of \Cref{prop: expressivity of learned distance initialization}. In each graph, the circle marks the single super-node induced by $\mathcal{T}$, while the number next to each node $u$ is the maximal SPD between $u$ and the nodes that compose the super-node.}
\label{fig: graph counter example} 
\end{figure}

\ExpressivityOfLearnedDistanceInitializationProp*
\begin{proof}
First, since we are concerned with input graphs belonging to a finite graph family $\mathcal{G}$, the learned function $\phi(\cdot)$ implemented by an MLP can express any continuous function on $\mathcal{G}$. This follows from \Cref{Thm: Memorization Theorem} (see the proof of \Cref{prop: Equivalence of General Layer and Implemented Layer} for details). By choosing $\phi = \min(\cdot)$ in equation \ref{eq: learned distance function}, it is clear that for any coarsening function $\mathcal{T}(\cdot)$ we have:
\begin{equation}
    \text{\ourmethod}(\mathcal{T}, \pi_\text{S}) = \text{\ourmethod}(\mathcal{T}, \pi_\text{MD}) \subseteq \text{\ourmethod}(\mathcal{T}, \pi_\text{LD}).
\end{equation}

We now construct a coarsening function $\mathcal{T}(\cdot)$ along with two graphs, $G$ and $H$, and demonstrate that there exists a function in $\text{\ourmethod}(\mathcal{T}, \pi_\text{LD})$ that can separate $G$ and $H$. However, every function in $\text{\ourmethod}(\mathcal{T}, \pi_\text{S})$ cannot separate the two.

For an input graph $G=(V,E)$  define:
\begin{equation}
\label{eq: def of T in proof learnd distance stronger then min distance}
\mathcal{T}(G) =\{ \{u \in V \mid \text{deg}_G(u) = 3\} \}.    
\end{equation}
i.e., $\mathcal{T}(\cdot)$ returns a single super-node composed of all nodes with degree 3. Now, define $G = (V_G, E_G)$ as the graph obtained by connecting two cycles of size four by adding an edge between a single node from each cycle. Additionally, define $H = (V_H, E_H)$ as the graph formed by joining two cycles of size five along one of their edges. See \Cref{fig: graph counter example} for an illustration of the two graphs.
By choosing $\phi = \max(\cdot)$ in equation \ref{eq: learned distance function} a quick calculation shows that:
\begin{equation}
    \sum_{S \in V_{\mathcal{T}(G)}}\sum_{v \in V_G}\mathcal{X}^{\pi_\text{LD}}(S,v) = 16,
\end{equation}
while:
\begin{equation}
    \sum_{S \in V_{\mathcal{T}(H)}}\sum_{v \in V_H}\mathcal{X}^{\pi_\text{LD}}(S,v) = 14.
\end{equation}
Refer to \Cref{fig: graph counter example} for more details.
Observe that: 
\begin{equation}
    f(G) = \sum_{s \in V_{\mathcal{T}(H)}}\sum_{u \in V_H}\mathcal{X}^{\pi_\text{LD}}(S,v) \in \text{\ourmethod}(\mathcal{T}, \pi_\text{LD})
\end{equation}
 Thus it is enough to show that:
\begin{equation}
\label{eq: TSGNN nm cannot distinguish G and H}
    f(G) = f(H), \quad \forall f \in \text{\ourmethod}(\mathcal{T}, \pi_\text{S}).
\end{equation}

To achieve this, we use the layer update as per \Cref{def: TSGNN Update Implementation}, which was demonstrated in \Cref{prop: Equivalence of General Layer and Implemented Layer} to be equivalent to the general equivariant message passing update in \Cref{def: General Layer Update}. First, we observe that the graphs $G$ and $H$ are WL-indistinguishable. We then observe that since $|V^\mathcal{T}| = 1$, the graphs induced by the adjacency matrices $A_G$ and $A_H$ in \Cref{def: Adjacency Matrices on Product Graph} are isomorphic to the original graphs $G$ and $H$, respectively, and therefore they are also WL-indistinguishable. Additionally, we notice that the graphs induced by the adjacency matrices $A_{\mathcal{T}(G)}$ and $A_{\mathcal{T}(H)}$ in \Cref{def: Adjacency Matrices on Product Graph} are both isomorphic to the fully disconnected graph with 8 nodes, making them WL-indistinguishable as well. We also observe that there exists a bijection $\sigma: V_G \rightarrow V_H$ that maps all nodes of degree 3 in $G$ to all nodes of degree 3 in $H$. The definition of $\mathcal{T}(\cdot)$ implies that $\sigma$ is an isomorphism between the adjacency matrices $A_{P_i}$ corresponding to $G$ and $H$, where $i=1,2$.
Finally, we notice that for both $G$, and $H$, the node feature map induced by $\pi_\text{S}$ satisfies:
\begin{equation}
    \mathcal{X}^{\pi_\text{S}}(S,v) = \text{deg}(v)-2.
\end{equation}
This node feature map can be easily implemented by the layer update in definition \ref{def: TSGNN Update Implementation} and so it can be ignored. Since all four graphs corresponding to $G$ that are induced by the adjacency matrices in \Cref{def: Adjacency Matrices on Product Graph}, are WL-indistinguishable from their counterpart corresponding to $H$, and equation \ref{eq: intermediate update in layer implementation} in definition \ref{def: TSGNN Update Implementation} is an MPNN update, which is incapable of distinguishing graphs that are WL-indistinguishable, we see that equation \ref{eq: TSGNN nm cannot distinguish G and H} holds, concluding the proof.
\end{proof}

\MarkingsizeasInvariantMarkingtProp*
\begin{proof}
Let $G = (V,E)$ be a graph with $V =[n]$, and let $\mathcal{T}(\cdot)$ be a coarsening function. Recall that the maps \(b_{\pi_\text{SS}}(\cdot, \cdot)\) and \(b_{\pi_\text{inv}}(\cdot, \cdot)\) are both independent of the connectivity of \(G\) and are defined as follows:
\begin{equation}
    b_{\pi_\text{SS}}(S,v) = \begin{cases}
        (1,|S|) & v \in S, \\
        (0,|S|) & v \notin S.
    \end{cases}
\end{equation}
\begin{equation}
\label{eq: restating b_inv}
    b_{\pi_\text{inv}}(S, v) = [\mathbf{1}_{\gamma_1}(S, v), \dots, \mathbf{1}_{\gamma_k}(S, v)].
\end{equation}
Here, \(v \in [n]\), \(S \in \mathcal{T}([n]) \subseteq \mathcal{P}([n])\), \(\gamma_1, \dots, \gamma_k\) is any enumeration of the set of all orbits \((\mathcal{P}([n]) \times [n]) / S_n\), and \(\mathbf{1}_{\gamma_i}\) denotes the indicator function of orbit \(\gamma_i\). Since any tuple $(S,v) \in \mathcal{P}([n]) \times [n]$ belongs to exactly one orbit $\gamma_i$, we note that the right hand side of \Cref{eq: restating b_inv} is a one-hot encoded vector. Thus, it suffices to show that for every \(v,v' \in [n]\) and \(S,S' \in \mathcal{P}([n])\), we have:
\begin{equation}
    b_{\pi_\text{SS}}(S, v) = b_{\pi_\text{SS}}(S,' v') \Leftrightarrow b_{\pi_\text{inv}}(S, v) = b_{\pi_\text{inv}}(S,' v').
\end{equation}
This is equivalent to:
\begin{equation}
    (\mathcal{P}([n]) \times [n]) / S_n = \left\{ \{(S, v) \mid |S| = i, \mathbf{1}_S(v) = j\} \mid i \in [n], j \in \{0, 1\} \right\}.
\end{equation}
Essentially, this means that each orbit corresponds to a choice of the size of \(s\) and whether \(v \in S\) or not.
To conclude the proof, it remains to show that for any two pairs \((S, v), (S,' v') \in \mathcal{P}([n]) \times [n]\), there exists a permutation \(\sigma \in S_n\) such that:
\begin{equation}
    \sigma \cdot (S, v) = (S,' v')
\end{equation}
if and only if
\begin{equation}
    |S| = |S'| \text{ and } \mathbf{1}_S(v) = \mathbf{1}_{S'}(v').
\end{equation}

Assume first that $\sigma \cdot (S, v) = (S',v')$, then $\sigma^{-1}(S) = S'$ and since $\sigma$ is a bijection, $|S| = |S'|$. In addition $\sigma^{-1}(v) = v'$ thus:
\begin{equation}
    v \in S \Leftrightarrow v' = \sigma^{-1}(v) \in \sigma^{-1}(S) = S'.
\end{equation}

Assume now that:
\begin{align}
    |S| &= |S'| \\
    \mathbf{1}_S(v) &= \mathbf{1}_{S'}(v')
\end{align}
It follows that for some $r, m \in [n]$:
\begin{equation}
    |S \setminus \{v\}| = |S' \setminus \{v'\}| = r \quad \text{and} \quad |[n] \setminus (S \cup \{v\})| = |[n] \setminus (S' \cup \{v'\})| = m
\end{equation}

Write:
\begin{align*}
    &S \setminus \{v\} = \{i_1, \dots ,i_r \}, \quad S' \setminus \{v'\} = \{i'_1, \dots ,i'_r \}, \\
    &[n] \setminus (S \cup \{v\}) = \{j_1, \dots j_m\}, \quad [n] \setminus (S' \cup \{v'\}) = \{j'_1, \dots j'_m\}
\end{align*}
 and define: 
\begin{equation}
    \sigma(x) = \begin{cases} 
        v' & x=v \ \\
        i'_l & x=i_l, l \in [r] \\
        j'_l & x=j_l, l \in [m]
        \end{cases} 
\end{equation}

We now have:
\begin{equation}
    \sigma \cdot (S, v) = (S',v').
\end{equation}
This concludes the proof.
\end{proof}

\subsection{Proofs of \Cref{app: Recovering Subgraph GNNs}}
\ourmethodcanimplementGNNSSWLProp*
\begin{proof}
Abusing notation, for a given graph $G = (V,E)$ we write $\mathcal{T}(G) = G$, $V^\mathcal{T} = V$.
First, we observe that:
\begin{equation}
    v \in \{ u \} \Leftrightarrow u = v,
\end{equation}
This implies that the initial node feature map $\mathcal{X}^0(u,v)$ induced by $\pi_\text{S}$ is equivalent to the standard node marking described in equation \ref{eq: GNN-SSWL+ initialization}. Additionally, we note that the pooling procedures for both models, as described in equations \ref{eq: pooling} and \ref{eq: GNN-SSWL+ pooling}, are identical. Therefore, it is sufficient to show that the \ourmethod \ and MSGNN layer updates described in equations \ref{eq: transformed subgraph message passing} and \ref{eq: GNN-SSWL+ update} respectively are also identical. For this purpose, let $\mathcal{X}^t(v,u)$ be a node feature map supported on the set $V \times V$. The inputs to the MSGNN layer are the following:

\begin{enumerate}
    \item $\mathcal{X}^t(u,v)$.

    \item $\mathcal{X}^t(u,u)$.

    \item $ \mathcal{X}^t(v,v)$.

    \item  $\text{agg}^t_1 \ldblbrace (\mathcal{X}^t(u,v'), e_{v,v'}) \mid v' \sim v \rdblbrace$.

    \item $\text{agg}^t_2 \ldblbrace (\mathcal{X}^t(u',v), e_{u,u'}) \mid u' \sim u \rdblbrace$.
    
\end{enumerate}

The inputs to the \ourmethod \ layer are the following:
\begin{enumerate}
    \item $\mathcal{X}^t(S,v) \Rightarrow \mathcal{X}^t(u,v)$.

    \item $\text{agg}^t_1\ldblbrace(\mathcal{X}^t(S,v'), e_{v,v'}) \mid v' \sim_G v \rdblbrace \Rightarrow \text{agg}^t_1 \ldblbrace (\mathcal{X}^t(u,v'), e_{v,v'}) \mid v' \sim v \rdblbrace $.

    \item $\text{agg}^t_2\ldblbrace(\mathcal{X}^t(S',v), \tilde{e}_{S,S'}) \mid S' \sim_{G^{\mathcal{T}}} S \rdblbrace \Rightarrow \text{agg}^t_2 \ldblbrace (\mathcal{X}^t(u,u'), e_{u,v'}) \mid v' \sim v \rdblbrace$.

    \item $\text{agg}^t_3\ldblbrace(\mathcal{X}^t(S',v), z(S,v,S',v)) \mid \forall s' \in V^{\mathcal{T}} \text{s.t.}\ v \in S' \rdblbrace \Rightarrow \ldblbrace \mathcal ({X}^t(v,v), z(u,v,v,v)) \rdblbrace$.
    
    \item $\text{agg}^t_4\ldblbrace(\mathcal{X}^t(S,v'), z(S,v,S,v')) \mid \forall u' \in V \text{s.t.}\ v' \in S \rdblbrace \Rightarrow \ldblbrace \mathcal ({X}^t(u,u), z(u,v,u,u)) \rdblbrace$.
    
\end{enumerate}
The terms $z(u,v,v,v)$ and $z(u,v,u,u)$ appearing in the last two input terms of the \ourmethod \ layer uniquely encode the orbit tuples $(u,v,v,v)$ and $(u,v,u,u)$ belong to respectively. Since these orbits depend solely on whether $u = v$, these values are equivalent to the node marking feature map $\mathcal{X}^0(u,v)$. Therefore, these terms can be ignored. Observing the two lists above, we see that the inputs to both update layers are identical (ignoring the $z(\cdot)$ terms), Thus, as both updates act on these inputs in the same way, the updates themselves are identical. and so  
\begin{equation}
    \label{eq: tsgnn contained gnn sswl+}
 \text{MSGNN}(\pi_\text{NM}) =  \text{\ourmethod}(\mathcal{T}, \pi_\text{S}).
\end{equation}
\end{proof}

\subsection{Proofs of \Cref{app: Comparison to Theoretical Baseline}}

\TSGNNStrongerThenMPNNProp*
\begin{proof}
For convenience, let us first restate the \ourmethod \ layer update:

\begin{equation}
\begin{split}
\mathcal{X}^{t+1}(S,v) = &\ f^t\bigg(\mathcal{X}^t(S,v),\\
&\ \text{agg}^t_1\ldblbrace(\mathcal{X}^t(S,v'), e_{v,v'}) \mid v' \sim_G v \rdblbrace, \\
&\ \text{agg}^t_2\ldblbrace(\mathcal{X}^t(S',v), \tilde{e}_{S,S'}) \mid S' \sim_{G^{\mathcal{T}}} S \rdblbrace,\\ 
&\ \text{agg}^t_3\ldblbrace(\mathcal{X}^t(S',v), z(S,v,S',v)) \mid  s' \in V^{\mathcal{T}} \text{s.t.}\ v \in S' \rdblbrace , \\
&\ \text{agg}^t_4\ldblbrace(\mathcal{X}^t(S,v'), z(S,v,S,v')) \mid  u' \in V \text{s.t.}\ v' \in S \rdblbrace \bigg),
\end{split}\tag{\ref{eq: transformed subgraph message passing}}
\end{equation}

as well as the $\text{MPNN}_+$ layer update:
\begin{equation}
\begin{aligned}
\text{For } v \in V: \quad & \mathcal{X}^{t+1}(v) = f^t_V \left( \mathcal{X}^t(v), \text{agg}^t_1  \ldblbrace (\mathcal{X}^t(v'), e_{v,v'}) \mid v \sim_G v' \rdblbrace, \right. \\
& \left. \phantom{X} \text{agg}^t_2 \ldblbrace \mathcal{X}^t(S) \mid S \in V^T, v \in S \rdblbrace \right), \\
\text{For } S \in V^\mathcal{T}: \quad & \mathcal{X}^{t+1}(S) = f^t_{V^\mathcal{T}} \left( \mathcal{X}^t(S), \text{agg}^t_1 \ldblbrace (\mathcal{X}^t(S'), e_{S,S'}) \mid S \sim_{G^T} S' \rdblbrace, \right. \\
& \left. \phantom{X} \text{agg}^t_2 \ldblbrace \mathcal{X}^t(v) \mid v \in V, v \in S \right\} \rdblbrace).
\end{aligned}\tag{\ref{eq: transformed mpnn layers}}
\end{equation}

We note that by setting $f^t_{V^\mathcal{T}}$ to be a constant zero and choosing $f^t_V$ to be any continuous function that depends only on its first two arguments, the update in equation \ref{eq: transformed mpnn layers} becomes a standard MPNN layer. This proves:

\begin{equation}
\label{eq: transformed mpnn better then mpnn}
\text{MPNN} \subseteq \text{MPNN}_+(T).
\end{equation}
Next, we prove the following 2 Lemmas:
\begin{lemma}
\label{lemma: node init of sum transform graph can be implements by SPNN}
Given a graph $G = (V,E)$ such that $V = [n]$ with node feature vector $X \in \mathbb{R}^{n \times d}$, and a coarsening function $\mathcal{T}(\cdot)$, there exists a $\text{\ourmethod}(\mathcal{T}, \pi_\text{S})$ layer such that:
\begin{equation}
\label{eq: TSGNN can simulate transformed mpnn node init}
\mathcal{X}^1(S,v) = [0_{d+1},X_v,1] = [\tilde{\mathcal{X}}^0(S), \tilde{\mathcal{X}}^0(v)].
\end{equation}
Here $[\cdot, \cdot]$ denotes concatenation and $\tilde{\mathcal{X}}^0(\cdot)$ denotes the initial node feature map of the coarsened sum graph $G^\mathcal{T}_+$.
\end{lemma}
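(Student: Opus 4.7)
The plan is to construct an explicit single-layer \ourmethod\ update that ignores all neighborhood aggregations and simply re-arranges the initial feature $\mathcal{X}^0(S,v)$ produced by $\pi_\text{S}$ into the desired target format $[0_{d+1}, X_v, 1]$. Since $\pi_\text{S}$ yields $\mathcal{X}^0(S,v) = [X_v, b_{\pi_\text{S}}(S,v)]$ with $b_{\pi_\text{S}}(S,v) \in \{0,1\}$, the first $d$ coordinates already encode $X_v$ independently of $S$, which is exactly what we need to place in the ``node slot'' $\tilde{\mathcal{X}}^0(v) = [X_v, 1]$. The ``super-node slot'' $\tilde{\mathcal{X}}^0(S) = 0_{d+1}$ is a constant and can be produced via the bias of an affine transformation.

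Concretely, I would invoke the general \ourmethod\ layer update in \Cref{eq: transformed subgraph message passing} and select $f^1$ to be a continuous function of its first argument only, defined by
\begin{equation}
f^1\big([X_v, b], \cdot, \cdot, \cdot, \cdot\big) \;=\; [\,0_{d+1},\, X_v,\, 1\,],
\end{equation}
which is a straightforward affine map composed of a projection onto the first $d$ coordinates and a fixed bias vector. The remaining four aggregation inputs of $f^1$ are simply ignored, which is permitted since $f^1$ is an arbitrary continuous parameterized function. Plugging in $\mathcal{X}^0(S,v) = [X_v, b_{\pi_\text{S}}(S,v)]$ then yields
\begin{equation}
\mathcal{X}^1(S,v) \;=\; [\,0_{d+1},\, X_v,\, 1\,] \;=\; [\,\tilde{\mathcal{X}}^0(S),\, \tilde{\mathcal{X}}^0(v)\,],
\end{equation}
using the definition of $\tilde{\mathcal{X}}^0$ from \Cref{def: Transformed sum graph}.

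The only mild subtlety is justifying that a map as simple as this is indeed realizable by the parameterized $f^1$ used in the concrete implementation of the \ourmethod\ layer (\Cref{def: TSGNN Update Implementation}); however, since the target is an affine function of the input and the implementation composes MLPs, this is immediate via standard universal approximation (or, more strongly, directly via \Cref{Thm: Memorization Theorem} since the family of graphs under consideration is finite, making the set of possible inputs finite). Thus there is no real obstacle: the proof reduces to exhibiting the correct $f^1$ and verifying the equality on a per-coordinate basis.
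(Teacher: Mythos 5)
Your proposal is correct and takes essentially the same route as the paper: both invoke the general \ourmethod\ layer update from \Cref{eq: transformed subgraph message passing}, choose $f$ to depend only on its first argument $\mathcal{X}^0(S,v)$, project out the first $d$ coordinates to recover $X_v$, and prepend/append the constants $0_{d+1}$ and $1$. Your extra remark that realizability follows from the affine nature of the map (or from \Cref{Thm: Memorization Theorem} on a finite graph family) is a small additional justification the paper leaves implicit, but the core construction is identical.
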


\begin{lemma}
\label{lemma: proof aggregation of sum transform can be implemented by SPNN}
Let $\tilde{\mathcal{X}}^{t}(\cdot)$ denote the node feature maps of $G^T_+$  at layers $t$ of a stack of $\text{MPNN}_+(\mathcal{T})$ layers. There exists a stack of $t+1$ $\text{\ourmethod}(\mathcal{T}, \pi_\text{S})$ layers such that: 
\begin{equation}
    \mathcal{X}^{t+1}(S,v) = [\tilde{\mathcal{X}}^t(S), \tilde{\mathcal{X}}^t(v)].
\end{equation}
\end{lemma}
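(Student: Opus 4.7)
The plan is to proceed by induction on $t$, with the inductive hypothesis that after $t$ applications of the \ourmethod\ update one obtains $\mathcal{X}^t(S,v) = [\tilde{\mathcal{X}}^{t-1}(S), \tilde{\mathcal{X}}^{t-1}(v)]$. The base case ($t = 0$), asserting that one \ourmethod\ layer produces $\mathcal{X}^1(S,v) = [\tilde{\mathcal{X}}^0(S), \tilde{\mathcal{X}}^0(v)]$, is handled verbatim by~\Cref{lemma: node init of sum transform graph can be implements by SPNN}.

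For the inductive step, I will show that one additional \ourmethod\ layer of the form~\eqref{eq: transformed subgraph message passing} suffices to produce $\mathcal{X}^{t+1}(S,v) = [\tilde{\mathcal{X}}^t(S), \tilde{\mathcal{X}}^t(v)]$. The key observation is that, under the inductive hypothesis, the four multiset inputs to the \ourmethod\ update at $(S,v)$ package exactly the four ingredients required by the two branches of $\text{MPNN}_+$ in~\eqref{eq: transformed mpnn layers}: projecting the feature part of the $\text{agg}^t_1$-multiset onto its second coordinate yields $\ldblbrace(\tilde{\mathcal{X}}^{t-1}(v'), e_{v,v'}) \mid v' \sim_G v\rdblbrace$, matching the first argument of $f^t_V$; projecting the feature part of the $\text{agg}^t_3$-multiset onto its first coordinate yields $\ldblbrace\tilde{\mathcal{X}}^{t-1}(S') \mid S' \ni v\rdblbrace$, matching its second argument; and symmetric projections of the $\text{agg}^t_2$- and $\text{agg}^t_4$-multisets recover the two arguments of $f^t_{V^\mathcal{T}}$. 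Importantly, each of these four projected multisets depends only on $v$ (for the $V$-side) or only on $S$ (for the $V^\mathcal{T}$-side), so the concatenation $[\tilde{\mathcal{X}}^t(S), \tilde{\mathcal{X}}^t(v)]$ is well-defined on the product graph.

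It then remains to design $f^t$ to perform the required coordinate projections, apply $f^t_V$ and $f^t_{V^\mathcal{T}}$ to their corresponding inputs, and concatenate the two outputs. Because we restrict to a finite family of graphs per~\Cref{def: Expressivity of family of graph functions}, the Memorization Theorem~\ref{Thm: Memorization Theorem}, invoked just as in the proof of~\Cref{prop: Equivalence of General Layer and Implemented Layer}, guarantees a realization of $f^t$ by a fixed-depth ReLU MLP. The main obstacle is purely bookkeeping: aligning the \ourmethod\ aggregators (which take features paired either with graph edge features or with the orbit encoding $z(\cdot,\cdot,\cdot,\cdot)$) with the $\text{MPNN}_+$ aggregators (one edge-aware, one edgeless). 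This is resolved by composing each \ourmethod\ aggregator with a coordinate projection, expressible within the parameterized continuous aggregator itself, and by letting $f^t$ ignore the two broadcast channels not used on each side of the final concatenation.
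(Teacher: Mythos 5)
Your proposal is correct and follows essentially the same route as the paper: induction on $t$ with the base case supplied by \Cref{lemma: node init of sum transform graph can be implements by SPNN}, and an inductive step in which each of the four \ourmethod\ aggregation multisets is projected onto the coordinate block (super-node half or node half) relevant to one of the two branches of the $\text{MPNN}_+$ update, after which $f^t_V$ and $f^t_{V^\mathcal{T}}$ are applied and their outputs concatenated. The paper spells this out with explicit formulas, defining $\text{agg}^{t+1}_i$ as $\text{agg}^t_1\vline_{1:d_t}$, $\text{agg}^t_1\vline_{d_t+1:2d_t}$, $\text{agg}^t_2\vline_{1:d_t}$, $\text{agg}^t_2\vline_{d_t+1:2d_t}$, and $f^{t+1}(a,b,c,d,e)=[f^t_V(a_{1:d_t},b,d),\,f^t_{V^\mathcal{T}}(a_{d_t+1:2d_t},c,e)]$, but your description of coordinate projections and recombination is the same construction. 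One small inefficiency: appealing to \Cref{Thm: Memorization Theorem} to realize $f^t$ is not needed at this point, since $\text{\ourmethod}(\mathcal{T},\pi_\text{S})$ is defined via the general layer update of \Cref{def: General Layer Update}, which already permits arbitrary continuous (parameterized) $f^t$ and $\text{agg}^t_i$; the reduction of that general layer to an MLP-based implementation is handled separately in \Cref{prop: Equivalence of General Layer and Implemented Layer}.
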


\begin{proof}[proof of Lemma \ref{lemma: node init of sum transform graph can be implements by SPNN}]
Recall that the initial node feature map of $\text{\ourmethod}(\mathcal{T}, \pi_\text{S})$ is given by:

\begin{equation}
    \mathcal{X}^0(S,v) = \begin{cases} [X_v,1] & v \in S \\
[X_v,0] &  v \notin S.
\end{cases}
\end{equation}
In addition, the initial node feature map of $\text{MPNN}_+(\mathcal{T})$ is given by:
\begin{equation}
    \tilde{X}^0(v) = \begin{cases}
        [X_v,1] & v \in V \\
        0_{d+1} & v \in V^\mathcal{T}.
    \end{cases}
\end{equation}

Thus, we choose a layer update as described in equation \ref{eq: transformed subgraph message passing} with:
\begin{equation}
    \mathcal{X}^1(S,v) = f^0(\mathcal{X}^0(S,v), \cdot, \cdot, \cdot, \cdot) = [0_{d+1}, \mathcal{X}^0(S,v)_{1:d}, 1]
\end{equation}
Here, $f(a, \cdot, \cdot, \cdot)$ denotes that the function depends only on the parameter $a$, and $X_{a:b}$ indicates that only the coordinates $a$ through $b$ of the vector $X$ are taken. This gives us:
\begin{equation}
     \mathcal{X}^1(S,v) = [\tilde{\mathcal{X}}^0(S), \tilde{\mathcal{X}}^0(v)].
\end{equation}
\end{proof}

\begin{proof}[proof of Lemma \ref{lemma: proof aggregation of sum transform can be implemented by SPNN}]
We prove this Lemma by induction on $t$. We note that Lemma \ref{lemma: node init of sum transform graph can be implements by SPNN} provides the base case  $t = 0$. Assume now that for a given stack of $t+1$ $\text{MPNN}_+(\mathcal{T})$ layer updates, with corresponding node feature maps:
\begin{equation}
    \tilde{\mathcal{X}}^i:V^\mathcal{T}_+ \rightarrow \mathbb{R}^{d_i} \quad i = 1\dots, t+1,
\end{equation}

there exists a stack of $t+1$ $\text{\ourmethod}(\mathcal{T}, \pi_\text{S})$ layers with node feature maps:
\begin{equation}
    \mathcal{X}^i:V^\mathcal{T} \times V \rightarrow  \mathbb{R}^{2d_i} \quad i = 1, \dots, t+1,
\end{equation}
such that:
\begin{equation}
    \mathcal{X}^{t+1}(S, v) = [\tilde{\mathcal{X}}^t(S), \tilde{\mathcal{X}}^t(v)].
\end{equation}
We shall show that there exists a single additional $\text{\ourmethod}(\mathcal{T}, \pi_\text{S})$ layer update such that:
\begin{equation}
    \mathcal{X}^{t+2}(S, v) = [\tilde{\mathcal{X}}^{t+1}(S), \tilde{X}^{t+1}(v)].
\end{equation}

For that purpose we define the following  $\text{\ourmethod}(\mathcal{T}, \pi_\text{S})$ update (abusing notation, the left hand side refers to components of the $\text{\ourmethod}(\mathcal{T}, \pi_\text{S})$ update at layer $t+1$, while the right hand side refers to components of the $\text{MPNN}_+(\mathcal{T})$ update at layer $t$):

\begin{equation}
\label{eq: choices of agg func in mpnn(t) -> SPNN}
\begin{aligned}
    \text{agg}^{t+1}_1 &= \text{agg}^t_1\vline_{1:d_t}, \\
    \text{agg}^{t+1}_2 &= \text{agg}^t_1\vline_{d_t+1:2d_t}, \\
    \text{agg}^{t+1}_3 &= \text{agg}^t_2\vline_{1:d_t}, \\
    \text{agg}^{t+1}_4 &= \text{agg}^t_2\vline_{d_t+1:2d_t},\
\end{aligned}
\end{equation}

\begin{equation}
\label{eq: choices of f func in mpnn(t) -> SPNN}
f^{t+1}(a,b,c,d,e) = [f^t_V(a_{1:d_t}, b, d), f^t_{V^\mathcal{T}}(a_{d_t+1:2d_t}, c, e)].
\end{equation}

Here the operation $\text{agg}\vline_{a:b}$ initially projects all vectors in the input multi-set onto coordinates $a$ through $b$, and subsequently passes them to the function $\text{agg}$.
equations \ref{eq: choices of agg func in mpnn(t) -> SPNN} , \ref{eq: choices of f func in mpnn(t) -> SPNN} guarantee that:


\begin{equation}
\label{eq: proof updates in second lemma of mpnn(T) < TSGNN work}
\begin{aligned}
\mathcal{X}^{t+2}(S,v)_{1:d_{t+1}} &= f^t_V\left( \mathcal{X}^t(S,v)_{1:d_t}, \right. \\
&\quad \left. \text{agg}^t_1\ldblbrace(S,v')_{1:d_t} \mid v \sim_G v' \rdblbrace, \right. \\
&\quad \left. \text{agg}^t_2\ldblbrace(S',v)_{1:d_t} \mid v \in S' \rdblbrace \right) \\
&= \tilde{X}^{t+1}(v),  \\
\mathcal{X}^{t+2}(S,v)_{d_{t+1}+1:2d_{t+1}} &= f^t_{V^\mathcal{T}}\left( \mathcal{X}^t(S,v)_{d_t+1:2d_t}, \right. \\
&\quad \left. \text{agg}^t_1\ldblbrace(S',v)_{d_t+1:2d_t} \mid S' \sim_{\mathcal{T}(G)} S \rdblbrace, \right. \\
&\quad \left. \text{agg}^t_2\ldblbrace(S,v')_{d_t+1:2d_t} \mid v' \in S \rdblbrace \right) \\
&= \tilde{X}^{t+1}(S).
\end{aligned}
\end{equation}

This proves the Lemma.
\end{proof}

Now, for a given finite family of graphs $\mathcal{G}$ and a function $f \in \text{MPNN}_+(\mathcal{T})$, there exists a stack of $T$ $\text{MPNN}_+(\mathcal{T})$ layers such that:
\begin{equation}
\label{eq: aggregation sum transformed mpnn}
f(G) = U\left(\sum_{v \in V^\mathcal{T}_+}\tilde{\mathcal{X}}^T(v) \right) \quad \forall G \in \mathcal{G}.
\end{equation}

Here, $\tilde{\mathcal{X}}^T: V^\mathcal{T}_+ \rightarrow \mathbb{R}^{d_T}$ denotes the final node feature map, and $U$ is an MLP. Lemma \ref{lemma: proof aggregation of sum transform can be implemented by SPNN} now tells us that there exists a stack of $T+1$ $\text{\ourmethod}(\mathcal{T}, \pi_\text{S})$ layers such that:
\begin{equation}
\mathcal{X}^{T+1}(S,v) = [\tilde{\mathcal{X}}^T(S), \tilde{\mathcal{X}}^T(v)].
\end{equation}

Similarly to Lemma \ref{lemma: node init of sum transform graph can be implements by SPNN}, we use one additional layer to pad $\mathcal{X}^{T+1}(S,v)$ as follows:
\begin{equation}
\mathcal{X}^{T+2}(S,v) = [\tilde{\mathcal{X}}^T(S), \tilde{\mathcal{X}}^T(v), 1].
\end{equation}


We notice that:
\begin{equation}
\label{eq: first sum in mpnn_pluss}
\begin{aligned}
    \sum_{s \in V^\mathcal{T}} \mathcal{X}^{T+2}(S,v) &= \left[ \sum_{S \in V^\mathcal{T}} \tilde{X}^T(S),\ \sum_{S \in V^\mathcal{T}} \tilde{X}^T(v),\ \sum_{S \in V^\mathcal{T}} 1 \right] \\
    &= \left[ \sum_{S \in V^\mathcal{T}} \tilde{X}^T(S),\ |V^\mathcal{T}| \cdot \tilde{X}^T(v),\ |V^\mathcal{T}| \right].
\end{aligned}
\end{equation}

Thus, in order to get rid of the $|V^\mathcal{T}| $ term, We define:

\begin{equation}
    \label{eq: MLP1 def in mpnn_plus}
    \text{MLP}_1(a,b,c) = [a,\frac{1}{c} \cdot b, 1], \quad a,b\in \mathbb{R}^{d_L}, c>0.
\end{equation}

We note that since we are restricted to a finite family of input graphs, the use of an MLP in equation \ref{eq: MLP1 def in mpnn_plus} can be justified using Theorem \ref{Thm: Memorization Theorem} (see the proof of \Cref{prop: Equivalence of General Layer and Implemented Layer} for a detailed explanation). 

Equations \ref{eq: first sum in mpnn_pluss} and \ref{eq: MLP1 def in mpnn_plus} imply:

\begin{equation}
    \text{MLP}_1\left(\sum_{s \in V^\mathcal{T}} \mathcal{X}^{T+2}(S,v)\right) = \left[ \sum_{S \in V^\mathcal{T}} \tilde{X}^T(S),\  \tilde{X}^T(v),\ 1 \right]
\end{equation}

Thus, similarly to equation \ref{eq: first sum in mpnn_pluss}:
\begin{equation}
    \sum_{v \in V} \text{MLP}_1\left(\sum_{S \in V^\mathcal{T}} \mathcal{X}^{T+2}(S,v)\right) = \left[ |V| \cdot \sum_{S \in V^\mathcal{T}} \tilde{X}^T(S),\ \sum_{v \in V} \tilde{X}^T(v),\ |V| \right]
\end{equation}

And so, in order to get rid of the $|V| $ term, We define:

\begin{equation}
    \label{eq: MLP1 def in mpnn_plus}
    \text{MLP}_2(a,b,c) = U(a \cdot \frac{1}{c} + b, 1), \quad a,b\in \mathbb{R}^{d_T}, c>0.
\end{equation}

Thus for all $G \in \mathcal{G}$: 
\begin{equation}
\begin{aligned}
    &\text{MLP}_2 \left( \sum_{v \in V} \text{MLP}_1\left(\sum_{S \in V^\mathcal{T}} \mathcal{X}^{T+2}(S,v)\right) \right) \\
    &= \text{MLP}_2 \left( \left[ |V| \cdot \sum_{S \in V^\mathcal{T}} \tilde{X}^T(S),\ \sum_{v \in V} \tilde{X}^T(v),\ |V| \right] \right) \\
    &= U\left(\sum_{v \in V^\mathcal{T}_+}\tilde{X}^T(v) \right) \\
    &= f(G) .
\end{aligned}
\end{equation}

and so $f \in \text{\ourmethod}(\mathcal{T}, \pi_\text{S})$. This proves:  
\begin{equation}
\label{eq: TSGNN better then transformed mpnn }
\text{MPNN}_+(T) \subseteq \text{\ourmethod}(\mathcal{T}, \pi_\text{S}).
\end{equation}

\end{proof}

\TSGNNcanbemoreexpressivethenMPNNProp*
\begin{proof}
    First, using the notation $\tilde{v}$ to mark the single element set $\{ v \} \in V^\mathcal{T}$, We notice that the $\text{MPNN}_+(\mathcal{T})$ layer update described in equation \ref{eq: transformed mpnn layers}, becomes:
\begin{equation}
\label{eq: layer update MPNN+ node marking policy}
\begin{aligned}
\text{For } v \in V: \quad & \mathcal{X}^{t+1}(v) = f^t_V\bigg( \mathcal{X}^{t}(v), \mathcal{X}^t(\tilde{v}), \text{agg}^t\ldblbrace(\mathcal{X}^t(v'), e_{v, v'}) \mid v' \sim_G v\rdblbrace,  \bigg), \\
\text{For } \tilde{v} \in V^\mathcal{T}: \quad & \mathcal{X}^{t+1}(\tilde{v}) = f^t_{V^\mathcal{T}}\bigg(\mathcal{X}^{t}(\tilde{v}), \mathcal{X}^t(v) , \text{agg}^t\ldblbrace(\mathcal{X}^t(\tilde{v'}), e_{\tilde{v}, \tilde{v'}}) \mid v \sim_{G} v' \rdblbrace \bigg).
\end{aligned}
\end{equation}
Now, for a given finite family of graphs $\mathcal{G}$ and a function $f \in \text{MPNN}_+(\mathcal{T})$, there exists a stack of $T$ $\text{MPNN}_+(\mathcal{T})$ layers such that:
\begin{equation}
\label{eq: aggregation sum transformed mpnn}
f(G) = U\left(\sum_{v \in V^\mathcal{T}_+}\mathcal{X}^T(v) \right) \quad \forall G \in \mathcal{G}.
\end{equation}
Here, $\mathcal{X}^T: V^\mathcal{T}_+ \rightarrow \mathbb{R}^{d}$ denotes the final node feature map, and $U$ is an MPL. We now prove by induction on $t$ that there exists a stack of $t$ standard MPNN layers, with corresponding node feature map $X^t: V \rightarrow \mathbb{R}^{2d_t}$ such that :
\begin{equation}
\label{eq: induction step  MPNN+ node policy}
    X^t(v) = [\mathcal{X}^t(v), \mathcal{X}^t(\tilde{v})].
\end{equation}
Here, $[\cdot, \cdot]$ stands for concatenation. We assume for simplicity that the input graph $G$ does not have node features, though the proof can be easily adapted for the more general case. We notice that for the base case $t=0$, equation \ref{eq: node features of sum graph} in definition \ref{def: Transformed sum graph} implies:
\begin{equation}
    \label{eq: node init sum graph node policy}
    \mathcal{X}^0(v) = \begin{cases}
    1 & v \in V, \\
    0 & v \in V^\mathcal{T}.
    \end{cases}
\end{equation}
Thus, we define:
\begin{equation}
    X^0(v) = (1,0).
\end{equation}
This satisfies  \Cref{eq: induction step MPNN+ node policy}, establishing the base case of the induction.
Assume now that \Cref{eq: induction step  MPNN+ node policy} holds for some $t \in [T]$. Let $\text{agg}^t, f_V^t, f_{V^\mathcal{T}}^t$ be the components of layer $t$, as in equation \ref{eq: layer update MPNN+ node marking policy}. We define:


\begin{equation}
    \tilde{\text{agg}}^t  = [\text{agg}^t |_{1:d_t}, \text{agg}^t|_{d_t+1:2d_t}].
\end{equation}
Here the operation $\text{agg}\vline_{a:b}$ initially projects all vectors in the input multi-set onto coordinates $a$ through $b$, and subsequently passes them to the function $\text{agg}$.

Additionally, let $d^*$ denote the dimension of the output of the function $\text{agg}^t$. We define:
\begin{equation}
    \tilde{f}^t(a,b) = \left[f_V^t\left(a|_{1:d_t}, a|_{d_t+1:2d_t}, b|_{1:d^*}\right), f_V^t\left(a|_{d_t+1:2d_t}, a|_{1:d_t}, b|_{d^*+1:2d^*}\right) \right].
\end{equation}

Finally, we update our node feature map $X^t$ using a standard MPNN update according to:
\begin{equation}
\label{eq: update of l+1 induction MPNN+ node policy}
    X^{t+1}(v) = \tilde{f}^l\left(X^t(v), \ldblbrace(X^t(v'), e_{v, v'}) \mid v' \sim_G v\rdblbrace \right).
\end{equation}
equations \ref{eq: layer update MPNN+ node marking policy}, \ref{eq: induction step  MPNN+ node policy} and \ref{eq: update of l+1 induction MPNN+ node policy} now guarantee that:
\begin{equation}
    X^{t+1}(v) = [\mathcal{X}^t(v), \mathcal{X}^{t+1}(\tilde{v})].
\end{equation}
This concludes the inductive proof. We now define:
\begin{equation}
    \text{MLP}(x) = U(x|_{1:d_T}) + U(x|_{d_T+1:2d_T}).
\end{equation}
This gives us:
\begin{equation}
    U\bigg(\sum_{v \in V^\mathcal{T}_+}\mathcal{X}^T(v) \bigg) = \text{MLP}\bigg(\sum_{v \in V}X^T(v) \bigg) = f(G).
\end{equation}
We have thus proven that $f \in \text{MPNN}$ and so:
\begin{equation}
    \text{MPNN}_+(\mathcal{N}) \subseteq \text{MPNN}.
\end{equation}

Combining this result with Proposition \ref{prop: TSGNN is at least as expressive as transformation MPNN}, we obtain:
\begin{equation}
    \text{MPNN} = \text{MPNN}_+(\mathcal{T}).
\end{equation}
Finally, since Proposition \ref{prop: TSGNN can implement GNN-SSWL+} tells us that $\text{\ourmethod}(\mathcal{T}, \pi_\text{S})$ has the same implementation power as the maximally expressive node policy subgraph architecture MSGNN, which is proven to be strictly more expressive than the standard MPNN, we have:
\begin{equation}
    \text{MPNN}_+(\mathcal{T}) \subset \text{\ourmethod}(\mathcal{T}, \pi_\text{S}).
\end{equation}
\end{proof}


\TSGNNStrongerThenNodeBased*
\begin{proof}
    First, notice that the super-nodes produced by $\mathcal{T}$ are either of size 1, in which case they correspond to nodes, or they are of size two, in which case they correspond to edges. Since an \ourmethod model processes feature maps $\mathcal{X}^t(S,v)$ where in the initial layer the sset size of $S$ is encoded in $\mathcal{X}^t(S,v)$, we can easily use the \ourmethod update in \Cref{def: General Layer Update} to ignore all values of $\mathcal{X}^t(S,v)$ were $|S|=2$ (This can be done by using $f^t, \text{agg}^t_1, \dots \text{agg}^t_1 $ in \Cref{def: General Layer Update} to zero out these values at each update).  This means \ourmethod using $\mathcal{T}$ is able to simulate an \ourmethod update with the identity coarsening function, which was shown in \Cref{prop: TSGNN can implement GNN-SSWL+} to be as expressive as GNN-SSWL+ (\Cref{def: GNN-SSWL+}) which is a maximally expressive node-based subgraph GNN, thus proving part (1) of the proposition. To prove part (2), notice that using the same reasoning as before, an \ourmethod model using $\mathcal{T}$ as a coarsening function cal implement an \ourmethod model using the edge coarsening function:
   \begin{equation}
    \mathcal{T}'(G) = E \quad G = (V,E).
\end{equation}
An \ourmethod model with the identity coarsening function can be interpreted as a GNN-SSWL+ model. Similarly, an \ourmethod model using the edge coarsening function $\mathcal{T}'$ generalizes the GNN-SSWL+ framework by extending it from node-based subgraph GNNs to edge-based subgraph GNNs. In fact, the same proof in \cite{zhang2023complete, frasca2022understanding} showing that GNN-SSWL+ is at least as expressive as a  DSS subgraph GNN using the node deletion policy (see \cite{bevilacqua2021equivariant} for a definition of the DSS subgraph GNN), can be used to show that \ourmethod using the edge coarsening function $\mathcal{T}'$ is at least as expressive as a DSS subgraph GNN with an edge deletion policy. The latter model was shown in $\cite{bevilacqua2021equivariant}$ to be able to separate a pair of 3-WL indistinguishable graphs. In contrast,  node-based subgraph GNNs were shown in \cite{frasca2022understanding} to not be able to separate any pair of 3-WL indistinguishable graphs. Thus, there exists a pair of graphs which \ourmethod using $\mathcal{T}$ can separate while node-based subgraph GNNs cant, proving part (2) of the proposition.

\end{proof}

\subsection{Proofs of \Cref{app:Linear Invariant (Equivariant) Layer -- Extended Section}}
\Orbits*
\begin{proof}
We will prove this lemma for $\gamma$. The proof for $\Gamma$ follows similar reasoning; we also refer the reader to \cite{maron2018invariant} for a general proof.

We will prove this lemma through the following three steps.

\textbf{(1).} Given indices $(S, i) \in \mathcal{P}([n]) \times [n]$, there exists $\gamma \in (\mathcal{P}([n]) \times [n])_\sim$ such that $(S, i) \in \gamma$.

\textbf{(2).} Given indices $(S, i) \in \gamma$, for any $\sigma \in S_n$, it holds that $(\sigma^{-1}(S), \sigma^{-1}(i)) \in \gamma$.

\textbf{(3).} Given $(S, i) \in \gamma$ and $(S', i') \in \gamma$ (the same $\gamma$), it holds that there exists a $\sigma \in S_n$ such that $\sigma \cdot (S, i) = (S', i')$.

We prove in what follows.

\textbf{(1).} Given indices $(S, i) \in \mathcal{P}([n]) \times [n]$, w.l.o.g. we assume that $|S| = k$, thus if $i \in S$ ($i \notin S$) it holds that $(S, i) \in \gamma^{k^-}$ $\big( (S, i) \in \gamma^{k^+} \big)$, recall \Cref{eq:the_partition_inv}.

\textbf{(2).} Given indices $(S, i) \in \gamma$, note that any permutation $\sigma \in S_n$ does not change the cardinality of $S$ nor the inclusion (or exclusion) of $i$ in $S$. Recalling \Cref{eq:the_partition_inv}, we complete this step.

\textbf{(3).} Given that $(S, i) \in \gamma$ and $(S', i') \in \gamma$, and recalling \Cref{eq:the_partition_inv}, we note that $|S| = |S'|$ and that either both $i \in S$ and $i' \in S'$, or both $i \notin S$ and $i' \notin S'$. 

\textbf{(3.1).} In \textbf{(3.1)} we focus on the case where $i \notin S$ and $i' \notin S'$. Let $S = \{i_1, \ldots, i_k\}$ and $S' = \{i_1', \ldots, i_k'\}$. Then, we have $(\{i_1, \ldots, i_k\}, j)$ and $(\{i_1', \ldots, i_k'\}, j')$. Define $\sigma \in S_n$ such that $\sigma(i_l) = i_l'$ for $l \in [k]$, and $\sigma(j) = j'$. Since $(\{i_1, \ldots, i_k\}, j)$ consists of $k+1$ distinct indices and $(\{i_1', \ldots, i_k'\}, j')$ also consists of $k+1$ distinct indices, this is a valid $\sigma \in S_n$.

\textbf{(3.2).} Here, we focus on the case where $i \in S$ and $i' \in S'$. This proof is similar to \textbf{(3.1)}, but without considering the indices $j$ and $j'$, as they are included in $S$ and $S'$, respectively.

\end{proof}

\BasisInv*

\begin{proof}
    We prove this proposition for the invariant case. The equivariant case is proved similarly -- we also refer the reader for \cite{maron2018invariant} for a general proof.
    We will prove this in three steps, 

    \textbf{(1).} For any $\gamma \in (\mathcal{P}([n]) \times [n])_\sim$ it holds that $\mathbf{B}^{\gamma}_{S, i}$ solves \Cref{eq: inv}.
    
    \textbf{(2).} Given a solution $\mathbf{L}$ to \Cref{eq: inv}, it is a linear combination of the basis elements.

    \textbf{(3).} We show that the basis vectors are orthogonal and thus linearly independent.
    
    We prove in what follows.

    \textbf{(1).} Given $\gamma \in (\mathcal{P}([n]) \times [n])_\sim$, we need to show that $\mathbf{B}^{\gamma}_{S, i} = \mathbf{B}^{\gamma}_{\sigma^{-1}(S), \sigma^{-1}(i)}$. Since any $\gamma \in (\mathcal{P}([n]) \times [n])_\sim$ is an orbit in the index space (recall~\Cref{lemma: orbits}), and $\mathbf{B}^{\gamma}_{S, i}$ are indicator vectors of the orbits this always holds.

  \textbf{(2).} Given a solution $\mathbf{L}$ to \Cref{eq: inv}, it must hold that $\mathbf{L}_{S, i} = \mathbf{L}_{\sigma^{-1}(S), \sigma^{-1}(i)}$. Since the set $\{\gamma^{k^*} : k = 1, \ldots, n; * \in \{+, -\} \}$ corresponds to the orbits in the index space with respect to $S_n$, $\mathbf{L}$ should have the same values over the index space of these orbits. Let's define these values as $\alpha^\gamma$ for each $\gamma \in \{\gamma^{k^*} : k = 1, \ldots, n; * \in \{+, -\} \}$. Thus, we obtain that $\mathbf{L}' = \sum_{\gamma \in (\mathcal{P}([n]) \times [n])_\sim} \alpha^\gamma \cdot \mathbf{B}^\gamma$, since $\mathbf{B}^\gamma$ are simply indicator vectors of the orbits. This completes this step.

\textbf{(3).} Once again, since the basis elements are indicator vectors of disjoint orbits we obtain their orthogonality, and thus linearly independent.
\end{proof}

\newpage
\section*{NeurIPS Paper Checklist}

The checklist is designed to encourage best practices for responsible machine learning research, addressing issues of reproducibility, transparency, research ethics, and societal impact. Do not remove the checklist: {\bf The papers not including the checklist will be desk rejected.} The checklist should follow the references and precede the (optional) supplemental material.  The checklist does NOT count towards the page
limit. 

Please read the checklist guidelines carefully for information on how to answer these questions. For each question in the checklist:
\begin{itemize}
    \item You should answer \answerYes{}, \answerNo{}, or \answerNA{}.
    \item \answerNA{} means either that the question is Not Applicable for that particular paper or the relevant information is Not Available.
    \item Please provide a short (1–2 sentence) justification right after your answer (even for NA). 
\end{itemize}

{\bf The checklist answers are an integral part of your paper submission.} They are visible to the reviewers, area chairs, senior area chairs, and ethics reviewers. You will be asked to also include it (after eventual revisions) with the final version of your paper, and its final version will be published with the paper.

The reviewers of your paper will be asked to use the checklist as one of the factors in their evaluation. While "\answerYes{}" is generally preferable to "\answerNo{}", it is perfectly acceptable to answer "\answerNo{}" provided a proper justification is given (e.g., "error bars are not reported because it would be too computationally expensive" or "we were unable to find the license for the dataset we used"). In general, answering "\answerNo{}" or "\answerNA{}" is not grounds for rejection. While the questions are phrased in a binary way, we acknowledge that the true answer is often more nuanced, so please just use your best judgment and write a justification to elaborate. All supporting evidence can appear either in the main paper or the supplemental material, provided in appendix. If you answer \answerYes{} to a question, in the justification please point to the section(s) where related material for the question can be found.

IMPORTANT, please:
\begin{itemize}
    \item {\bf Delete this instruction block, but keep the section heading ``NeurIPS paper checklist"},
    \item  {\bf Keep the checklist subsection headings, questions/answers and guidelines below.}
    \item {\bf Do not modify the questions and only use the provided macros for your answers}.
\end{itemize}


\begin{enumerate}

\item {\bf Claims}
    \item[] Question: Do the main claims made in the abstract and introduction accurately reflect the paper's contributions and scope?
    \item[] Answer: \answerYes{} 
    \item[] Justification: The abstract spells out all the main contributions in the present paper, both theoretical and empirical ones. These are extensively discussed and recapitulated in the Introduction~\Cref{sec:intro} (see paragraphs ``Our approach'' and ``Contributions''). The scope of the paper is well defined in the first periods of the abstract and comprehensively articulated in the first two paragraphs of the Introduction~\Cref{sec:intro}. 
    \item[] Guidelines:
    \begin{itemize}
        \item The answer NA means that the abstract and introduction do not include the claims made in the paper.
        \item The abstract and/or introduction should clearly state the claims made, including the contributions made in the paper and important assumptions and limitations. A No or NA answer to this question will not be perceived well by the reviewers. 
        \item The claims made should match theoretical and experimental results, and reflect how much the results can be expected to generalize to other settings. 
        \item It is fine to include aspirational goals as motivation as long as it is clear that these goals are not attained by the paper. 
    \end{itemize}

\item {\bf Limitations}
    \item[] Question: Does the paper discuss the limitations of the work performed by the authors?
    \item[] Answer: \answerYes{} 
    \item[] Justification: Please refer to paragraph ``Limitations'' in~\Cref{sec:outro}.
    \item[] Guidelines:
    \begin{itemize}
        \item The answer NA means that the paper has no limitation while the answer No means that the paper has limitations, but those are not discussed in the paper. 
        \item The authors are encouraged to create a separate "Limitations" section in their paper.
        \item The paper should point out any strong assumptions and how robust the results are to violations of these assumptions (e.g., independence assumptions, noiseless settings, model well-specification, asymptotic approximations only holding locally). The authors should reflect on how these assumptions might be violated in practice and what the implications would be.
        \item The authors should reflect on the scope of the claims made, e.g., if the approach was only tested on a few datasets or with a few runs. In general, empirical results often depend on implicit assumptions, which should be articulated.
        \item The authors should reflect on the factors that influence the performance of the approach. For example, a facial recognition algorithm may perform poorly when image resolution is low or images are taken in low lighting. Or a speech-to-text system might not be used reliably to provide closed captions for online lectures because it fails to handle technical jargon.
        \item The authors should discuss the computational efficiency of the proposed algorithms and how they scale with dataset size.
        \item If applicable, the authors should discuss possible limitations of their approach to address problems of privacy and fairness.
        \item While the authors might fear that complete honesty about limitations might be used by reviewers as grounds for rejection, a worse outcome might be that reviewers discover limitations that aren't acknowledged in the paper. The authors should use their best judgment and recognize that individual actions in favor of transparency play an important role in developing norms that preserve the integrity of the community. Reviewers will be specifically instructed to not penalize honesty concerning limitations.
    \end{itemize}

\item {\bf Theory Assumptions and Proofs}
    \item[] Question: For each theoretical result, does the paper provide the full set of assumptions and a complete (and correct) proof?
    \item[] Answer: \answerYes{} 
    \item[] Justification: Please refer to~\Cref{app: Theoretical Validation of Implementation Details,app:Node Initialization -- Theoretical Analysis,app: Expressive power of EPGN,app:Linear Invariant (Equivariant) Layer -- Extended Section,app: Implementation of Linear Equivariant and Invariant layers -- Extended Section}, which include precise and contextualized statements of all theoretical results and derivations, and to~\Cref{app: proofs} for proofs thereof.
    \item[] Guidelines:
    \begin{itemize}
        \item The answer NA means that the paper does not include theoretical results. 
        \item All the theorems, formulas, and proofs in the paper should be numbered and cross-referenced.
        \item All assumptions should be clearly stated or referenced in the statement of any theorems.
        \item The proofs can either appear in the main paper or the supplemental material, but if they appear in the supplemental material, the authors are encouraged to provide a short proof sketch to provide intuition. 
        \item Inversely, any informal proof provided in the core of the paper should be complemented by formal proofs provided in appendix or supplemental material.
        \item Theorems and Lemmas that the proof relies upon should be properly referenced. 
    \end{itemize}

    \item {\bf Experimental Result Reproducibility}
    \item[] Question: Does the paper fully disclose all the information needed to reproduce the main experimental results of the paper to the extent that it affects the main claims and/or conclusions of the paper (regardless of whether the code and data are provided or not)?
    \item[] Answer: \answerYes{} 
    \item[] Justification: Please refer to~\Cref{app: Dataset Description} for a description of the employed datasets and splitting procedure, \Cref{app: Experimental Details,app: HyperParameters} for a list of experimental details and hyperparameter settings, and \Cref{app:additional_results} for a series of complementary results.
    \item[] Guidelines:
    \begin{itemize}
        \item The answer NA means that the paper does not include experiments.
        \item If the paper includes experiments, a No answer to this question will not be perceived well by the reviewers: Making the paper reproducible is important, regardless of whether the code and data are provided or not.
        \item If the contribution is a dataset and/or model, the authors should describe the steps taken to make their results reproducible or verifiable. 
        \item Depending on the contribution, reproducibility can be accomplished in various ways. For example, if the contribution is a novel architecture, describing the architecture fully might suffice, or if the contribution is a specific model and empirical evaluation, it may be necessary to either make it possible for others to replicate the model with the same dataset, or provide access to the model. In general. releasing code and data is often one good way to accomplish this, but reproducibility can also be provided via detailed instructions for how to replicate the results, access to a hosted model (e.g., in the case of a large language model), releasing of a model checkpoint, or other means that are appropriate to the research performed.
        \item While NeurIPS does not require releasing code, the conference does require all submissions to provide some reasonable avenue for reproducibility, which may depend on the nature of the contribution. For example
        \begin{enumerate}
            \item If the contribution is primarily a new algorithm, the paper should make it clear how to reproduce that algorithm.
            \item If the contribution is primarily a new model architecture, the paper should describe the architecture clearly and fully.
            \item If the contribution is a new model (e.g., a large language model), then there should either be a way to access this model for reproducing the results or a way to reproduce the model (e.g., with an open-source dataset or instructions for how to construct the dataset).
            \item We recognize that reproducibility may be tricky in some cases, in which case authors are welcome to describe the particular way they provide for reproducibility. In the case of closed-source models, it may be that access to the model is limited in some way (e.g., to registered users), but it should be possible for other researchers to have some path to reproducing or verifying the results.
        \end{enumerate}
    \end{itemize}

\item {\bf Open access to data and code}
    \item[] Question: Does the paper provide open access to the data and code, with sufficient instructions to faithfully reproduce the main experimental results, as described in supplemental material?
    \item[] Answer: \answerYes{} 
    \item[] Justification: The code to reproduce our results can be found in the following GitHub repository: \url{https://github.com/BarSGuy/Efficient-Subgraph-GNNs}. 
    \item[] Guidelines:
    \begin{itemize}
        \item The answer NA means that paper does not include experiments requiring code.
        \item Please see the NeurIPS code and data submission guidelines (\url{https://nips.cc/public/guides/CodeSubmissionPolicy}) for more details.
        \item While we encourage the release of code and data, we understand that this might not be possible, so “No” is an acceptable answer. Papers cannot be rejected simply for not including code, unless this is central to the contribution (e.g., for a new open-source benchmark).
        \item The instructions should contain the exact command and environment needed to run to reproduce the results. See the NeurIPS code and data submission guidelines (\url{https://nips.cc/public/guides/CodeSubmissionPolicy}) for more details.
        \item The authors should provide instructions on data access and preparation, including how to access the raw data, preprocessed data, intermediate data, and generated data, etc.
        \item The authors should provide scripts to reproduce all experimental results for the new proposed method and baselines. If only a subset of experiments are reproducible, they should state which ones are omitted from the script and why.
        \item At submission time, to preserve anonymity, the authors should release anonymized versions (if applicable).
        \item Providing as much information as possible in supplemental material (appended to the paper) is recommended, but including URLs to data and code is permitted.
    \end{itemize}

\item {\bf Experimental Setting/Details}
    \item[] Question: Does the paper specify all the training and test details (e.g., data splits, hyperparameters, how they were chosen, type of optimizer, etc.) necessary to understand the results?
    \item[] Answer: \answerYes{} 
    \item[] Justification: Please refer to~\Cref{app: Dataset Description} for a description of the employed datasets and splitting procedure, \Cref{app: Experimental Details,app: HyperParameters} for a list of experimental details and hyperparameter settings, including the utilized training procedures. The results for baselines approaches are reported according to what stated in~\Cref{sec:Experiments} and~\Cref{app: Experimental Details}. 
    \item[] Guidelines:
    \begin{itemize}
        \item The answer NA means that the paper does not include experiments.
        \item The experimental setting should be presented in the core of the paper to a level of detail that is necessary to appreciate the results and make sense of them.
        \item The full details can be provided either with the code, in appendix, or as supplemental material.
    \end{itemize}

\item {\bf Experiment Statistical Significance}
    \item[] Question: Does the paper report error bars suitably and correctly defined or other appropriate information about the statistical significance of the experiments?
    \item[] Answer: \answerYes{} 
    \item[] Justification: Our results in~\Cref{sec:Experiments} are reported in terms of mean and standard deviation calculated over different model initializations (i.e., by setting different random seeds prior to code execution). \Cref{tab: zinc all results} reports error bars for the results illustrated in ~\Cref{fig: main figure}.
    \item[] Guidelines:
    \begin{itemize}
        \item The answer NA means that the paper does not include experiments.
        \item The authors should answer "Yes" if the results are accompanied by error bars, confidence intervals, or statistical significance tests, at least for the experiments that support the main claims of the paper.
        \item The factors of variability that the error bars are capturing should be clearly stated (for example, train/test split, initialization, random drawing of some parameter, or overall run with given experimental conditions).
        \item The method for calculating the error bars should be explained (closed form formula, call to a library function, bootstrap, etc.)
        \item The assumptions made should be given (e.g., Normally distributed errors).
        \item It should be clear whether the error bar is the standard deviation or the standard error of the mean.
        \item It is OK to report 1-sigma error bars, but one should state it. The authors should preferably report a 2-sigma error bar than state that they have a 96\% CI, if the hypothesis of Normality of errors is not verified.
        \item For asymmetric distributions, the authors should be careful not to show in tables or figures symmetric error bars that would yield results that are out of range (e.g. negative error rates).
        \item If error bars are reported in tables or plots, The authors should explain in the text how they were calculated and reference the corresponding figures or tables in the text.
    \end{itemize}

\item {\bf Experiments Compute Resources}
    \item[] Question: For each experiment, does the paper provide sufficient information on the computer resources (type of compute workers, memory, time of execution) needed to reproduce the experiments?
    \item[] Answer: \answerYes{} 
    \item[] Justification: The hardware employed to obtain all experimental results, as well as a runtime comparison, are described in~\Cref{app: Experimental Details} (see ``Implementation Details''). 
    \item[] Guidelines:
    \begin{itemize}
        \item The answer NA means that the paper does not include experiments.
        \item The paper should indicate the type of compute workers CPU or GPU, internal cluster, or cloud provider, including relevant memory and storage.
        \item The paper should provide the amount of compute required for each of the individual experimental runs as well as estimate the total compute. 
        \item The paper should disclose whether the full research project required more compute than the experiments reported in the paper (e.g., preliminary or failed experiments that didn't make it into the paper). 
    \end{itemize}
    
\item {\bf Code Of Ethics}
    \item[] Question: Does the research conducted in the paper conform, in every respect, with the NeurIPS Code of Ethics \url{https://neurips.cc/public/EthicsGuidelines}?
    \item[] Answer: \answerYes{} 
    \item[] Justification: We have made sure to comply to the Code of Ethics and to preserve our anonymity. 
    \item[] Guidelines:
    \begin{itemize}
        \item The answer NA means that the authors have not reviewed the NeurIPS Code of Ethics.
        \item If the authors answer No, they should explain the special circumstances that require a deviation from the Code of Ethics.
        \item The authors should make sure to preserve anonymity (e.g., if there is a special consideration due to laws or regulations in their jurisdiction).
    \end{itemize}

\item {\bf Broader Impacts}
    \item[] Question: Does the paper discuss both potential positive societal impacts and negative societal impacts of the work performed?
    \item[] Answer: \answerNA{} 
    \item[] Justification: The models we developed are not generative, hence not posing risks of malicious use as for what concerns fabricating misleading or otherwise fake information, online profiles and media. Additionally, although our approach improves the efficiency of certain Graph Neural Networks, the models we developed are not scalable enough to apply and impact (online) social networks: represented as graphs, their scale is way beyond that considered in our experiments. 
    \item[] Guidelines:
    \begin{itemize}
        \item The answer NA means that there is no societal impact of the work performed.
        \item If the authors answer NA or No, they should explain why their work has no societal impact or why the paper does not address societal impact.
        \item Examples of negative societal impacts include potential malicious or unintended uses (e.g., disinformation, generating fake profiles, surveillance), fairness considerations (e.g., deployment of technologies that could make decisions that unfairly impact specific groups), privacy considerations, and security considerations.
        \item The conference expects that many papers will be foundational research and not tied to particular applications, let alone deployments. However, if there is a direct path to any negative applications, the authors should point it out. For example, it is legitimate to point out that an improvement in the quality of generative models could be used to generate deepfakes for disinformation. On the other hand, it is not needed to point out that a generic algorithm for optimizing neural networks could enable people to train models that generate Deepfakes faster.
        \item The authors should consider possible harms that could arise when the technology is being used as intended and functioning correctly, harms that could arise when the technology is being used as intended but gives incorrect results, and harms following from (intentional or unintentional) misuse of the technology.
        \item If there are negative societal impacts, the authors could also discuss possible mitigation strategies (e.g., gated release of models, providing defenses in addition to attacks, mechanisms for monitoring misuse, mechanisms to monitor how a system learns from feedback over time, improving the efficiency and accessibility of ML).
    \end{itemize}
    
\item {\bf Safeguards}
    \item[] Question: Does the paper describe safeguards that have been put in place for responsible release of data or models that have a high risk for misuse (e.g., pretrained language models, image generators, or scraped datasets)?
    \item[] Answer: \answerNA{} 
    \item[] Justification: Our model(s) do not have a high risk of misuse. 
    \item[] Guidelines:
    \begin{itemize}
        \item The answer NA means that the paper poses no such risks.
        \item Released models that have a high risk for misuse or dual-use should be released with necessary safeguards to allow for controlled use of the model, for example by requiring that users adhere to usage guidelines or restrictions to access the model or implementing safety filters. 
        \item Datasets that have been scraped from the Internet could pose safety risks. The authors should describe how they avoided releasing unsafe images.
        \item We recognize that providing effective safeguards is challenging, and many papers do not require this, but we encourage authors to take this into account and make a best faith effort.
    \end{itemize}

\item {\bf Licenses for existing assets}
    \item[] Question: Are the creators or original owners of assets (e.g., code, data, models), used in the paper, properly credited and are the license and terms of use explicitly mentioned and properly respected?
    \item[] Answer: \answerYes{} 
    \item[] Justification: As explicitly mentioned in the main corpus of the paper, the inset figure in~\Cref{sec:related} is taken with permission by the original authors. Creators of datasets employed in this study, as well as the benchmark frameworks used are properly referenced and cited. For these last we report license information in~\Cref{app: Dataset Description}, also reported for code assets~\Cref{app: Experimental Details}. 
    \item[] Guidelines:
    \begin{itemize}
        \item The answer NA means that the paper does not use existing assets.
        \item The authors should cite the original paper that produced the code package or dataset.
        \item The authors should state which version of the asset is used and, if possible, include a URL.
        \item The name of the license (e.g., CC-BY 4.0) should be included for each asset.
        \item For scraped data from a particular source (e.g., website), the copyright and terms of service of that source should be provided.
        \item If assets are released, the license, copyright information, and terms of use in the package should be provided. For popular datasets, \url{paperswithcode.com/datasets} has curated licenses for some datasets. Their licensing guide can help determine the license of a dataset.
        \item For existing datasets that are re-packaged, both the original license and the license of the derived asset (if it has changed) should be provided.
        \item If this information is not available online, the authors are encouraged to reach out to the asset's creators.
    \end{itemize}

\item {\bf New Assets}
    \item[] Question: Are new assets introduced in the paper well documented and is the documentation provided alongside the assets?
    \item[] Answer: \answerNA{} 
    \item[] Justification: This paper does not release new assets. 
    \item[] Guidelines:
    \begin{itemize}
        \item The answer NA means that the paper does not release new assets.
        \item Researchers should communicate the details of the dataset/code/model as part of their submissions via structured templates. This includes details about training, license, limitations, etc. 
        \item The paper should discuss whether and how consent was obtained from people whose asset is used.
        \item At submission time, remember to anonymize your assets (if applicable). You can either create an anonymized URL or include an anonymized zip file.
    \end{itemize}

\item {\bf Crowdsourcing and Research with Human Subjects}
    \item[] Question: For crowdsourcing experiments and research with human subjects, does the paper include the full text of instructions given to participants and screenshots, if applicable, as well as details about compensation (if any)? 
    \item[] Answer: \answerNA{} 
    \item[] Justification: This paper does not involve crowdsourcing nor research with human subjects. 
    \item[] Guidelines:
    \begin{itemize}
        \item The answer NA means that the paper does not involve crowdsourcing nor research with human subjects.
        \item Including this information in the supplemental material is fine, but if the main contribution of the paper involves human subjects, then as much detail as possible should be included in the main paper. 
        \item According to the NeurIPS Code of Ethics, workers involved in data collection, curation, or other labor should be paid at least the minimum wage in the country of the data collector. 
    \end{itemize}

\item {\bf Institutional Review Board (IRB) Approvals or Equivalent for Research with Human Subjects}
    \item[] Question: Does the paper describe potential risks incurred by study participants, whether such risks were disclosed to the subjects, and whether Institutional Review Board (IRB) approvals (or an equivalent approval/review based on the requirements of your country or institution) were obtained?
    \item[] Answer: \answerNA{} 
    \item[] Justification: Not applicable.
    \item[] Guidelines:
    \begin{itemize}
        \item The answer NA means that the paper does not involve crowdsourcing nor research with human subjects.
        \item Depending on the country in which research is conducted, IRB approval (or equivalent) may be required for any human subjects research. If you obtained IRB approval, you should clearly state this in the paper. 
        \item We recognize that the procedures for this may vary significantly between institutions and locations, and we expect authors to adhere to the NeurIPS Code of Ethics and the guidelines for their institution. 
        \item For initial submissions, do not include any information that would break anonymity (if applicable), such as the institution conducting the review.
    \end{itemize}

\end{enumerate}

\end{document}